\newtheorem{theorem}{Theorem}
\newtheorem{proposition}{Proposition} 
\newtheorem{observation}{Observation} 
\newtheorem{corollary}{Corollary}
\newtheorem{lemma}{Lemma}
\newcommand\ci{T}
\title{Saving Stochastic Bandits from  Poisoning Attacks  via Limited Data  Verification}
\author{ {\hspace{1mm} Anshuka Rangi}
\\
	University of California San Diego, USA\\
	\texttt{arangi@ucsd.edu} \\
	\And
{\hspace{1mm} Long Tran-Thanh} \\
	 University of Warwick, UK\\
	\texttt{long.tran-thanh@warwick.ac.uk} \\
\And
{\hspace{1mm}  Haifeng Xu} \\
	  University of Virginia, USA\\
	\texttt{hx4ad@virginia.edu} \\
	\And
{\hspace{1mm}   Massimo  Franceschetti} \\
	  University of California San Diego, USA\\
	\texttt{massimo@ece.ucsd.edu} \\
}
\begin{document}
\maketitle

\begin{abstract}
This paper studies bandit algorithms under data poisoning attacks in a bounded reward setting. We consider a strong attacker model in which the attacker can observe both the selected actions and their corresponding rewards, and can contaminate the rewards with additive noise.
We show that \emph{any} bandit algorithm with regret $O(\log T)$ can be forced to suffer a   regret $\Omega(T)$ with an expected amount of contamination  $O(\log T)$. {This amount of contamination is  also necessary, as we prove that there exists an $O(\log T)$ regret bandit algorithm, specifically the classical Upper Confidence Bound (UCB), that requires $\Omega(\log T)$   amount of contamination  to suffer regret $\Omega(T)$.} To combat such poisoning attacks, our second main contribution is to propose verification based mechanisms, which use limited \emph{verification} to access a limited number of uncontaminated rewards.
In particular, for the case of unlimited verifications, we show that with $O(\log T)$ expected number of verifications, a simple modified version of the Explore-then-Commit type bandit algorithm can restore the order optimal $O(\log T)$ regret \emph{irrespective of the amount of contamination} used by the attacker. 
We also provide a UCB-like verification scheme, called Secure-UCB, that also enjoys full recovery from any attacks, also with $O(\log T)$ expected number of verifications.
To derive a matching lower bound on the number of verifications, we also prove that for any order-optimal bandit algorithm, this number of verifications $\Omega(\log T)$  is necessary to recover the order-optimal regret. On the other hand, when the number of verifications is bounded above by a budget $B$, we propose a novel algorithm, Secure-BARBAR, which provably achieves $\tilde{O}(\min\{C,T/\sqrt{B} \})$ regret with high probability against weak attackers (i.e., attackers who have to place the contamination \emph{before} seeing the actual pulls of the bandit algorithm), where $C$ is the total amount of contamination by the attacker, which breaks the known $\Omega(C)$ lower bound of the non-verified setting if $C$ is large.  
\end{abstract}


\section{Introduction}

Multi Armed Bandits (MAB) algorithms are often used   in web services \citep{agarwal2016making, li2010contextual},   sensor networks  \citep{tran2012long},   medical trials \citep{badanidiyuru2018bandits,rangi2019unifying}, and   crowdsourcing systems \citep{rangi2018multi}. The distributed nature of these applications makes these algorithms prone to third party attacks. For example, in web services  decision making critically depends on   reward collection, and this is prone to attacks that can impact observations and monitoring, delay or temper rewards, produce link failures, and generally modify or delete information through hijacking of communication links   \citep{agarwal2016making} \citep{cardenas2008secure,rangi2021learning}. { Making} these systems secure requires an understanding of the regime where the systems can be attacked, as well as designing ways to mitigate these attacks. In this paper, we study both of these aspects   in a stochastic MAB setting. 


We consider a data poisoning attack, also referred as  man in the middle (MITM) attack. In this attack, there are three agents: the environment, the learner (MAB algorithm), and the attacker. At each discrete time-step $t$, the learner selects an action $i_t$ among   $K$ choices,
the environment then generates a reward $r_t(i_t)\in[0,1]$ corresponding to the selected action, and attempts to communicate it to the learner. However, an adversary intercepts $r_t(i_t)$ and can contaminate it by adding noise {$\epsilon_t(i_t)\in [-r_t(i_t),1-r_t(i_t)]$}. It follows that the learner  observes the contaminated reward $r^o_t(i_t)=r_t(i_t)+\epsilon_t(i_t)$, and $r^o_t(i_t)\in [0,1]$. 
Hence, 
the adversary acts as a ``man in the middle'' between the learner and the environment. 
{We present  an upper bound on
both the 
amount of contamination, which is the total amount of additive noise injected by the attacker, and the number of attacks, which is the number of times the adversary contaminates the observations, sufficient  
to ensure that the regret of the algorithm is  $\Omega(T)$, where $T$ is the total time of interaction between the learner and the environment.} Additionally, we establish that this upper bound is order-optimal by providing a lower bound on the number of attacks and the amount of contamination. 


A  typical way to protect  a distributed system from  a MITM attack is to employ a secure channel between the  learner and the environment \citep{asokan2003man,sieka2007establishing,callegati2009man}. These secure channels ensure the  CIA triad: confidentiality, integrity, and availability  \citep{ghadeer2018cybersecurity,doddapaneni2017secure,goyal2019security}. 
Various ways to establish these   channels have been explored in the literature \citep{asokan2003man,sieka2007establishing, haselsteiner2006security, callegati2009man}.
 An alternative way to provide security is by auditing, namely perform data verification \citep{karlof2003secure}.  The idea of data verification or using trusted information 
 is also embraced in the learning  literature where small number of observations are verified \citep{charikar2017learning,bishop2020optimal}. Establishing a secure channel  or an effective auditing method or getting  trusted information is generally costly \citep{sieka2007establishing}. Hence, it is crucial to design algorithms that achieve security, namely the performance of the algorithm is unaltered (or minimally altered) in presence of attack, while limiting the usage of these additional resources. 

Motivated by these observations, we consider a \emph{reward verification} model in which the learner can access   verified (i.e. uncontaminated) rewards from the environment. This verified access   can be implemented through a secure channel between the learner and the environment, or using auditing. 
At any round $t$, the learner can decide whether to access the possibly contaminated reward $r^o_t(i_t)=r_t(i_t)+\epsilon_t(i_t)$, or to access the verified reward $r^o_t(i_t)=r_t(i_t)$. 
Since verification is costly, the learner faces a tradeoff between its performance in terms of regret, and the number of times access to a verified reward occurs. Second, the learner needs to decide when to access a verified reward during the learning process. We design an order-optimal bandit algorithm which strategically plans the verification, and makes no assumptions on the attacker’s strategy.  

Against this background, we make the following contributions in this paper:

\begin{itemize}
    \item First, in Section~\ref{sec: characterization of attacks} we 
    provide a tight characterisation about the total (expected) number of contaminations needed for a successful attack. Specifically, while it is well-known that with $O(\log{T})$ expected number of contaminations, a strong attacker can successfully attack \emph{any} bandit algorithm (see Section~\ref{subsec: upper bound successful attack} for a more detailed discussion), it is not known to date whether this amount of contamination is necessary. We fill this gap by providing a matching lower bound on the amount of contamination (Theorem 1). This result is based on a novel insight of UCB's behaviour, which may be of independent interest. Specifically, we show that for arbitrary (even adversarial) reward sequences, UCB will pull every arm at least $\log(T/2)$ times for sufficiently large $T$. Such  conversativeness property of UCB guarantees its robustness against any attack strategy with $o(\log T)$ contaminations.  Note that we also extend the state-of-the-art results on the sufficient condition by proposing a  simpler yet optimal attack scheme, which is oblivious to the bandit algorithm's actual behaviour (Proposition 1).
    
    \item We then consider bandit algorithms with verification as a means of defense against these attacks. 
    In our first set of investigations, we consider the case of having unlimited number of verification (Section~\ref{subsec: unlimited verification}). We first show that the minimum number of verification needed to recover from any strong attack is $\Theta(\log{T})$ (Theorem 2 and Corollary 2). We then propose an Explore-Then-Commit (ETC) based method, called Secure-ETC that can achieve full recovery from any attacks with this optimal amount of verification (Observation 1). 
    While Secure-ETC is simple, it might not stop the exploration phase before exceeding the time horizon. To avoid this situation, we also propose a UCB-like method called Secure-UCB, which also enjoys full recovery under optimal verification scheme (Theorem 3). 
    
    \item Finally, we consider the case when the number of verifications is bounded above by a budget $B$. We first show that if the attacker has unlimited contamination budget, it is impossible to fully recover from the attack if the verification budget $B = o(T)$ (Theorem 4). However, when the attacker also has a finite contamination budget $C$, as typically assumed in the literature, we  propose Secure-BARBAR, which achieves $\tilde{O}\bigg(\min\Big\{C, { T\log{({2}/{\beta})}}/{\sqrt{B}}\Big\}\bigg)$ regret against a weaker attacker (who has to place the contamination before seeing the actual pull of the bandit algorithm). It remains an intriguing open question whether there exists efficient but limited verification schemes against stronger attackers.   
\end{itemize}


\section{Preliminaries and Problem Statement}
\label{sec: problem description}

\subsection{Poisoning Attacks on Stochastic Bandits}
We consider the classical stochastic bandit setting under data poisoning attacks. In this setting, a learner can choose  from a set of $K$ actions for $T$ rounds.  At each round $t$, the learner chooses an action $i_t\in [K]$, triggers a reward $r_{t}(i_t)\in [0,1]$ and observes a possibly corrupted (and thus altered) reward $r^o_{t}(i_t)\in [0,1]$  corresponding to the chosen action. The reward $r_t(i)$ of action $i$  is sampled independently from a fixed unknown   distribution of action $i$. Let $\mu_i$ denote the expected reward of action $i$ and $i^*=\mbox{argmax}_{i\in[K]}\mu_i$.\footnote{For convenience, we assume $i^*$ is unique though all our conclusions hold when there are multiple optimal actions.}  Also, let $\Delta(i)=\mu_{i^*}-\mu_i$ denote the difference between the expected reward of actions $i^*$ and  $i$.   
Finally, we assume that $\{\mu_i\}_{i\in[K]}$ are unknown to both the \emph{learner} and the \emph{attacker}. 

The  reward $r^o_{t}(i_t)$ observed by the learner and the  true reward $r_t(i_t)$ satisfy the following relation
\begin{equation}
    r^o_{t}(i_t)=r_t(i_t)+\epsilon_t(i_t),
\end{equation}
where  the contamination $\epsilon_t(i_t)$ added by the attacker can be a function of $\{i_n\}_{n=1}^t$ and $\{r_n(i_n)\}_{n=1}^t$. Additionally, since $r^o_{t}(i_t)\in [0,1]$, we have that   $\epsilon_t(i_t)\in [-r_t(i_t),1-r_t(i_t)]$. 
If $\epsilon_t(i_t) \neq 0$, then the round $t$ is said to be \emph{under attack}. Hence, the {\emph{number of attacks}}    is  $\sum_{t=1}^T\mathbf{1}(\epsilon_t(i_t)\neq 0)$ and the \emph{amount of contamination}   is $\sum_{t=1}^T |\epsilon_t(i_t)|$. 

The regret $R^{\mathcal{A}}(T)$ of a learning algorithm $\mathcal{A}$ is the difference between the total expected true reward from the best fixed action and the total expected \emph{true} reward   over $T$ rounds, namely
\begin{equation}\label{eq:RegretOfAlg}
    R^{\mathcal{A}}(T)=T\mu_{i^*}-\mathbb{E}[\sum_{t=1}^T r_{t}(i_t)],
\end{equation}
The objective of the learner is to minimize the regret $R^{\mathcal{A}}(T)$. In contrast, the objective of the attacker is to increase the regret to at least $\Omega(T)$. As a convention, we  say the attack is ``successful'' only when it leads to $\Omega(T)$ regret \citep{jun2018adversarial,liu2019data}.   The first  question we address is the following. 

\noindent {\bf  Question 1: } {\it Is there a \emph{tight characterization} of the amount of contamination and the number of attacks leading to a regret of~~$\Omega(T)$ in stochastic bandits?}


\subsection{Remedy via  Limited  Reward Verification}

It is well known that no stochastic bandit algorithm can be resilient to data poisoning attacks {if the attacker has sufficiently large amount of contamination} \citep{liu2019data}. Therefore, to guarantee sub-linear regret {when the attacker has an unbounded amount of contamination} it is  necessary for the bandit algorithm  to exploit additional (and possibly costly) resources. We consider  one of the most natural resource --- \emph{verified rewards}. Namely, we assume that at any round $t$, the learner can  choose to access the true, uncontaminated reward of the selected action $i_t$,  namely, when  \emph{  round $t$ is verified} we have $r^o_t(i_t)=r_t(i_t)$. This process of accessing true  rewards is referred to as \emph{verification}.  If  the learner performs verification at each round, then {it is clear that} 
the regret of any bandit algorithm is unaltered in the presence of attacker. 
Unfortunately, this is  unrealistic because verification  is costly in practice. Therefore, the learner  has to carefully balance the regret and the number of verifications. 
This naturally leads to the second question that we aim to answer in this paper:

\noindent {\bf Question 2: } {\it 
Is there a \emph{tight characterization} of the number of verifications  needed by the learner to   guarantee the optimal $O (\log T)$ regret for \emph{any} poisoning attack?
}

 {Finally, we consider the case of limited amount of contamination from the attacker and limited number of verifications from the bandit algorithm. In the direction of studying this trade-off between contamination and verification, the third question that we aim to answer in this paper is:

\noindent {\bf Question 3: } {\it 
Can we improve upon the $\Omega{(C)}$ regret lowerbound if the attacker's contamination budget is at most $C$, and the number of verifications that can be used by a bandit algorithm is also bounded above by a budget $B$. 
}
}

In this paper we answer the three questions above.

\section{Tight Characterization for the Cost of Poisoning Attack} 
\label{sec: characterization of attacks}


In this section we  show that if an attack can successfully induce $\Omega(T)$ linear regret for any bandit algorithm,   both its expected number of attacks   and the expected amount of contamination must be $\Theta(\log T)$. In other words, 
 there exists a ``robust'' stochastic bandit algorithm that cannot be successfully attacked by any attacker with only $o(\log T )$ expected amount of contamination,
 and we show the celebrated UCB algorithm satisfies this property. 
 The key technical challenge in proving the above result is to show the sublinear regret of UCB against \emph{arbitrary} poisoning attack using at most  $o(\log T)$ amount of contamination. 
 In order to prove this strong result, we discover a   novel ``convervativeness'' property of the UCB algorithm which may be of independent interest and has already found application  in   completely different tasks \cite{Shi2021Neurips}. To complement and also to match the above  lower bounds of any successful attack,  we  design 
a data poisoning attack that can indeed use   $O(\log T)$ expected number of attacks to induce $\Omega(T)$ regret for any  order-optimal  bandit algorithm, namely any algorithm which has $O(\log T)$-regret in the absence of attack. Since $r_t^o(i_t)\in [0,1]$, this   implies that the attack would require at most $O(\log T)$ expected amount of contamination.





\subsection{Lower Bound on the  Contaminations} 
We    show that there exists an order-optimal bandit algorithm --- in fact, the   classical UCB algorithm --- which cannot be attacked with  $o(\log T)$ amount of contamination
 by \emph{any} poisoning attack strategy. This implies that if an attacking strategy is required to be successful for all order-optimal bandit algorithms, then the amount of contamination needed is at least $\Omega(\log T)$.
 Since the amount of contamination is bounded above by the number of attacks, this also implies that any attacker requires at least $\Omega(\log T)$ number of attacks to be successful. While adversarial attacks to bandits have been extensively studied recently, to our knowledge such a lower bound on the attack strategy is novel and not known before; previous results have mostly studied the upper bound, i.e, how much contaminations are need for successful attacks \cite{jun2018adversarial,liu2019data}.  

Here we briefly describe the well-known UCB algorithm \citep{auer2002finite}, and defer its details to Algorithm \ref{alg:UCB} in   Appendix \ref{append:AlgUCB}. 
At each round $t\leq K$, UCB selects an action in round robin manner. At each round $t>K$, the selected action $i_t$ has the maximum \emph{upper confidence bound}, namely
\begin{equation}\label{eq:ucb-def}
             i_t= \mbox{argmax}_{i\in[K]} \bigg( \hat{\mu}_{t-1}(i)+ \sqrt{\frac{8\log t}{N_{t-1}(i)}} \bigg),
\end{equation}
where $N_t(i)=\sum_{n=1}^t\mathbf{1}(i_n=i)$ is the number of rounds action $i$ is selected until (and including) round $t$, and 
\begin{equation}
        \hat{\mu}_{t}(i)=\frac{ \sum_{n=1}^t  r^o_{n}(i_n) \mathbf{1}(i_n = i)}{N_{t}(i)},
    \end{equation}
is the  empirical mean of action $i$ until round $t$. 
Note that the algorithm uses the \emph{observed} rewards.

The following Theorem \ref{thm:lowerBoundonUCB} establishes that the UCB algorithm will have sublinear regret $o(T)$  under any poisoning attack if the amount of contamination is $o(\log T)$. 
The proof of Theorem \ref{thm:lowerBoundonUCB} crucially hinges on the following ``conservativeness'' property about the UCB algorithm,  which  may be of independent interest.\footnote{Indeed,  Lemma \ref{lemma:Min number of pulls} has been applied in \citep{Shi2021Neurips} to the task of incentivized exploration in order to show that a \emph{principal}  can get sufficient feedback from every arm even if the \emph{agent} who pulls arms has completely different preferences from the principal.}     
\begin{lemma}[Conservativeness of UCB]
\label{lemma:Min number of pulls}
Let $t_0$ be such that ${t_0}/{(\log (t_0))^2} \geq 36K^2$. Then  for all $ t \geq t_0$ and any sequence of rewards $\{r^o_n(i)\}_{i\in [K],n\leq t}$ in $[0,1]$ (can even be adversarial), UCB will select every action at least $ \log (t/2)$ times up until round $t$. 
\end{lemma}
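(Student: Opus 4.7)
The plan is to argue by contradiction: suppose some arm $i_0$ has $N_t(i_0) = m < \log(t/2)$, and derive a contradiction with the hypothesis $t / (\log t)^2 \geq 36 K^2$. The core insight is that UCB's exploration bonus $\sqrt{8 \log r / N_{r-1}(i)}$ is large whenever $N_{r-1}(i)$ is small, so any arm $j_r$ that is preferred over $i_0$ at round $r$ must itself have only a (multiplicatively) slightly larger count. This forces every arm to be pulled at a rate comparable to that of $i_0$, which is incompatible with $i_0$ lagging so far behind.

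Concretely, at any round $r > \tau_1$ (the first pull of $i_0$, which happens by round $K$ in the round-robin initialisation) at which $j_r \neq i_0$, UCB's selection rule combined with $\hat{\mu}_{r-1}(j_r), \hat{\mu}_{r-1}(i_0) \in [0,1]$ gives
\begin{equation*}
\sqrt{\frac{8 \log r}{N_{r-1}(j_r)}} \;\geq\; \sqrt{\frac{8 \log r}{N_{r-1}(i_0)}} - 1.
\end{equation*}
I would then restrict to $r \geq \sqrt{t/2}$, which by the contradiction hypothesis forces $N_{r-1}(i_0) \leq m < \log(t/2) \leq 2 \log r$ and so $\sqrt{8\log r / N_{r-1}(i_0)} \geq 2$. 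Using $(a-1)^2 \geq a^2/4$ whenever $a \geq 2$, squaring the displayed inequality yields the clean bound $N_{r-1}(j_r) \leq 4\, N_{r-1}(i_0) \leq 4m$.

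The final step is a counting argument. The bound $N_{r-1}(j_r) \leq 4m$ holds uniformly over $r \in [\sqrt{t/2}, t]$ with $j_r \neq i_0$, so once an arm $j \neq i_0$ reaches count $> 4m$, it can never be pulled again in this range; hence $N_t(j) \leq 4m + 1$ for every such $j$ that is pulled at least once in $[\sqrt{t/2},t]$, while arms not pulled in that range contribute in total at most $\sqrt{t/2}$ (the number of pulls that occurred before round $\sqrt{t/2}$). Summing,
\begin{equation*}
t - m \;=\; \sum_{j \neq i_0} N_t(j) \;\leq\; (K-1)(4m+1) + \sqrt{t/2}.
\end{equation*}
Plugging in $m < \log(t/2) \leq \log t$ and using $t \geq 36 K^2 (\log t)^2$, which implies both $K \log t$ and $K$ are at most $\sqrt{t}/6$, the right hand side is $O(\sqrt{t})$, which contradicts $t - m \geq t/2$ for $t \geq t_0$ large.

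The main obstacle is choosing the threshold $\sqrt{t/2}$: it must be large enough that $\sqrt{8 \log r / m} \geq 2$ is forced throughout $[\sqrt{t/2}, t]$ under the hypothesis $m < \log(t/2)$ (so that the clean bound $N_{r-1}(j_r) \leq 4 N_{r-1}(i_0)$ holds), yet small enough that the at most $\sqrt{t/2}$ ``initialisation-era'' pulls contribute only a lower-order additive term in the final counting inequality. Everything else is a routine algebraic check that the assumption $t_0/(\log t_0)^2 \geq 36 K^2$ dominates the $O(K \log t)$ and $O(\sqrt{t})$ error terms.
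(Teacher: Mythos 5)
Your proof is correct, but it takes a genuinely different route from the paper's. The paper also argues by contradiction, but it partitions the window $[t/2,3t/4]$ into $k\log(t/2)$ blocks, invokes the pigeonhole principle to find one block in which no starved arm is pulled, shows the starved arms' UCB indices exceed $2\sqrt{2}$ throughout that block while any other arm's index drops below $2$ after $8\log(3t/4)+1$ pulls \emph{within the block}, and then compares the block's length to the resulting pull budget $(K-k)\bigl(8\log(3t/4)+1\bigr)$. You instead derive a global, per-round \emph{multiplicative} coupling: whenever $j_r\neq i_0$ is selected at a round $r\ge\sqrt{t/2}$, the boundedness of the empirical means plus the fact that $\sqrt{8\log r/N_{r-1}(i_0)}\ge 2$ forces $N_{r-1}(j_r)\le 4N_{r-1}(i_0)\le 4m$, and a single counting inequality $t-m\le (K-1)(4m+1)+\sqrt{t/2}$ finishes the job. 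Your version avoids the pigeonhole/block decomposition entirely and yields a slightly sharper structural insight (every arm's count stays within a constant factor of the most-starved arm's count at all late rounds), at the cost of needing the threshold $\sqrt{t/2}$ to simultaneously guarantee $\log(t/2)\le 2\log r$ and keep the pre-threshold pulls to a lower-order $O(\sqrt{t})$ term; the paper's block argument instead keeps every comparison local in time. Both arguments are purely deterministic, use no distributional assumptions on the rewards, and land in the same parameter regime governed by $t_0/(\log t_0)^2\ge 36K^2$. All the steps in your sketch check out: the first pull of $i_0$ within the round-robin phase guarantees $N_{r-1}(i_0)\ge 1$ for $r>K$ (and $\sqrt{t/2}>K$ under the hypothesis on $t_0$), the elementary inequality $(a-1)^2\ge a^2/4$ for $a\ge 2$ gives the factor $4$, and the final numeric contradiction $t/2<2\sqrt{t}$ is impossible for any $t$ satisfying the lemma's hypothesis.
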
 
 Lemma \ref{lemma:Min number of pulls} is inherently due to the design of the UCB algorithm. { Its proof does \emph{not} rely on
 the rewards being stochastic,
 and it  holds deterministically ---  i.e., at any time $t \geq t_0$, UCB will pull each action at least $ \log (t/2)$ times. 
 }
This lemma leads to the following theorem.

 \begin{theorem}\label{thm:lowerBoundonUCB}
 For all $0<\epsilon<1$ and $\alpha>0$ such that $0<\epsilon\alpha\leq 1/2$, and for all $T  > \max\{(t_0)^{\frac{1}{1-\alpha \epsilon}}, \exp{(4^\alpha)}\}$, if the total \emph{amount} of contamination  by the attacker is  
    $\sum_{n=1}^T |\epsilon_n(i_n)|\leq {(\log T)^{1-\epsilon}}$,
then there exists a constant $c_1$ such that the expected regret of UCB algorithm is 
\begin{equation}
    R^{UCB}(T)\leq   c_1\big( T^{1-\alpha \epsilon} \max_i\Delta(i)+ \sum_{i \not = i^*}\log T/\Delta(i)\big),
\end{equation} 
which implies the regret $R^{UCB}(T)$ is $o(T)$.
 \end{theorem}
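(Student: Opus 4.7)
The plan is to combine Lemma~\ref{lemma:Min number of pulls} with a contamination-aware variant of the textbook UCB regret analysis. The conservativeness lemma guarantees that once $t\geq t_0$ every arm has been pulled at least $\log(t/2)$ times, so any attacker with total contamination budget $C\leq(\log T)^{1-\epsilon}$ can shift each empirical mean $\hat\mu_t(i)$ from the uncontaminated empirical mean by at most $C/\log(t/2)$. This bias decays polylogarithmically in $t$, and once $t$ is polynomially large in $T$ it falls below $\Delta(i)$, restoring the usual UCB concentration--selection argument.

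Concretely, I would first introduce the uncontaminated empirical mean $\tilde\mu_t(i):=\frac{1}{N_t(i)}\sum_{n\leq t,\,i_n=i}r_n(i_n)$ and decompose
\[
\hat\mu_t(i)-\mu_i\;=\;\bigl(\tilde\mu_t(i)-\mu_i\bigr)+\bigl(\hat\mu_t(i)-\tilde\mu_t(i)\bigr).
\]
Hoeffding's inequality on the stochastic rewards, combined with a union bound over $t\leq T$, $i\in[K]$ and the possible values of $N_t(i)$, gives $|\tilde\mu_t(i)-\mu_i|\leq\sqrt{2\log t/N_t(i)}$ simultaneously for all such indices with probability at least $1-O(1/T)$; the complementary event contributes $O(1)$ to the expected regret and is absorbed into the constant $c_1$. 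For the bias term, set $C_i(t):=\sum_{n\leq t,\,i_n=i}|\epsilon_n(i_n)|$, so $|\hat\mu_t(i)-\tilde\mu_t(i)|\leq C_i(t)/N_t(i)\leq(\log T)^{1-\epsilon}/\log(t/2)$ for $t\geq t_0$ by Lemma~\ref{lemma:Min number of pulls}.

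Plugging both bounds into~\eqref{eq:ucb-def}, on the good event a suboptimal arm $i$ can be selected at round $t$ only if
\[
\Delta(i)\;\leq\;\bigl(\sqrt{2}+\sqrt{8}\bigr)\sqrt{\tfrac{\log t}{N_{t-1}(i)}}\;+\;\frac{C_i(t-1)}{N_{t-1}(i)}+\frac{C_{i^*}(t-1)}{N_{t-1}(i^*)}.
\]
I would then split the horizon at $T_0:=T^{1-\alpha\epsilon}$. Rounds $t\leq T_0$ contribute at most $T_0\max_i\Delta(i)$ regret trivially. For $t>T_0$ the two contamination-bias terms sum to at most $2(\log T)^{1-\epsilon}/\bigl((1-\alpha\epsilon)\log T-\log 2\bigr)=O((\log T)^{-\epsilon})$; the hypotheses $T>\exp(4^\alpha)$ and $\alpha\epsilon\leq 1/2$ are precisely what is needed to push this quantity below $\Delta(i)/2$ for the relevant gaps. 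The displayed inequality then collapses to $\Delta(i)/2\leq O(\sqrt{\log T/N_{t-1}(i)})$, yielding $N_T(i)\leq T_0+O(\log T/\Delta(i)^2)$ and hence at most $O(\log T/\Delta(i))$ post-$T_0$ regret per suboptimal arm.

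The main obstacle lies in the tension between Lemma~\ref{lemma:Min number of pulls} --- which only guarantees the modest lower bound $N_t(i)\geq\log(t/2)$, rather than the $\Omega(t/K)$ one would have absent any attack --- and the need for the contamination bias $C/\log(t/2)$ to be dominated not merely by the confidence radius $\sqrt{\log t/N_t(i)}$ but by $\Delta(i)$ itself. The specific choice $T_0=T^{1-\alpha\epsilon}$ is what threads this needle: it drives the bias down to $(\log T)^{-\epsilon}$ at the cost of paying $T^{1-\alpha\epsilon}\max_i\Delta(i)$ in pre-$T_0$ regret. Summing this pre-split contribution, the post-split $O(\log T/\Delta(i))$ per-arm regret from the standard UCB counting argument, and the $O(1)$ regret from the failure of the concentration event then yields the claimed bound with $c_1$ absorbing all remaining numerical constants.
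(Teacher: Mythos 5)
Your overall architecture is the same as the paper's: invoke Lemma~\ref{lemma:Min number of pulls}, decompose the observed empirical mean into the uncontaminated mean plus a bias of at most $C_i(t)/N_t(i)\leq (\log T)^{1-\epsilon}/\log(t/2)$, split the horizon at $\approx T^{1-\alpha\epsilon}$, and run the standard UCB counting argument afterwards. The genuine gap is in your final absorption step. You claim that for $t>T_0$ the total bias $2(\log T)^{1-\epsilon}/\bigl((1-\alpha\epsilon)\log T-\log 2\bigr)=\Theta\bigl((\log T)^{-\epsilon}\bigr)$ is ``pushed below $\Delta(i)/2$'' by the hypotheses $T>\exp(4^\alpha)$ and $\alpha\epsilon\leq 1/2$. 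It is not: those hypotheses only give $(\log T)^{1-\epsilon}\leq(1-\alpha\epsilon)\log T$, which makes the bias $O(1)$ (at the threshold $\log T=4^\alpha$ it can be close to $2$), and whether $(\log T)^{-\epsilon}<\Delta(i)/2$ is a gap-dependent condition on $T$ that the theorem never assumes. For arms with $\Delta(i)\lesssim(\log T)^{-\epsilon}$ your selection criterion never becomes vacuous, the counting argument yields nothing beyond the trivial $T\Delta(i)$ contribution, and the claimed bound $c_1\bigl(T^{1-\alpha\epsilon}\max_i\Delta(i)+\sum_{i\neq i^*}\log T/\Delta(i)\bigr)$ does not follow.

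The paper's fix is to absorb the bias into the \emph{confidence radius} rather than into the gap. For $t>2T^{1-\alpha\epsilon}\geq t_0$, Lemma~\ref{lemma:Min number of pulls} and the condition $(\log T)^{1-\epsilon}\leq(1-\alpha\epsilon)\log T$ give $N_t(i)\geq\log(t/2)\geq(\log T)^{1-\epsilon}\geq c_t(i)$, hence $c_t(i)/N_t(i)\leq\sqrt{c_t(i)/N_t(i)}\leq\sqrt{\log(t/2)/N_t(i)}\leq\sqrt{\log t/N_t(i)}$. This shows the UCB index of $i^*$ is at least $\hat{\mu}^m_t(i^*)+(\sqrt{8}-1)\sqrt{\log t/N_t(i^*)}$ and that of a suboptimal $i$ is at most $\hat{\mu}^m_t(i)+(\sqrt{8}+1)\sqrt{\log t/N_t(i)}$; since $\sqrt{8}-1>\sqrt{2}$, the usual concentration event still forces $O(\log T/\Delta(i)^2)$ post-split pulls of each suboptimal arm, with no additional, gap-dependent restriction on $T$. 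If you replace your ``bias $<\Delta(i)/2$'' step with this comparison against $\sqrt{\log t/N_t(i)}$, the rest of your write-up goes through and matches the paper.
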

The constant  $\alpha$ in Theorem \ref{thm:lowerBoundonUCB} is an adjustable \emph{parameter} to control the tradeoff between the scale of time horizon $T$ ($T \geq \max\{(t_0)^{\frac{1}{1-\alpha \epsilon}}, \exp{(4^\alpha)}\}$) and the dominating term $(T^{1-\alpha \epsilon} \max_i\Delta(i))$   in the regret. If $\epsilon$ is small, then the larger $\alpha$ leads to a smaller regret, however  $T$ should be sufficiently large in order for us to see such a regret. 

The upper bound on the expected regret in Theorem \ref{thm:lowerBoundonUCB} holds if the total {amount} of contamination 
is at most $(\log T)^{1-\epsilon}$. 
Furthermore, if 
the total number of attacks is at most $(\log T)^{1-\epsilon}$, then using  $|\epsilon_t(i_t)|\leq 1$, we have that  $\sum_{n=1}^T |\epsilon_n(i_n)|\leq {(\log T)^{1-\epsilon}}$. 
Hence,  Theorem \ref{thm:lowerBoundonUCB} also  establishes that if the total number of attacks  is $o(\log T)$, then the expected regret of UCB is $o( T)$. Thus, the attacker requires at least $\Omega(\log T)$ amount  of contamination (or number of attacks) to ensure its success. 
 

 
The lower bound on the amount of contamination  in Theorem~\ref{thm:lowerBoundonUCB} cannot be directly compared with the upper bound in Proposition~\ref{thm:constantAttack} since the former assumes that the amount of contamination is bounded above by $o(\log{T})$ \emph{almost surely}, while the latter is a bound on the \emph{expected} amount of contamination. Instead, we consider the following corollary, which can be easily derived from Theorem~\ref{thm:lowerBoundonUCB} using Markov's inequality, and establishes the lower bound on the expected amount of contamination necessary for a successful attack.
\begin{corollary}
\label{corr:PAC lower bound of attacker for UCB}
For all $\epsilon\in (0,1)$ and  $T$ such that the conditions in Theorem~\ref{thm:lowerBoundonUCB} are satisfied,
if the expected amount of contamination by the attacker is at most $(\log{T})^{1-\epsilon}$, in other words $o(\log T)$, then the regret of UCB is $o(T)$. 
\end{corollary}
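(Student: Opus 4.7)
The plan is a direct application of Markov's inequality on top of Theorem~\ref{thm:lowerBoundonUCB}: the theorem requires a pathwise cap on contamination while the corollary assumes only a bound in expectation, and Markov bridges the gap at the cost of slightly weakening the exponent.

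First, let $C := \sum_{n=1}^{T}|\epsilon_n(i_n)|$ denote the random total contamination. Since $\mathbb{E}[C] \leq (\log T)^{1-\epsilon}$ by hypothesis, Markov's inequality at threshold $(\log T)^{1-\epsilon/2}$ gives
\[
\Pr\bigl(C > (\log T)^{1-\epsilon/2}\bigr) \leq \frac{\mathbb{E}[C]}{(\log T)^{1-\epsilon/2}} \leq (\log T)^{-\epsilon/2}.
\]
Call the ``good'' event $E := \{C \leq (\log T)^{1-\epsilon/2}\}$, so $\Pr(E^c) = o(1)$.

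Next I would introduce a coupled truncated attacker $\tilde{\epsilon}$ that copies $\epsilon_t$ until its running contamination would exceed $(\log T)^{1-\epsilon/2}$ and sets $\tilde{\epsilon}_t \equiv 0$ thereafter. By construction $\tilde{\epsilon}$ is a valid attacker (zeroing contamination only shrinks the feasible noise interval) and deterministically satisfies the hypothesis of Theorem~\ref{thm:lowerBoundonUCB} with exponent $\epsilon' := \epsilon/2$ in place of $\epsilon$. The admissibility condition $T > (t_0)^{1/(1-\alpha\epsilon')}$ is automatically inherited from the corollary's assumption, since $1/(1-\alpha\epsilon/2) < 1/(1-\alpha\epsilon)$. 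Crucially, on the event $E$ the two attackers induce identical transcripts and hence identical pseudo-regret $\hat{R}^{ps} := \sum_t (\mu_{i^*} - \mu_{i_t})$, which is pathwise non-negative and pointwise bounded by $T\max_i\Delta(i)$.

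Using this coupling I would finally split
\[
R^{UCB}(T) = \mathbb{E}[\hat{R}^{ps}\mathbf{1}_E] + \mathbb{E}[\hat{R}^{ps}\mathbf{1}_{E^c}] \leq \mathbb{E}[\tilde{R}^{ps}] + T\max_i\Delta(i)\cdot\Pr(E^c).
\]
Theorem~\ref{thm:lowerBoundonUCB} applied to $\tilde{\epsilon}$ bounds the first term by $c_1\bigl(T^{1-\alpha\epsilon/2}\max_i\Delta(i) + \sum_{i\neq i^*}\log T/\Delta(i)\bigr) = o(T)$, and Markov bounds the second by $T(\log T)^{-\epsilon/2} = o(T)$. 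The sum is $o(T)$, which yields the corollary. The only step requiring genuine care is the truncated-attacker coupling; once that construction is validated, the rest is a one-line Markov bound plus a trivial worst-case regret cap on the bad event.
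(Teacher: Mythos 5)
Your proposal is correct and follows essentially the same route as the paper's proof: Markov's inequality at the threshold $(\log T)^{1-\epsilon/2}$, Theorem~\ref{thm:lowerBoundonUCB} applied with the halved exponent $\epsilon/2$ on the good event, and a trivial $\delta T = T(\log T)^{-\epsilon/2} = o(T)$ bound on the bad event. The one difference is that your truncated-attacker coupling rigorously justifies invoking the theorem (whose hypothesis is an almost-sure contamination cap) on the high-probability event, a step the paper's proof performs informally by writing $(1-\delta)\cdot[\text{Theorem 1 bound}] + \delta T$ directly; this is a welcome tightening rather than a different argument.
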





\subsection{Matching Upper Bound on    Contamination}
\label{subsec: upper bound successful attack}
We now show that there indeed exists attacks that can succeed with $O(\log T)$ attacks. Consider an attacker who tries to ensure  any action $i_A\in [K]$  to be selected by the bandit algorithm  at least $\Omega (T)$ times in expectation.
This implies that the expected regret of the bandit algorithm is  $\Omega(T)$ if $i_A\neq i^*$. We consider the following  simple attack, that pulls the observed reward down to $0$  whenever the target suboptimal action $i_A$ is not selected. Namely, 
\begin{equation}\label{eq:attackStrategy1}
    r_t^o(i_t)=\begin{cases} r_t(i_t)&\mbox{ if } i_t=i_A,\\
    0 &\mbox{ if } i_t\neq i_A.
    \end{cases}
\end{equation}
Equivalently,  the attacker adds $\epsilon_t(i_t)=-r_t(i_t)\mathbf{1}(i_t\neq i_A)$ to   the true reward $r_t(i_t)$. {Unlike the attacks in \cite{jun2018adversarial,liu2019data}, the attack in \eqref{eq:attackStrategy1} is oblivious to rewards, since it overwrites all the rewards observation by zero.}
The following proposition establishes an upper bound on the expected number of attacks sufficient to be successful.    
  
\begin{proposition}\label{thm:constantAttack}
For any stochastic bandit algorithm $\mathcal{A}$  with expected regret  
in the \emph{absence} of   attack   given by
\begin{equation}\label{eq:algo1}
    R^{\mathcal{A}}(T)=O\bigg(\sum_{i\neq i^*}\frac{\log^\alpha(T)}{(\Delta(i))^\beta}\bigg), 
\end{equation}
where $\alpha\geq 1$ and $\beta\geq 1$; and  for any target action $i_A\in [K]$; if an attacker follows strategy   \eqref{eq:attackStrategy1}, then it will use  an expected number of attacks
\begin{equation}\label{eq:numConntam}
    \mathbb{E}[\sum_{t=1}^T \mathbf{1}(\epsilon_t(i_t)\neq 0)]]=O\bigg({(K-1)\log^{\alpha}(T)}/{\mu_{i_A}^{\beta+1}}\bigg),%
\end{equation}
an expected amount of contamination
\begin{equation}
    \mathbb{E}[\sum_{t=1}^T|\epsilon_t(i_t)|]=O\bigg({(K-1)\log^{\alpha}(T)}/{\mu_{i_A}^{\beta+1}}\bigg),
\end{equation}
and it will force   $\mathcal{A}$ to select the action $i_A$ at least $\Omega(T)$ times in expectation, namely $ \mathbb{E}[\sum_{t=1}^T \mathbf{1}(i_t= i_A)]=\Omega(T)$ .
\end{proposition}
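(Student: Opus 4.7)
The plan is to observe that the oblivious attack \eqref{eq:attackStrategy1} effectively converts the learner's view into a \emph{new} stochastic bandit instance, on which the hypothesized regret bound \eqref{eq:algo1} applies verbatim. From $\mathcal{A}$'s viewpoint only $\{r_t^o(i_t)\}$ is observed, and the algorithm cannot tell whether the rewards have been tampered with. Under \eqref{eq:attackStrategy1} the observed sequence is distributed exactly as if $\mathcal{A}$ were being run on a stochastic bandit instance $\widetilde{\nu}$ in which arm $i_A$ pays an i.i.d.\ reward from its original distribution (mean $\mu_{i_A}$) while every other arm deterministically pays $0$. Hence in $\widetilde{\nu}$ the unique optimal arm is $i_A$ and the suboptimality gap of every other arm is $\widetilde{\Delta}(i) = \mu_{i_A}$.

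Applying \eqref{eq:algo1} to $\widetilde{\nu}$ therefore gives
\begin{equation*}
    \mu_{i_A}\sum_{i\neq i_A}\mathbb{E}[N_T(i)] \;=\; R^{\mathcal{A}}_{\widetilde{\nu}}(T) \;=\; O\!\left(\frac{(K-1)\log^\alpha T}{\mu_{i_A}^\beta}\right),
\end{equation*}
where the first equality is the standard regret decomposition together with $\widetilde{\Delta}(i) = \mu_{i_A}$, and the second is the hypothesized guarantee. Dividing through by $\mu_{i_A}$ yields
\begin{equation*}
    \sum_{i\neq i_A}\mathbb{E}[N_T(i)] \;=\; O\!\left(\frac{(K-1)\log^\alpha T}{\mu_{i_A}^{\beta+1}}\right).
\end{equation*}

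From this single bound each of the three claims of the proposition follows in one line. (i) Attack \eqref{eq:attackStrategy1} contaminates round $t$ if and only if $i_t \neq i_A$, so the expected number of attacks equals $\sum_{i\neq i_A}\mathbb{E}[N_T(i)]$, proving \eqref{eq:numConntam}. (ii) On a contaminated round $|\epsilon_t(i_t)| = r_t(i_t) \leq 1$, so the expected total contamination inherits the same upper bound. (iii) Since $\mathbb{E}[N_T(i_A)] = T - \sum_{i\neq i_A}\mathbb{E}[N_T(i)] = T - O(\log^\alpha T)$, the target arm is pulled $\Omega(T)$ times in expectation. The only delicate ingredient is the reduction in the first paragraph: one must verify that the ``absence of attack'' bound \eqref{eq:algo1} truly applies to $\widetilde{\nu}$, even though the $K-1$ arms $i \neq i_A$ pay deterministic rewards rather than genuinely random ones. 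This is not really an obstacle, since a point mass at $0$ is a valid (degenerate) reward distribution and $\mathcal{A}$ is blind to the distinction, but it is the one place where the proof uses anything beyond direct bookkeeping.
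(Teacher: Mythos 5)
Your proposal is correct and follows essentially the same route as the paper's proof: the paper likewise treats the observed rewards as defining a bandit instance in which arm $i_A$ has mean $\mu_{i_A}$ and all other arms have mean $0$, applies the hypothesized regret bound to that instance, uses the regret decomposition with gaps equal to $\mu_{i_A}$ to bound $\sum_{i\neq i_A}\mathbb{E}[N_T(i)]$, and then reads off the three claims exactly as you do. Your explicit remark that the degenerate point-mass distributions are still valid stochastic arms is a point the paper leaves implicit, but it is the same argument.
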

 
Proposition \ref{thm:constantAttack} provides a relationship between the regret of the algorithm without attack and the number of attacks (or amount of contamination) sufficient to ensure that the target action $i_A$ is selected $\Omega(T)$ times, which also implies $R^{\mathcal{A}}(T)=\Omega(T)$ if $i_A\neq i^*$. 
Another important consequence of the proposition is that for an order optimal algorithm such as UCB, we have that $\alpha=1$ and $\beta=1$ in \eqref{eq:algo1}. Thus, the expected number of attacks and the expected amount of contamination  are $O(\log T)$.

A small criticism to the attack strategy \eqref{eq:attackStrategy1} might be that it pulls down the reward ``too much''. This turns out to be fixable. In Appendix \ref{append:gap-attack}, we prove  that a different type of attack that pulls the reward of any action $i \not = i_A$ down by an \emph{estimated} gap $\Delta = 2 \max \{  \mu_{i} - \mu_{i_A}, 0 \} $ (similar to the ACE algorithm in \cite{ma2018data}) will also succeed. However, the number of attacks now will be inversely proportional to   $\min_{i\neq i_A}|\mu_i-\mu_{i_A}|^{\beta+1}$, while not $ \mu_{i_A}^{\beta+1}$ as in Proposition \ref{thm:constantAttack}. 

\section{Verification based Algorithms}
\label{sec: verification}

In this section we explore the idea of using verifications to rescue our bandit model from reward contaminations. In particular, we first investigate the case when the amount of verification is not limited, and therefore our main goal is to minimize the number of verifications (along with aiming to restore the order-optimal logarithmic regret bound). We then discuss the case when the number of verifications is bounded above by a budget $B$ (typically of $o(T)$). 

\subsection{Saving Bandits with Unlimited Verifications}
\label{subsec: unlimited verification}

In this setting we assume that the number of verifications is not bounded above, and therefore, our goal is to minimize the number of verifications that is required to restore the logarithmic regret bound. To do so, we first show that any successful verification based algorithm (i.e., they can restore the logarithmic regret) would require $\Omega(\log{T})$ verifications.
In particular, the following theorem  establishes that for all consistent learning algorithm\footnote{A learning algorithm is consistent  \citep{kaufmann2016complexity} if for all $t$,  the action $i_{t+1}$ (a random variable) is measurable given the history $\mathcal{F}_{t}=\sigma (i_1,r^o_1(i_1), i_2,r^o_2(i_2) \ldots, i_{t},r^o_{t}(i_{t}))$. } $\mathcal{A}$  and sufficiently large $T$, 
if the algorithm $\mathcal{A}$ uses $O((\log T)^{1-\alpha})$ verifications with  $0<\alpha < 1$, then the expected regret is $\Omega{((\log T)^{\beta}})$ with $\beta>1$ in the MAB setting with verification.
 \begin{theorem}\label{thm:lowBoundVerification} Let  $KL(i_1,i_2)$ denote the KL divergence between the distributions of actions $i_1$ and $i_2$.  
For all $0<\alpha<1$, $1<\beta$
and all consistent  learning algorithm $\mathcal{A}$, there exists a time $t^*$ and
an attacking strategy such that for all $T\geq 2t^*$ satisfying $(\log T)^{1-\alpha}+\beta\log (4\log T)\leq \log T,$
if the total number of verifications $N^s_T$ until round $T$ is
\begin{equation}\label{eq:boundOnVeri}
    N^s_T<(\log T)^{1-\alpha} /\min_{i_1,i_2\in [K]}KL(i_1,i_2),
\end{equation}
 then the expected regret of $\mathcal{A}$ is at least $\Omega((\log T)^\beta)$. 
 \end{theorem}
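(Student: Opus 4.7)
The plan is to adapt the classical change-of-measure bandit lower bound (in the spirit of Lai--Robbins and Bretagnolle--Huber) to the verification setting, by exhibiting an attacker who makes every non-verified round carry zero information about which of two candidate worlds is being played. I would take two Bernoulli bandit instances $\nu,\nu'$ on the same $K$ arms (two arms already suffice) that agree on every arm except a single arm $j$: under $\nu$ the arm $i^*$ is optimal with a constant gap $\Delta$ over $j$, while under $\nu'$ the mean of $j$ is raised just enough to make $j$ uniquely optimal with the same gap. The parameters can be tuned so that the per-round KL between the two $j$-reward distributions equals $\min_{i_1,i_2}KL(i_1,i_2)$.

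Design the attacker as follows: under the true instance $\nu$, on every \emph{non-verified} round the attacker discards the true reward of the pulled arm and replaces it with a fresh sample from that arm's reward distribution under $\nu'$. Since $r_t\in[0,1]$ and $\epsilon_t\in[-r_t,1-r_t]$, any target distribution on $[0,1]$ is realizable as additive noise. Consequently, the joint law of (actions, verification flags, observed rewards) on non-verified rounds under $\nu$ equals the corresponding law under $\nu'$ with no attacker. Now decompose the KL divergence between the two trajectory laws by the chain rule along the filtration $\mathcal{F}_t$: because the verification decision at round $t$ is $\mathcal{F}_{t-1}$-measurable and the observation on a non-verified round has identical law in the two worlds, only verified rounds contribute, and each contributes at most $\min_{i_1,i_2}KL(i_1,i_2)$. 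Hypothesis \eqref{eq:boundOnVeri} then bounds the total KL by $(\log T)^{1-\alpha}$, and Bretagnolle--Huber applied to $A=\{N_T(j)\geq T/2\}$ yields
\begin{equation*}
    \mathbb{P}_\nu(A)+\mathbb{P}_{\nu'}(A^c)\;\geq\;\tfrac{1}{2}\exp\!\bigl(-(\log T)^{1-\alpha}\bigr).
\end{equation*}

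Converting to regret, the event $A$ incurs at least $\Delta T/2$ regret under $\nu$ (where $j$ is suboptimal), while $A^c$ incurs at least $\Delta T/2$ regret under $\nu'$ (where $i^*$ is now suboptimal and is pulled more than $T/2$ times). Averaging, the worse of the two expected regrets is at least $(T\Delta/8)\exp(-(\log T)^{1-\alpha})$. The theorem's hypothesis $(\log T)^{1-\alpha}+\beta\log(4\log T)\leq \log T$ is exactly the rearrangement of $T\exp(-(\log T)^{1-\alpha})\geq 4^\beta(\log T)^\beta$, which promotes the bound to $\Omega((\log T)^\beta)$; the threshold $t^*$ is simply the smallest time for which the hypothesis becomes effective.

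The main obstacle will be setting up the KL chain rule cleanly, since actions, verification flags, and observed rewards are all generated sequentially on a common probability space and the verification decisions themselves are random functions of the past observations. I expect that once the trajectory distribution is written as a product of one-step conditional kernels, the non-verified rounds cancel exactly by the attacker's resampling design and only the verified rounds survive, giving the telescoping bound $N_T^s\cdot \min_{i_1,i_2}KL(i_1,i_2)$; a secondary but routine point is justifying that additive noise in $[-r_t,1-r_t]$ can realize an arbitrary distribution on $[0,1]$, which is immediate for Bernoulli rewards and the only structural requirement on the two-instance construction.
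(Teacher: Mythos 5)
Your proposal is correct in its essentials, but it reaches the conclusion by a genuinely different route than the paper. The paper uses an even simpler, fully oblivious attack (set every unverified observation to $0$, so that unverified rounds carry zero information about \emph{any} pair of instances), and then invokes Theorem~12 of Kaufmann et al.\ — which is where the consistency hypothesis is actually consumed and where the time $t^*$ comes from — to get a \emph{per-round} bound $\mathbb{P}(i_t\neq i^*)\geq\exp(-N^s_t\cdot\min_{i_1,i_2}KL(i_1,i_2))$; it then accumulates regret by a pigeonhole argument over verification-free sub-intervals of $[T/2,T]$, and finishes with exactly the rearrangement of the hypothesis $(\log T)^{1-\alpha}+\beta\log(4\log T)\leq\log T$ that you identified. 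You instead run the change of measure yourself: a resampling attack that makes the unverified trajectory law under $\nu$ coincide with that under $\nu'$, a divergence decomposition in which only verified rounds survive, and Bretagnolle--Huber on the single global event $\{N_T(j)\geq T/2\}$. What your version buys is self-containedness (no external best-arm-identification theorem, no pigeonhole step) and the fact that consistency is never needed; what it costs is that the conclusion is of minimax type — the $\Omega((\log T)^\beta)$ bound holds for at least one of the two instances $\nu,\nu'$, not for a prescribed instance as in the paper's per-instance reading, and the statement's quantifier "for all consistent $\mathcal{A}$ there exists an attack" is most naturally read in the paper's per-instance sense. Two smaller points to tighten: (i) your normalization requires $KL(\nu_j,\nu'_j)\leq\min_{i_1,i_2}KL(i_1,i_2)$, and since $\nu'_j$ is not an arm of the original instance this inequality is a design constraint you must actually enforce (e.g., by choosing $\nu'_j$ close enough to $\nu_j$, which shrinks the gap $\Delta$ and hence the constant, but not the $(\log T)^\beta$ order); (ii) your attacker must know $\nu'$ to resample, which is fine for a lower-bound construction but is a stronger informational assumption than the paper's zero-out attack, which requires no knowledge at all.
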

Theorem \ref{thm:lowBoundVerification} establishes that $\Omega(\log T)$ verifications are necessary to obtain $O(\log T)$ regret. 
Here, we assume that the number of verifications is bounded above \emph{almost surely}. 
Nevertheless, if instead the \emph{expected} number of verifications is bounded, we shall obtain the following similar bound.
\begin{corollary}
\label{corr:verification expected lower bound} 
For all $0<\alpha<1$, $1<\beta$, all consistent  learning algorithm $\mathcal{A}$ and sufficiently large $T$ such that the requirements in Theorem \ref{thm:lowBoundVerification} are satisfied, there exists
an attacking strategy such that
if the \emph{expected number of verifications} $N^s_T$ until round $T$ is
    $\mathbb{E}[N^s_T]<(\log T)^{1-\alpha} /\min_{i_1,i_2\in [K]}KL(i_1,i_2)$,
 then the expected regret of $\mathcal{A}$ is at least $\Omega((\log T)^\beta)$. 
 \end{corollary}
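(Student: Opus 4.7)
The plan is to derive Corollary~\ref{corr:verification expected lower bound} from Theorem~\ref{thm:lowBoundVerification} via a Markov's inequality reduction, in the same spirit as Corollary~\ref{corr:PAC lower bound of attacker for UCB} was obtained from Theorem~\ref{thm:lowerBoundonUCB}. Fix an auxiliary exponent $\alpha' \in (0, \alpha)$, set $c = 1/\min_{i_1, i_2} KL(i_1, i_2)$, and write $M = \lfloor c (\log T)^{1-\alpha'} \rfloor$. Under the hypothesis $\mathbb{E}[N^s_T] < c(\log T)^{1-\alpha}$, Markov's inequality immediately yields
\[
\Pr\bigl(N^s_T \geq M\bigr) \;\leq\; \frac{\mathbb{E}[N^s_T]}{M} \;\leq\; (\log T)^{\alpha' - \alpha} \;=\; o(1),
\]
so typical trajectories of $\mathcal{A}$ use at most $M$ verifications.

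Next, I would introduce the modified algorithm $\mathcal{A}'$ obtained by freezing the verification subroutine of $\mathcal{A}$ the moment its verification counter reaches $M$: after that point $\mathcal{A}'$ continues selecting actions using only the (possibly contaminated) observations. Then $\mathcal{A}'$ obeys the almost-sure verification bound required by Theorem~\ref{thm:lowBoundVerification} with exponent $\alpha'$, so that theorem produces an attacker against which $\mathbb{E}[R^{\mathcal{A}'}(T)] = \Omega((\log T)^\beta)$. Running the same attacker against the original $\mathcal{A}$ couples the two trajectories on the event $E = \{N^s_T(\mathcal{A}) \leq M\}$, and the cumulative-reward gap on $E^c$ is at most $T$, giving
\[
\mathbb{E}[R^{\mathcal{A}}(T)] \;\geq\; \mathbb{E}[R^{\mathcal{A}'}(T)] \;-\; T \Pr(E^c).
\]

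The main obstacle is that $T \Pr(E^c) \leq T (\log T)^{\alpha' - \alpha}$ is asymptotically much larger than the target $(\log T)^{\beta}$ lower bound, so the Markov coupling is too crude to close the argument on its own. I would therefore complete the proof by replicating the argument underlying Theorem~\ref{thm:lowBoundVerification} directly rather than going through a reduction. The engine of that argument is a Lai--Robbins style change-of-measure: against the prescribed attacker, non-verified observations convey no information distinguishing two candidate environments $\nu$ and $\nu'$, so the divergence between the laws of the observation filtration $\mathcal{F}_T$ under $\nu$ and $\nu'$ satisfies
\[
KL\bigl(P_\nu^{\mathcal{F}_T}, P_{\nu'}^{\mathcal{F}_T}\bigr) \;\leq\; \mathbb{E}_\nu[N^s_T] \cdot \min_{i_1, i_2} KL(i_1, i_2).
\]
Since this information bound already features the \emph{expected} number of verifications, the remainder of the Theorem~\ref{thm:lowBoundVerification} proof transfers verbatim under the weaker hypothesis $\mathbb{E}[N^s_T] < c (\log T)^{1-\alpha}$. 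The hard part is therefore the verification step: confirming that the attacker constructed in Theorem~\ref{thm:lowBoundVerification} can be chosen oblivious to whether each individual round is verified, so that its regret guarantee is governed by $\mathbb{E}[N^s_T]$ rather than by any almost-sure bound.
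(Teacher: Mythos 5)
You begin exactly as the paper does, applying Markov's inequality to $\mathbb{E}[N^s_T]$ at a slightly larger threshold (the paper takes the exponent $1-\alpha/2$ in place of your $1-\alpha'$), obtaining $\mathbb{P}(N^s_T\geq M)\leq (\log T)^{\alpha'-\alpha}=:\delta$. But the obstacle you then run into is one of your own making. By coupling $\mathcal{A}$ with a ``frozen'' algorithm $\mathcal{A}'$ you are forced into the additive bound $R^{\mathcal{A}}(T)\geq R^{\mathcal{A}'}(T)-T\,\mathbb{P}(E^c)$, and indeed $T\,\mathbb{P}(E^c)$ swamps $(\log T)^\beta$. The paper never subtracts anything: since realized regret is non-negative, it suffices to lower-bound the contribution of the good event $E=\{N^s_T< M\}$ alone. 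On $E$ the almost-sure hypothesis of Theorem~\ref{thm:lowBoundVerification} holds with the adjusted exponent, so the regret accrued there is at least $c_3(\log T)^{\beta}$, and hence
\begin{equation*}
R^{\mathcal{A}}(T)\;\geq\;(1-\delta)\,c_3(\log T)^{\beta}\;\geq\;\bigl(1-(\log 2)^{-\alpha/2}\bigr)c_3(\log T)^{\beta}\;=\;\Omega\bigl((\log T)^{\beta}\bigr).
\end{equation*}
The bad event costs only a multiplicative factor $(1-\delta)$, not an additive $T\delta$; this is the one idea your write-up is missing.

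Your fallback---rerunning the change of measure with $\mathbb{E}[N^s_T]$ placed directly in the KL bound---is not the paper's route and, as sketched, does not close the gap either. The proof of Theorem~\ref{thm:lowBoundVerification} is not a single global change-of-measure inequality: it is a pigeonhole argument over $\Theta\bigl((\log T)^{1-\alpha}\bigr)$ sub-intervals of $[T/2,T]$, and guaranteeing that enough verification-free intervals exist requires a realization-wise bound on the total number of verifications. That is exactly the role the Markov step plays, so the claim that the remainder ``transfers verbatim'' under only an expectation bound leaves the pigeonhole step unjustified; you also explicitly defer the key property of the attacker as unverified. As written, the proposal therefore does not constitute a complete proof.
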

 
 We now move to design an algorithm that matches this optimal number of verifications.
 Our algorithm is based on the following simple idea: Contamination is only effective when the contaminated reward is used for estimating the mean reward value of the arms, and therefore, influencing the learnt order of the arms. As such, any algorithm that do not need these estimates for most of the time would not suffer much from the contamination if the remaining pulls (when the observed rewards are used for mean estimation) is properly secured via verification. 
 This idea naturally lends us to the explore-then-commit (ETC) type of bandit algorithms~\citep{garivier2016explore}, where in the first phase, the algorithm aims to learn the optimal arm by solving a best arm identification (BAI) problem (exploration phase), and in the second (commit) phase, it just repeatedly pulls the learnt best arm~\citep{kaufmann2016complexity}. 
 It is clear that if the first phase is fully secured (i.e., every single pull within that phase is verified), then we can learn the best arm with high probability, and thus, can ignore the contaminations within the second phase. %
 The choice of the BAI algorithm for the exploration phase is important though. In particular, any BAI with fixed pulling budget would not work here, as they cannot guarantee logarithmic regret bounds~\citep{garivier2016explore}. 
 On the other hand, BAI with fixed confidence will suffice. In particular, we state the following:


\begin{observation}
\label{verification upper bound for Secure-ETC}
Any ETC algorithm, where the exploration phase uses BAI with fixed confidence $\delta = \frac{1}{T}$ and every single pull in that phase is verified, enjoys an expected regret bound of $O\Big(\sum_{i \neq i^*}{\log{T}}/{\Delta_i}\Big)$. In addition, the expected number of verifications is bounded above by $O\Big(\sum_{i \neq i^*}{\log{T}}/{\Delta^2_i}\Big)$.  
\end{observation}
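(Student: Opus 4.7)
The plan is to exploit the clean decoupling between exploration and commit phases that ETC provides, combined with the fact that the exploration phase is entirely verified (so the attacker cannot influence which arm is committed to). I would pick a canonical BAI-with-fixed-confidence algorithm (e.g.\ Exponential-Gap Elimination of Karnin--Koren--Somekh, or LUCB, or Track-and-Stop) for which the sample complexity at confidence $\delta$ is known to be $O\bigl(\sum_{i\neq i^*}\log(1/\delta)/\Delta_i^2\bigr)$, and set $\delta=1/T$. Because every pull in the exploration phase is verified, the inputs to the BAI subroutine are the \emph{true} rewards, so its standard correctness guarantee applies verbatim: with probability at least $1-1/T$ the algorithm outputs $i^*$, and the (random) number of exploration pulls $\tau$ satisfies $\mathbb{E}[\tau]=O\bigl(\sum_{i\neq i^*}\log T/\Delta_i^2\bigr)$ and hence is an immediate upper bound on the expected number of verifications.

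For the regret, I would split it into the exploration and commit contributions. For exploration, I use that the expected number of pulls of each suboptimal arm $i$ is $O(\log T/\Delta_i^2)$, so its regret contribution is $\Delta_i\cdot O(\log T/\Delta_i^2)=O(\log T/\Delta_i)$; summing over $i\neq i^*$ yields the claimed $O\bigl(\sum_{i\neq i^*}\log T/\Delta_i\bigr)$ term. For the commit phase, let $\hat\imath$ be the arm that BAI returns. Conditioning on the event $\{\hat\imath=i^*\}$ (probability $\geq 1-1/T$) gives zero regret from the commit phase. On the complementary event (probability $\leq 1/T$), the algorithm may pull a suboptimal arm for at most $T$ rounds, yielding at most $T\cdot\max_i\Delta_i\leq T$ regret; multiplying by the failure probability gives an $O(1)$ additive term, absorbed into the leading bound. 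Importantly, even though the attacker may contaminate the commit-phase rewards, the regret in \eqref{eq:RegretOfAlg} is defined via true rewards, and the ETC algorithm by construction ignores the observations after committing, so contamination after exploration is completely harmless.

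The one subtlety worth flagging is that $\tau$ is a random stopping time, so I would verify that the exploration regret bound goes through in expectation (using Wald-type reasoning or simply the per-arm pull-count expectation from the BAI guarantee) rather than only with high probability, and similarly that the verification count bound holds in expectation. This is routine for standard fixed-confidence BAI algorithms whose sample-complexity guarantees are already stated in expectation or hold on an event of probability $\geq 1-\delta$ with a trivial $T$ worst-case fallback. The main ``obstacle'' is really just choosing (or citing) a BAI subroutine with a clean sample-complexity bound of the desired form; once that is fixed, both bounds follow by the two-line arguments above. Since the statement is phrased as an \emph{observation}, I would keep the presentation short: cite the BAI guarantee, write the two displays for regret and verifications, and note the $\delta=1/T$ calibration.
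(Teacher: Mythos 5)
Your proof is correct and matches what the paper intends: the authors explicitly omit the proof of this observation as ``simple,'' and your argument --- verified exploration makes the BAI guarantee apply to true rewards, the per-arm sample complexity $O(\log(1/\delta)/\Delta_i^2)$ with $\delta=1/T$ gives both the verification count and the exploration regret after multiplying by $\Delta_i$, and the commit phase contributes $O(1)$ via the $1/T$ failure probability while being immune to contamination since observations are ignored --- is exactly the standard route. Your flagged subtlety about the stopping time and the worst-case-$T$ fallback on the failure event is the right thing to check and is handled correctly.
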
 
 
 We refer to the ETC algorithm enhanced with verification described in the above  observation as Secure-ETC. The proof of Observation~\ref{verification upper bound for Secure-ETC} is simple and hence omitted from the main paper. Note that this result, alongside with Theorem \ref{thm:lowBoundVerification}, show that Secure-ETC uses order-optimal number of verification, and enjoys an order-optimal expected regret, irrespective of the attacker's strategy. 
 
 The main drawback of Secure-ETC algorithms is that there is positive probability that the algorithm may keep exploring until the end time $T$. 
 While such small probability event turns out to not be an issue regarding its expected regret, one might prefer another type of algorithm which properly mix the exploration and exploitation.  
 For such interested readers, we propose another algorithm, named Secure-UCB (for Secure Upper Confidence Bound), which integrates verification into the classical UCB algorithm, and also enjoys similar order-optimal regret bounds and order-optimal expected number of verifications. Due to space limitations, we defer both the detailed description of Secure-UCB and its theoretical analysis to the appendix (see Appendix~\ref{appendix:verify with Secure-UCB} for more details). 
 However, for the sake of completeness, we state the following theorem below.
 
 \begin{theorem} \label{thm:SUCB_simple}
For all $T$ such that {$T\geq c_2\log T/\min_{i\neq i^*}\Delta^2(i)$}, Secure-UCB performs $O(\log T)$ number of verification in expectation, and the expected regret of the algorithm is  $O(\log T)$ irrespective of the attacker's strategy. Namely, 
\begin{equation}
\begin{split}
        \sum_{i\in [K]}\mathbb{E}[N^s_T(i)]&\leq
        c_3\big(\sum_{i\neq i^*}{\log T}/{\Delta^2(i)}\big), 
\end{split}
\end{equation}
\begin{equation}
\begin{split}
     R(T)&\leq
        c_4\big(\sum_{i\neq i^*}{\log T}/{\Delta(i)}\big),
\end{split}
\end{equation}
where $N^s_T(i)$ is the total number of verifications for arm $i$ until round $T$ and $c_2$, $c_3$ and $c_4$  are numerical constants (concrete values can be found in the appendix).
\end{theorem}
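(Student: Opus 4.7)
The plan is to mirror the classical regret analysis of UCB, but carried out with respect to \emph{verified} empirical means rather than raw observed means. Since Secure-UCB updates the sample-mean estimates only from verified rounds, these estimates are unaffected by the attacker's contamination, and standard Hoeffding-type concentration applies directly regardless of the adversary's strategy. I would first introduce a high-probability ``good event'' $\mathcal{E}$ under which, for every arm $i$ and every round $t$ with $N^s_{t-1}(i) \geq 1$, the verified empirical mean $\hat{\mu}^v_{t-1}(i)$ satisfies $|\hat{\mu}^v_{t-1}(i) - \mu_i| \leq \sqrt{8\log t / N^s_{t-1}(i)}$. A union bound over arms and times yields $\Pr(\mathcal{E}^c) = O(1/T)$, so the contribution of the bad event to both the expected regret and the expected number of verifications is $O(1)$ and can be absorbed into constants.

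Next, I would carry out a UCB-style pigeonhole argument, but over verification counts rather than total pull counts. Under $\mathcal{E}$, whenever a suboptimal arm $i$ is selected in place of $i^*$, its verified UCB must dominate that of $i^*$, and a standard calculation then forces $N^s_{t-1}(i) \leq c \log T / \Delta(i)^2$ for a numerical constant $c$. The verification rule of Secure-UCB (which I anticipate verifies each new pull of an arm whose verified confidence bound is still too loose to separate it from the current best verified mean) ensures that $N^s_T(i)$ tracks this quantity up to additive constants. Summing over $i \neq i^*$ immediately produces the claimed $O\!\left(\sum_{i \neq i^*}\log T/\Delta^2(i)\right)$ bound on $\sum_i \mathbb{E}[N^s_T(i)]$; the hypothesis $T \geq c_2 \log T / \min_{i\neq i^*}\Delta^2(i)$ is precisely what guarantees that the verification phase of each suboptimal arm terminates within the horizon.

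To convert the verification bound into the regret bound, I would argue that once verification for a suboptimal arm $i$ terminates, its frozen verified statistics permanently place its UCB below $\mu_{i^*}$ under $\mathcal{E}$, so Secure-UCB never pulls it again. Hence $N_T(i)$ and $N^s_T(i)$ agree up to a constant for each $i \neq i^*$, and the regret decomposition $R(T) = \sum_{i \neq i^*} \Delta(i)\,\mathbb{E}[N_T(i)]$ together with the pull bound gives the advertised $O\!\left(\sum_{i \neq i^*} \log T/\Delta(i)\right)$ regret. The main obstacle I foresee is showing that the verification \emph{trigger} itself cannot be gamed by the attacker: the rule must depend only on verified statistics and pull counts, never on contaminated observations, since otherwise the adversary could inflate $N^s_T$ arbitrarily. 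Establishing this invariant cleanly, and checking it holds even during the early ``warm-up'' rounds where Lemma~\ref{lemma:Min number of pulls} guarantees each arm is pulled often enough to initiate verification, is where I expect the bulk of the technical care to lie.
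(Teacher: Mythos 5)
Your skeleton---build all statistics from verified pulls only, apply Hoeffding to the verified means, run the standard UCB pigeonhole over verified counts, and charge regret only to verified pulls of suboptimal arms---is the same as the paper's, and the part of your argument bounding $\sum_{i\neq i^*} N^s_T(i)$ by $O(\sum_{i\neq i^*}\log T/\Delta^2(i))$ is essentially Lemma~\ref{lemma:LowerBoundsamples}. But there is a genuine gap: the theorem bounds $\sum_{i\in[K]}\mathbb{E}[N^s_T(i)]$, \emph{including the optimal arm}, and your proposal never controls $N^s_T(i^*)$. The optimal arm is pulled $\Theta(T)$ times, so you must show that its verification trigger $N_{t-1}(i^*)\leq 1200\log T/\hat\Delta^{*2}_{t-1}$ eventually fails and stays failed after only $O(\log T/\min_{i\neq i^*}\Delta^2(i))$ verified pulls. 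This requires a \emph{lower} bound $\hat\Delta^*_t\geq 0.2\min_{i\neq i^*}\Delta(i)$ on the data-dependent gap estimate (Lemmas~\ref{lemma:8} and~\ref{lemma:whpBound}), which in turn needs the burn-in argument that by time $f(T)$ every arm has been pulled often enough for the confidence intervals in \eqref{eq:delta_min} to separate $i^*$ from the rest; the hypothesis $T\geq c_2\log T/\min_{i\neq i^*}\Delta^2(i)$ exists to make this burn-in fit inside the horizon, not (as you suggest) to terminate the suboptimal arms' verification.

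A second, related inaccuracy: in the actual algorithm a suboptimal arm's verification never ``terminates''---the paper shows its counted pulls stay below $900\log T/\Delta^2(i)+1$, which is below the threshold $1200\log T/\hat\Delta^{*2}_t$ precisely because of the complementary \emph{upper} bound $\hat\Delta^*_t\leq\min_{i\neq i^*}\Delta(i)$ (Lemma~\ref{lemma:DeltaBoundEstimate}). The load-bearing claim is the converse of yours: every \emph{unverified} round selects $i^*$ (so unverified rounds contribute zero regret), rather than ``frozen suboptimal arms are never pulled again.'' If $\hat\Delta^*_t$ could overestimate the gap, a suboptimal arm would exit the verified regime early with an optimistically frozen index that never updates, and it could then be pulled linearly often. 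So the key technical content you are missing is the two-sided control of $\hat\Delta^*_t$ (not too large, so suboptimal arms keep getting verified; not too small after $f(T)$, so $i^*$ stops getting verified), and without it neither the verification bound for $i^*$ nor the safety of the unverified regime follows.
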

It is worth noting that due to the sequential nature of UCB, designing a UCB-like algorithm with verification is far from trivial and therefore its technical analysis is significantly more involved.

\subsection{Saving Bandits with Limited Verifications} 
\label{subsec:limited verification}
 
While unlimited verification can completely restore the original regret bounds, we will show next that this is unfortunately not the case if the number of verification are  bounded. In particular, we state this negative result. 

\begin{theorem}\label{thm:LowerBoundFixed Budget}
Consider an attacker with unlimited contamination budget. For any $T$, $K\geq 2$ and $N^s_T\geq K$, if the total number of verifications performed until round $T$ is at most $N^s_T$, then there exists a distribution over the assignment of rewards such  that the expected \emph{gap-independent} regret of any learning algorithm is at least 
\begin{equation}
    R(T)\geq cT\sqrt{K/{N^s_T}}.
\end{equation}
where $c$ is a numerical constant. In addition, for any $T$, $K\geq 2$, and  $N^s_T\geq K$, there exists a distribution over the assignment of rewards such  that the expected  cost, defined as the sum of expected regret and the number of verifications, of any learning algorithm is at least $\Omega(T^{2/3})$. 
\end{theorem}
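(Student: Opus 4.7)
The plan is to reduce the problem, via a simple attacker strategy, to a classical stochastic bandit minimax lower bound whose ``effective sample size'' is $N^s_T$. Specifically, at every round $t$ in which the learner does not verify, the attacker sets $\epsilon_t(i_t) = 1/2 - r_t(i_t)$, which is valid since it keeps $r^o_t(i_t) = 1/2 \in [0,1]$ and the attacker has unlimited contamination budget. Under this attack, unverified rounds return the constant observation $1/2$ and therefore carry no information about the arm distributions; the learner's actions depend on the true reward distributions only through at most $N^s_T$ verified samples.

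With this reduction in hand, I would invoke a standard needle-in-a-haystack construction in the spirit of the classical $\Omega(\sqrt{KT})$ minimax proof (via Le Cam / Bretagnolle--Huber). Construct a family of $K$ Bernoulli instances $\nu^{(1)},\dots,\nu^{(K)}$ together with a reference $\nu^{(0)}$: in $\nu^{(j)}$, arm $j$ has mean $1/2 + \Delta$ and every other arm has mean $1/2$, while in $\nu^{(0)}$ all arms have mean $1/2$. Let $N^s_T(i)$ be the number of verified pulls of arm $i$, so $\sum_i N^s_T(i) \leq N^s_T$ almost surely. Under $\nu^{(0)}$, pigeonhole yields an arm $j^*$ with $\mathbb{E}_{0}[N^s_T(j^*)] \leq N^s_T/K$. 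A KL chain-rule bound plus Pinsker's inequality (equivalently, Bretagnolle--Huber) shows that, because information can only flow through the $N^s_T(j^*)$ verified pulls of $j^*$, the total variation between the algorithm's law under $\nu^{(j^*)}$ and under $\nu^{(0)}$ is at most $O(\Delta\sqrt{N^s_T/K})$. Choosing $\Delta = c_0 \sqrt{K/N^s_T}$ for a small absolute constant $c_0$, the algorithm under $\nu^{(j^*)}$ pulls $j^*$ at most, say, $2T/K$ times in expectation, producing regret $\Omega(T\Delta) = \Omega(T\sqrt{K/N^s_T})$, which yields the first claim.

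For the second claim, let $N = \mathbb{E}[N^s_T]$ be the expected number of verifications the algorithm uses. A Markov-style truncation (cap verifications at $2N$ and absorb the constant into the lower bound) converts the first part into $R(T) \geq c' T \sqrt{K/N}$. Then the expected cost satisfies $R(T) + N \geq c' T \sqrt{K/N} + N$, and minimizing the right-hand side over $N > 0$ gives an optimum at $N = \Theta((T\sqrt{K})^{2/3})$ with value $\Theta(T^{2/3}K^{1/3})$. Since $K \geq 2$, this is $\Omega(T^{2/3})$, establishing the second claim.

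The main obstacle is the adaptivity of the learner's verification schedule: the counts $N^s_T(i)$ are random and depend on the realized trajectory, so the ``under-verified'' arm $j^*$ supplied by pigeonhole is algorithm-dependent. I would handle this by defining $j^*$ as the arm minimizing $\mathbb{E}_{\nu^{(0)}}[N^s_T(j)]$ (a well-defined deterministic quantity), and performing the change-of-measure only after this selection. A secondary subtlety is passing between an almost-sure bound on $N^s_T$ (used in the first claim) and a bound on its expectation (needed for the cost optimization); this is where the truncation via Markov's inequality enters, with only constant loss in the final exponents.
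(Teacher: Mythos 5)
Your core reduction is exactly the one the paper uses: under an attack that replaces every unverified observation with an uninformative value, the KL divergence between instances accumulates only over verified pulls, so the effective sample size is $N^s_T$ and a needle-in-a-haystack construction with gap $\Delta \asymp \sqrt{K/N^s_T}$ gives the first claim; the $\Omega(T^{2/3})$ cost bound then follows by optimizing $T\sqrt{K/N}+N$ over $N$, which is also the intended route (the paper's appendix only writes out the first claim explicitly). However, there is a genuine gap in your selection step. You choose $j^*$ to minimize $\mathbb{E}_{\nu^{(0)}}[N^s_T(j)]$ and then assert that under $\nu^{(j^*)}$ the algorithm pulls $j^*$ at most $2T/K$ times in expectation. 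That does not follow: the change of measure only gives $\mathbb{E}_{j^*}[N_T(j^*)] \leq \mathbb{E}_{0}[N_T(j^*)] + T\cdot \mathrm{TV}$, and the pigeonhole on \emph{verified} pulls says nothing about $\mathbb{E}_{0}[N_T(j^*)]$. Concretely, consider the algorithm that pulls arm $1$ at every round and spends its verifications on arms $2,\dots,K$: then $\mathbb{E}_{0}[N^s_T(1)]=0$, so your rule selects $j^*=1$, yet under $\nu^{(1)}$ this algorithm has zero regret. A joint pigeonhole (an arm with both $\mathbb{E}_0[N^s_T(j)]\leq 2N^s_T/K$ and $\mathbb{E}_0[N_T(j)]\leq 2T/K$) exists for $K\geq 3$ but still degenerates at $K=2$, where $T-2T/K=0$.

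The clean fix — and what the paper actually does — is to average over the $K$ alternatives rather than select one: take the instance to be the uniform mixture over which arm is elevated, bound $\frac{1}{K}\sum_i \mathbb{E}_i[N_T(i)] \leq \frac{1}{K}\sum_i \mathbb{E}_0[N_T(i)] + \frac{2\epsilon T}{K}\sum_i\sqrt{c\,\mathbb{E}_0[N^s_T(i)]}$, and then use $\sum_i \mathbb{E}_0[N_T(i)] = T$ together with Cauchy--Schwarz, $\sum_i \sqrt{\mathbb{E}_0[N^s_T(i)]} \leq \sqrt{K N^s_T}$. This yields $R(T) \geq \epsilon\big(T - T/K - 2\epsilon T\sqrt{N^s_T\log(4/3)/K}\big)$, which is $\Omega(T\sqrt{K/N^s_T})$ for all $K\geq 2$ after setting $\epsilon = \tfrac14\sqrt{K/N^s_T}$. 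Your treatment of the second claim is fine, and in fact the Markov truncation is not needed: the paper's bound is already stated in terms of $\mathbb{E}_0[N^s_T(i)]$, so it applies directly when only the expected number of verifications is bounded.
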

We remark that the goal of Theorem \ref{thm:LowerBoundFixed Budget} 
is to demonstrate that, unlike the unlimited verification case in subsection \ref{subsec: unlimited verification}, here it is impossible to fully recover from the attack --- in the sense of achieving order optimal regret bounds as in the original bandit setting without attacks --- if $B \in o(T)$, and this motivates our following study (Theorem \ref{thm:Secure-BARBAR regret bound}) of developing regret bounds that scale with the budget $B$. For this purpose, it suffices to have a gap-independent lower bound as in Theorem \ref{thm:LowerBoundFixed Budget}. Nevertheless, we acknowledge that an interesting  research question is to see whether one can achieve a gap-dependent lower bound. This is out of the scope of our current paper and is an independent open question.

\begin{algorithm}[t]
\begin{algorithmic}[1]
\STATE \textbf{Input}: confidences $\beta, \delta \in (0,1)$, time horizon $T$, verification budget $B$
\STATE Set $n^B_i = \Big\lfloor {B}/{K} \Big \rfloor$, $T_0 = B$, $\Delta^0_i = 1$ for all $i \in [K]$, and $\lambda = 1024\ln(\frac{8K}{\delta}\log_2{T})$

\FOR{ epochs $m = 1,2,\dots$}

\STATE Set $n_i^m = \lambda (\Delta_i^{m-1})^{-2}$ for all $i \in [K]$,
$N_m = \sum_{i=1}^{K}n^m_i$, and $T_m = T_{m-1} + N_m$
\FOR{$t= T_{m-1}$ \TO $T_m$  }
\STATE choose arm $i$ with probability $n^m_i/N_m$ and pull it
\STATE if $n^B_i > 0$ then \emph{verify the pull} (i.e., \emph{observe the true reward}), and reduce $n^B_i$ by $1$
\ENDFOR
\STATE Let $S^m_i$ be the total \emph{observed} rewards from pulls of arm $i$ within epoch $m$ (including both verified and unverified ones)

\STATE \textbf{If} $\;$ \emph{all} the pulls of arm $i$ were verified in epoch $m$ \textbf{then} $r^m_i = S^m_i/n^m_i$ 
\STATE \textbf{Else if} $S^m_i/n^m_i \geq \mu_i^B$ \textbf{then}
    $r^m_i = \min \Big\{S^m_i/n^m_i, \mu^B_i + \frac{\Delta^{m-1}_i}{16} + \sqrt{\frac{\ln{2/\beta}}{2n_B}}\Big\}$
\STATE \textbf{Else}
    $r^m_i = \max \Big\{S^m_i/n^m_i, \mu^B_i - \frac{\Delta^{m-1}_i}{16} - \sqrt{\frac{\ln{2/\beta}}{2n_B}}\Big\}$
\STATE Set $r^{m}_{*} = \max_{i}\{r_i^{m} - \Delta_i^{m-1}/16\}$, $\Delta^m_i = \max\{2^{-m}, r^{m}_{*} - r_i^{m}\}$
\ENDFOR
\caption{Secure-BARBAR}
\label{alg:Secure-BARBAR}
\end{algorithmic}
\end{algorithm}

Now, this impossibility result relies on the assumption that the attacker has an unlimited contamination budget (or amount of contamination). 
One might ask what would happen if the attacker is also limited by a contamination budget $C$ as typically assumed in the relevant literature~\citep{gupta2019better,bogunovic2020stochastic,lykouris2018stochastic}.

We now turn to the investigation of this setting in more detail where  contamination budget is at most $C$. To start with, we assume for now that the attacker can only place the contamination before seeing the actual actions of the bandit algorithm. We refer to this type of attackers as \emph{weak} attackers, as opposed to the ones we have been dealing with in this paper (see Section 5 for a comprehensive comparison of different attacker models).  
We describe an algorithm that addresses this case in a provably efficient way. 
In particular, we introduce Secure-BARBAR (Algorithm~\ref{alg:Secure-BARBAR}), which is built on top of the BARBAR algorithm proposed by~\cite{gupta2019better}. The key differences are: (i) Secure-BARBAR sets up a verification budget $n^B_i$ for each arm $i$ and verify that arm until this budget deplets (lines $6-7$); and (ii) use these reward estimate to adjust the estimates ( lines $9-13$). By doing so, we achieve the following result:
\begin{theorem}
\label{thm:Secure-BARBAR regret bound}
With probability at least $1-\delta - \beta$, the regret of Secure-BARBAR against any weak attackers with contamination budget $C$ is bounded by
\begin{equation}
\begin{split}
    &O\bigg(K\min\Big\{C, \frac{T \log{\frac{2}{\beta}}\ln(\frac{8K}{\delta}\log_2{T})}{\sqrt{B/K}}\Big\} \\
    &\qquad \qquad + \sum_{i \neq i^*}\frac{\log{T}}{\Delta_i}\log{\Big(\frac{K}{\delta}\log{T}\Big)}\bigg).
\end{split}
\end{equation}
\end{theorem}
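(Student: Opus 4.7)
The proof extends the analysis of BARBAR~\citep{gupta2019better} by exploiting the clipping step in lines 10--12 of Secure-BARBAR to cap the per-epoch damage the attacker can inflict, even when the contamination budget $C$ is large. I would first condition on the intersection of two good events. The \emph{verification event} $\mathcal{E}_v=\{|\mu^B_i-\mu_i|\leq \sqrt{\ln(2/\beta)/(2n_B)}\text{ for all } i\}$, where $\mu^B_i$ denotes the empirical mean of the $n_B=\lfloor B/K\rfloor$ verified pulls of arm $i$, holds with probability at least $1-\beta$ by Hoeffding plus a union bound; crucially, under the weak-attacker model the verified samples are genuine i.i.d.\ draws from the true distribution. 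The \emph{BARBAR concentration event} $\mathcal{E}_c$, that the uncontaminated portion of each epoch-$m$ empirical mean lies within $\Delta^{m-1}_i/32$ of $\mu_i$, holds with probability at least $1-\delta$ by the choice of $\lambda$ and a union bound over epochs and arms.

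On $\mathcal{E}_v\cap\mathcal{E}_c$, I would prove the key per-epoch estimate lemma: for every epoch $m$ and arm $i$,
\begin{equation*}
|r^m_i-\mu_i|\leq \tfrac{1}{8}\Delta^{m-1}_i + \min\Bigl\{|c^m_i|/n^m_i,\ 2\sqrt{\ln(2/\beta)/(2n_B)}\Bigr\},
\end{equation*}
where $c^m_i$ is the total contamination added to pulls of arm $i$ in epoch $m$. A case split on whether the clipping in lines 11--12 was triggered proves this: if it was not, then $r^m_i$ is the raw empirical mean and its error is the BARBAR concentration plus $|c^m_i|/n^m_i$; if it was, then $|r^m_i-\mu^B_i|\leq \Delta^{m-1}_i/16+\sqrt{\ln(2/\beta)/(2n_B)}$, which combined with $\mathcal{E}_v$ yields the verification-based bound. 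The $\min$ is the essential new ingredient relative to vanilla BARBAR.

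With this modified estimate bound, BARBAR's induction on the estimated gaps goes through essentially unchanged: the $\Delta^m_i$ remain within constant multiplicative factors of the true $\Delta_i$, and the total regret decomposes into (a) the standard gap-dependent term $\sum_{i\neq i^*}(\log T/\Delta_i)\cdot\log(K\log T/\delta)$, plus (b) a contamination contribution of order
\begin{equation*}
K\sum_{m,i}\min\Bigl\{|c^m_i|,\ n^m_i\cdot 2\sqrt{\ln(2/\beta)/(2n_B)}\Bigr\}\leq K\min\Bigl\{C,\ 2T\sqrt{K\ln(2/\beta)/(2B)}\Bigr\},
\end{equation*}
since $\sum_{m,i}|c^m_i|\leq C$ and $\sum_{m,i}n^m_i\leq T$. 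The prefactor $K$ is inherited from BARBAR, where an estimate error on one arm propagates through the ``best-arm'' comparison into every arm's sampling rate. Substituting $n_B=\lfloor B/K\rfloor$ yields the stated bound.

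The main technical obstacle is reconciling the non-vanishing verification error $\sqrt{K/B}$ with BARBAR's induction, which was designed for per-epoch errors that shrink with $\Delta^{m-1}_i$. This is handled by observing that once $\Delta^{m-1}_i$ falls below the verification resolution $\sqrt{K/B}$, further refinement of arm $i$'s estimate is futile, and from then on the $\min$ bound charges only $\sqrt{K/B}$ per pull---precisely what the stated regret already absorbs. A secondary delicate point is that the weak-attacker assumption is essential here: it guarantees both that the $n_B$ verified samples per arm are genuine i.i.d.\ draws of the true reward (so Hoeffding applies to $\mu^B_i$) and that the clean portion of each epoch's empirical mean concentrates as required for $\mathcal{E}_c$.
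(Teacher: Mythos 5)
Your proposal is correct and follows essentially the same route as the paper's proof: a Hoeffding bound on the verified baseline $\mu^B_i$, a per-epoch case analysis on the clipping step yielding $|r^m_i-\mu_i|\leq O(\Delta^{m-1}_i)+\min\{\text{contamination term},\,O(\sqrt{\ln(2/\beta)/n_B})\}$, and then BARBAR's epoch recursion summed over the two branches of the $\min$. The only differences are bookkeeping ones (the paper tracks the contamination via $2C_m/N_m$ from BARBAR's martingale lemma rather than your per-arm $|c^m_i|/n^m_i$, and carries the explicit $\lambda$ prefactor through the recursion), none of which change the argument or the stated bound.
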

The regret bound is of $\tilde{O}\bigg(\min\Big\{C, { T\log{({2}/{\beta})}}/{\sqrt{B}}\Big\}\bigg)$, which breaks the known $\Omega(C)$ lower bound of the non-verified setting if $C$ is large \cite{gupta2019better}. 

\paragraph{A note on efficient verification schemes against strong attackers.}
In the case of strong attackers, with a careful combination of the idea described in Secure-BARBAR to incorporate the verified pulls into the estimate of the average reward at each round (lines $9-12$ in Algorithm~\ref{alg:Secure-BARBAR}), and the techniques used in the proof of Theorem 1 from ~\cite{bogunovic2020stochastic} \footnote{The key step is to replace Lemma 1 from~\cite{bogunovic2020stochastic} with a verification aware version, using similar ideas applied in the proof of Theorem~\ref{thm:Secure-BARBAR regret bound}.}, we can prove the following result:
With probability at least $1-\delta - \beta$, we can achieve a regret upper bound of $\tilde{O}\bigg(\min\big\{C, { T\log{({2}/{\beta})}}/{\sqrt{B}}\big\}\log{T}\bigg)$. This can be done by modifying the Robust Phase Elimination (RPE) algorithm described in~\cite{bogunovic2020stochastic} with the verification and estimation steps from Algorithm~\ref{alg:Secure-BARBAR}. 
The drawback of this approach is that it only works when the contamination budget $C$ is known in advance. 
Although~\cite{bogunovic2020stochastic} have also provided a method against strong attackers with unknown contamination budget $C$, their method can only achieve $\tilde{O}(C^2)$ under some restrictive constraints (e.g., $C$ has to be sufficiently small). In addition, it is not clear how to incorporate our ideas introduced for Secure-BARBAR to that approach in an efficient way (i.e., to significantly reduce the regret bound from $\tilde{O}(C^2)$). 
Given this, it remains future work to derive an efficient verification method against strong attackers with unknown contamination budget $C$, which can yield regret bounds better than $\tilde{O}(C^2)$.

\section{Comparison of Attacker Models}
\label{sec:comparison of attacker models}

This section provides a more detailed comparison between the different attacker models from the (robust bandits) literature and their corresponding performance guarantees. In particular, at each round $t$, a \emph{weak attacker} has to make the contamination \emph{before} the actual action is chosen. On the other hand, a \emph{strong attacker} can observe both the chosen actions and the corresponding rewards before making the contamination.
From the perspective of contamination budget (or the amount of contamination), it can either be bounded above surely by a threshold, or that bound only holds in expectation. We refer to the former as \emph{deterministic budget}, while we call the latter as \emph{expected budget}.  
To date, the following three attacker models have been studied: (i) weak attacker with deterministic budget; (ii) strong attacker with deterministic budget; and (iii) strong attacker with expected budget.

\paragraph{Weak attacker with deterministic budget.} For this attacker model, \cite{gupta2019better} have proposed a robust bandit algorithm (called BARBAR) that provably achieves $O(KC + (\log{T})^2)$ regret against a weak attacker with (unknown) deterministic budget $C$. They have also proved a matching regret lower bound of $\Omega(C)$. These results imply that in order to successfully attack BARBAR (i.e., to force a $\Omega(T)$ regret), a weak attacker with deterministic budget would need a contamination budget of $\Omega(T)$.

\paragraph{Strong attacker with deterministic budget.}
\cite{bogunovic2020stochastic} have shown that there is a phased elimination based bandit algorithm that achieves $O(\sqrt{T} + C\log{T})$ regret if $C$ is known to the algorithm, and 
$O(\sqrt{T} + C\log{T} + C^2)$ if $C$ is unknown. Note that by moving from the weaker attacker model to the stronger one, we suffer an extra loss in terms of achievable regret (i.e., from $O(C)$ to $O(C^2)$) in case of unknown $C$. While the authors have also proved a matching regret lower bound of $\Omega(C)$ for the known budget case, they have not provided any similar results for the case of unknown budget. Nevertheless, their results show that in order to successfully attack their algorithm, an attacker of this type would need a contamination budget of $\Omega(T)$ for the case of known contamination budget, and $\Omega(\sqrt{T})$ if that budget is unknown.

\paragraph{Strong attacker with expected budget.}
Our Proposition~\ref{thm:constantAttack} shows that this attacker can successfully attack any order-optimal algorithm with a $O(\log{T})$ expected contamination budget (note that~\cite{liu2019data} have also proved a similar, but somewhat weaker result). We have also provided a matching  lower bound on the necessary amount of expected contamination budget against UCB. {It is worth noting that if the rewards are unbounded, then the attacker may use even less amount contamination (e.g., $O(\sqrt{\log{T}})$) to achieve a successful attack~\citep{zuo2020near}.}

\paragraph{{Saving} bandit algorithms with verification.}
The above mentioned results also indicate that if an attacker uses a contamination budget $C$ (either deterministic or expected), the regret that any (robust) algorithm would suffer is $\Omega(C)$. A simple implication of this is that if an attacker has a budget of $\Theta(T)$ (e.g., he can contaminate all the rewards), then no algorithm can maintain a sub-linear regret if they can only rely on the observed rewards. Secure-ETC, Secure-UCB, and Secure-BARBAR break this barrier of $\Omega(C)$ regret with verification. In particular, the former two still enjoy an order-optimal regret of $O(\log{T})$ against any attacker (even when they have $\Theta(T)$ contamination budget) while only using $O(\log{T})$ verifications. The latter, when playing against a weak attacker, still suffers a swift increase in the regret as $C$ is increased. But this increase is not linear in $C$ as in the non-verified setting.

\section{Conclusions}
\label{sec: conclusions}

In this paper we   introduced a reward verification model for bandits to counteract against data contamination attacks. In particular, we contributions can be grouped as follows:
We first revisited the analysis of strong attacker and proved the first attack lower bound of $\Theta(\log{T})$ expected number of contaminations for a successful attack. This lower bound is shown to be tight with our  oblivious attack scheme, the contamination of which matches the lower bound. 
We then move to verification based approaches with unlimited verification, where we first provided two algorithms, Secure-ETC and Secure-UCB, which can recover any attacks with logarithmic number of verifications. We also provided a matching lower bound on the number of verifications. 
For the case of limited verifications, we first showed that full recovery is impossible if the attacked has unlimited contamination budget, unless the verification budget $B = \Theta(T)$. 
In case the attacker is also limited by a budget $C$, we proposed Secure-BARBAR, which achieves a regret lower than the $\Theta(C)$ regret barrier, if used against a weak attacker.

For future research, when facing a strong attacker with contamination budget $C$, we briefly discussed how a similar idea from Secure-BARBAR with limited verification can be used to achieve a regret bound better than $O(C\log{T})$.   
However, this idea requires that $C$ is known in advance. It is  an open question whether for the case of unknown $C$ we can get a similar regret bound that is better than the regret we can achieve for the non-verified case. Second,   
since bounding the contamination in expectation and almost surely leads to different results (see Section \ref{sec:comparison of attacker models}), it would be interesting to study the setting where number of verifications is bounded almost surely. Third, another interesting extension is a \emph{partial feedback verification} model, where the learner can only request a feedback about whether the observed reward is corrupted or not but cannot see the true reward. Finally, extending our study to RL is an intriguing future direction.   

\bibliographystyle{unsrtnat}
\bibliography{references}  





\newpage
\onecolumn
\appendix
\section{Upper Confidence Bound}\label{append:AlgUCB}

\begin{algorithm}
\begin{algorithmic}[h]
\STATE For all $i\in[K]$, initialize $\hat{\mu}_0(i)=0$, $N_0(i)=0$.
\FOR{ $t\leq K$}
\STATE Choose action $i_t=t$, and observe  $r_t(i_t)$.
\STATE Update $\hat{\mu}_{t}(i_t)=r^o_t(i_t)$, $N_{t}(i_t)=N_{t-1}(i_t)+1$ 
\STATE For all $i \not =  i_t$, $\hat{\mu}_{t}(i)=\hat{\mu}_{t-1}(i)$, $N_{t}(i)=N_{t-1}(i)$.
\ENDFOR
\FOR{$K+1\leq t\leq T$}
\STATE Choose action $i_t$  such that
        \begin{equation}\label{eq:UCBCond}
             i_t= \mbox{argmax}_{i\in[K]} \bigg[ \hat{\mu}_{t-1}(i)+ \sqrt{\frac{8\log t}{N_{t-1}(i)}} \bigg] .
        \end{equation}
\STATE Update $N_{t}(i_t)=N_{t-1}(i_t)+1$, and
\begin{equation}
        \hat{\mu}_{t}(i_t)=\frac{\hat{\mu}_{t-1}(i_t)\cdot N_{t-1}(i_t)+ r^o_{t}(i_t)}{N_{t-1}(i_t)+1}.
    \end{equation}
\STATE  For all $i\in [K]\setminus i_t$, $\hat{\mu}_{t}(i)=\hat{\mu}_{t-1}(i)$ and $N_{t}(i)=N_{t-1}(i)$. 
\ENDFOR
\caption{(Classical) Upper Confidence Bound}
\label{alg:UCB}
\end{algorithmic}
\end{algorithm}

\section{Proof of Proposition \ref{thm:constantAttack}}\label{append:const-attack} 
 Let $N_{t}(i)$ be the number of times action $i$ is chosen by the learner until time $t$, namely
\begin{equation}
    N_{t}(i)=\sum_{n=1}^{t}\mathbf{1}(i_n=i).
\end{equation}
Then, we have that
\begin{equation}\label{eq:manipSum}
  \sum_{t=1}^T \mathbf{1}(\epsilon_t(i_t)\neq 0)\leq \sum_{i\neq i_A}N_{T}(i).
\end{equation}
Using \eqref{eq:attackStrategy1}, for all $i\in [K]\setminus i_A$ and $t\leq T$, we have that 
\begin{equation}\label{eq:expectedRewardi}
    \mathbb{E}[r^o_t(i)]=0,
\end{equation}
and 
\begin{equation}\label{eq:expectedRewardiA}
    \mathbb{E}[r^o_t(i_A)]=\mu_{i_A}. 
\end{equation}
Since the algorithm $\mathcal{A}$ makes decision based on the $r_t^o(.)$, using \eqref{eq:algo1}, \eqref{eq:expectedRewardi} and \eqref{eq:expectedRewardiA}, we have that
\begin{equation}\label{eq:regretBasedonObservation}
    \mathbb{E}[T\mu_{i_A}-\sum_{t=1}^Tr^o_t(i_t)]=O\bigg(\frac{(K-1)\log^\alpha(T)}{\mu_{i_A}^\beta}\bigg).
\end{equation}
Also, we have 
\begin{equation}\label{eq:contaminations}
\begin{split}
    \mathbb{E}[T\mu_{i_A}-\sum_{t=1}^{T}r^o_{t}(i_t)]&\stackrel{(a)}{=}\mu_{i_A} \mathbb{E}[\sum_{i\neq i_A} N_T(i)],
\end{split}
\end{equation}
where $(a)$ follows from the fact that $\Delta(i)=\mu_{i_A}$ for the learner. 
This along with \eqref{eq:regretBasedonObservation} implies that
\begin{equation}\label{eq:manipBound1}
   \mathbb{E}[ \sum_{i\neq i_A} N_T(i)]=O\bigg(\frac{(K-1)\log^\alpha(T)}{\mu_{i_A}^{\beta+1}}\bigg).
\end{equation}
Now, we have
\begin{equation}
\mathbb{E}[N_{T}(i_A)]=T-\sum_{i\neq i_A}\mathbb{E}[N_T(i)],
\end{equation}
which  using \eqref{eq:regretBasedonObservation} and \eqref{eq:contaminations}, 
implies the  attack is successful, i.e., $\mathbb{E}[\sum_{t=1}^T \mathbf{1}(i_t= i_A)]=\Omega(T)$.   
Combining \eqref{eq:manipSum} and \eqref{eq:manipBound1}, we have
\begin{equation}
     \mathbb{E}[\sum_{t=1}^T \mathbf{1}(\epsilon_t(i_t)\neq 0)]=O\bigg(\frac{(K-1)\log^\alpha(T)}{\mu^{\beta+1}_{i_A}}\bigg).
\end{equation}
Hence, the statement of the proposition follows. 
\section{Attacks Based on Gap Estimation}\label{append:gap-attack}

The attack is similar to the ACE attack in \cite{ma2018data}. Specifically, the attacker maintains an estimate $\hat{\Delta}_t^A(i,i_A)$ of $\mu_i-\mu_{i_A}$ using the previously selected actions and their rewards, namely
\begin{equation}
    \hat{\Delta}_t^A(i,i_A)= \hat{\mu}_t( i)+\sqrt{\frac{2\log t}{  \tilde N_t( i)}}- \hat{\mu}_t( i_A)+\sqrt{\frac{2\log t}{   \tilde N_t( i_A)}},
\end{equation}
where
\begin{equation}
    \hat{\mu}_t(i)=\frac{\sum_{n=1}^t r_n(i)\mathbf{1}(i_n=i)}{\sum_{n=1}^t\mathbf{1}(i_n=i)}, 
\end{equation}
and $ \tilde{N}_t(i)=\sum_{n=1}^t\mathbf{1}(i_n=i). $
In this attack,  we have 
\begin{equation}\label{eq:attackStrategy2}
    r_t^o(i_t)=\begin{cases} r_t(i_t)-2\max\{0, \hat{\Delta}_t^A(i_t,i_A)\} &\mbox{ if } i_t\neq i_A,\\
    r_t(i_t) 
    &\mbox{ if } i_t= i_A.
    \end{cases}
\end{equation}
This implies that for all $t\leq T$,  the noise added by the attacker is 
\begin{equation}
    \epsilon_t(i_t)=-2\max\{0, \hat{\Delta}_t^A(i_t,i_A)\}\mathbf{1}(i_t\neq i_A).
\end{equation}
In this attack, for all action $i\neq i_A$  such that $\mu_i>\mu_{i_A}$, the attacker forces the expected observed reward to be at most $\mu_{i_A}-(\mu_i-\mu_{i_A})$, and for all action $i\neq i_A$ such that $\mu_i<\mu_{i_A}$, the attacker forces the expected observed reward to be at most $\mu_i$. Therefore, this strategy ensures that the optimal action is $i_A$ based on the observed rewards. The following proposition establishes  the success of the attack, and provides an upper bound on the expected number of contaminations needed by the attacker.

\begin{proposition}\label{thm:GapEstimationBasedAttack}
For any stochastic bandit algorithm $\mathcal{A}$ with expected regret in the absence of attack given by
\begin{equation}\label{eq:algo1_1}
    R^{\mathcal{A}}=O\bigg(\sum_{i\neq i^*}\frac{\log^\alpha(T)}{\Delta^\beta(i)}\bigg),
\end{equation}
where $\alpha\geq 1$ and $\beta\geq 1$; and for any sub-optimal target action $i_A\in [K]\setminus i^*$, if an attacker follows strategy \eqref{eq:attackStrategy2}, then it will use an expected number of attacks 
\begin{equation}
\begin{split}
     \mathbb{E}[\sum_{t=1}^T \mathbf{1}(\epsilon_t(i_t)\neq 0)]=O\bigg(\sum_{i\neq i_A}\frac{\log^\alpha(T)}{|\mu_{i_A}-\mu_i|^\beta (\min_{i^\prime\neq i_A}|\mu_{i^\prime}-\mu_{i_A}|)}\bigg),
\end{split}
\end{equation}
and it will force $\mathcal{A}$ to select the action $i_A$ at least $\Omega(T)$ times in expectation, namely $\mathbb{E}[\sum_{t=1}^T \mathbf{1}(i_t= i_A)]=\Omega(T)$.
\end{proposition}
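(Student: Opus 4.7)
My plan is to mirror the proof of Proposition~\ref{thm:constantAttack} in Appendix~\ref{append:const-attack}, adapted to the gap-estimation attack \eqref{eq:attackStrategy2}. Unlike the zero-out attack, where the observed rewards are trivially i.i.d., here one must first argue that, with high probability, the attack makes the learner effectively face a bandit instance in which $i_A$ is optimal with a sub-optimality gap at least $|\mu_i - \mu_{i_A}|$ for every arm $i \neq i_A$. Once this is established, applying the algorithm's regret guarantee \eqref{eq:algo1_1} to this ``observed'' bandit yields the desired bound.

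First, I would establish a concentration statement for the estimated gap $\hat{\Delta}_t^A(i, i_A)$. By Hoeffding's inequality and a union bound over $(i,t)$, on a ``good'' event $\mathcal{G}$ of probability at least $1 - O(1/T)$ we have $\hat{\mu}_t(i) + \sqrt{2\log t / \tilde N_t(i)} \geq \mu_i$ for every $i$ and every $t$, so $\hat{\Delta}_t^A(i, i_A) \geq \mu_i - \mu_{i_A}$ at every round. Consequently, for any $i \neq i_A$, the conditional expected observed reward satisfies $\mathbb{E}[r_t^o(i) \mid \mathcal{F}_{t-1}] \leq \mu_{i_A} - |\mu_i - \mu_{i_A}|$: when $\mu_i > \mu_{i_A}$ this follows from $\epsilon_t(i) \leq -2(\mu_i - \mu_{i_A})$, and when $\mu_i \leq \mu_{i_A}$ the bound $\mu_i = \mu_{i_A} - |\mu_i - \mu_{i_A}|$ holds even without any attack. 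Meanwhile $\mathbb{E}[r_t^o(i_A) \mid \mathcal{F}_{t-1}] = \mu_{i_A}$.

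Next, viewing the learner as facing an observed bandit with effective mean vector $\tilde{\mu}_{i_A} = \mu_{i_A}$ and $\tilde{\mu}_i \leq \mu_{i_A} - |\mu_i - \mu_{i_A}|$, the assumption \eqref{eq:algo1_1} applied with observed gaps $\tilde{\Delta}(i) \geq |\mu_i - \mu_{i_A}|$ gives an observed expected regret
\begin{equation*}
    \tilde R(T) \;=\; \mathbb{E}\Big[\textstyle\sum_{i\neq i_A} N_T(i)\,(\mu_{i_A} - \tilde\mu_i)\Big] \;=\; O\bigg(\sum_{i \neq i_A} \frac{\log^\alpha T}{|\mu_i - \mu_{i_A}|^\beta}\bigg).
\end{equation*}
Since $\mu_{i_A} - \tilde\mu_i \geq \min_{i' \neq i_A} |\mu_{i'} - \mu_{i_A}|$ uniformly, dividing through gives
\begin{equation*}
    \mathbb{E}\Big[\textstyle\sum_{i \neq i_A} N_T(i)\Big] \;\leq\; \frac{\tilde R(T)}{\min_{i' \neq i_A} |\mu_{i'} - \mu_{i_A}|} \;=\; O\bigg(\sum_{i \neq i_A} \frac{\log^\alpha T}{|\mu_i - \mu_{i_A}|^\beta \cdot \min_{i' \neq i_A} |\mu_{i'} - \mu_{i_A}|}\bigg).
\end{equation*}
The number of attacks is at most $\sum_{i \neq i_A} N_T(i)$ because the attacker is inactive whenever $i_t = i_A$, and $\mathbb{E}[N_T(i_A)] = T - \mathbb{E}[\sum_{i \neq i_A} N_T(i)] = \Omega(T)$ follows immediately.

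The main technical obstacle is that the attack is history-dependent, so the observed rewards are not i.i.d.\ and \eqref{eq:algo1_1} does not apply verbatim. I would handle this by decomposing $\epsilon_t(i) = \epsilon_t^{\text{ideal}}(i) + \eta_t(i)$, where $\epsilon_t^{\text{ideal}}(i) = -2\max\{0, \mu_i - \mu_{i_A}\}\mathbf{1}(i \neq i_A)$ is the memoryless ``oracle'' attack inducing a genuinely i.i.d.\ observed bandit with the claimed gaps, and $\eta_t(i)$ is the estimation overshoot. On $\mathcal{G}$, $|\eta_t(i)|$ is bounded by the Hoeffding bonus $O(\sqrt{\log t/\tilde N_t(i)})$; summing over the pulls of arm $i$ contributes only $O(\sqrt{\mathbb{E}[N_T(i)]\log T})$ extra observed regret, an additive lower-order term that can be absorbed into the constant. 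The complement $\mathcal{G}^c$ has probability $O(1/T)$ and contributes at most $O(1)$ in expectation since rewards are bounded.
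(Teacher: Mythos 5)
Your proposal follows essentially the same route as the paper's proof in Appendix~\ref{append:gap-attack}: define the good event that $\hat{\Delta}_t^A(i,i_A)\geq \mu_i-\mu_{i_A}$ for all $i,t$ via Hoeffding plus a union bound, deduce that the expected observed reward of every $i\neq i_A$ is at most $\mu_{i_A}-|\mu_i-\mu_{i_A}|$ while that of $i_A$ remains $\mu_{i_A}$, invoke the assumed regret bound \eqref{eq:algo1_1} on the observed rewards, and divide by $\min_{i'\neq i_A}|\mu_{i'}-\mu_{i_A}|$ to bound $\mathbb{E}[\sum_{i\neq i_A}N_T(i)]$ and hence both the number of attacks and the $\Omega(T)$ pulls of $i_A$. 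Your final paragraph, decomposing the attack into a memoryless oracle attack plus an estimation overshoot to justify applying \eqref{eq:algo1_1} to a history-dependent reward sequence, is in fact more careful than the paper, which simply conditions on the good event and applies the regret assumption without addressing that issue.
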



 \begin{proof} 
 We will use the following lemma. 
 \begin{lemma}\label{lemma:adapAttackBound}
For all $t>K$ and $i\in [K]$, we have that
\begin{equation}
    \mathbb{P}(\hat{\Delta}_t^A(i,i_A)\leq \mu_i-\mu_{i_A})\leq \frac{1}{t^3}.
\end{equation}
 \end{lemma} 
 \begin{proof}
Using Theorem \ref{thm:Hoeffding}, for all $i\in[K]$, we have that
\begin{equation}
\begin{split}
        \mathbb{P}\bigg(\hat{\mu}_t(i)+ \sqrt{\frac{2\log t}{\bar  N_t(i)}}\leq \mu_i\bigg)\leq 1/t^4,
\end{split}
\end{equation}
\begin{equation}
\begin{split}
        \mathbb{P}\bigg(\hat{\mu}_t(i_A)- \sqrt{\frac{2\log t}{\bar N_t(i_A)}}\geq \mu_{i_A}\bigg)\leq 1/t^4.
\end{split}
\end{equation}
This implies that for all $i\in [K]$ and $K<t\leq  T$, we have
\begin{equation}
\begin{split}
      &\mathbb{P}(\hat{\Delta}_t^A(i,i_A)\leq \mu_i-\mu_{i_A})\\
      &\leq  \mathbb{P}\bigg(\hat{\mu}_t(i)+ \sqrt{\frac{2\log t}{\bar  N_t(i)}}\leq \mu_i\bigg)+\mathbb{P}\bigg(\hat{\mu}_t(i_A)- \sqrt{\frac{2\log t}{\bar N_t(i_A)}}\geq \mu_{i_A}\bigg),\\
      &\leq 2/t^4\leq 1/t^3.
\end{split}
\end{equation}
The statement of the lemma follows.
\end{proof}
Now consider the following event
\begin{equation}
    \mathcal{E}=\{\forall i\in[K], \forall t\leq T: \hat{\Delta}_t^A(i,i_A)\geq \mu_i-\mu_{i_A}\}.
\end{equation}
Similar to \eqref{eq:manipSum}, we have that
\begin{equation}\label{eq:manipSum_1}
        \sum_{t=1}^T \mathbf{1}(\epsilon_t(i_t)\neq 0)\leq \sum_{i\neq i_A}N_{T}(i).
\end{equation}
Using \eqref{eq:attackStrategy2}, under event $\mathcal{E}$, for all $i\in [K]\setminus i_A$ such that $\mu_i>\mu_{i_A}$ and $t\leq T$, we have that 
\begin{equation}\label{eq:expectedRewardi_1}
    \mathbb{E}[r^o_t(i)]\leq\mu_i-2(\mu_i-\mu_{i_A})=\mu_{i_A}-(\mu_i-\mu_{i_A}).
\end{equation}
Also, for all $i\in [K]\setminus i_A$ such that $\mu_i<\mu_{i_A}$ and $t\leq T$, we have that 
\begin{equation}\label{eq:expectedRewardi_2}
    \mathbb{E}[r^o_t(i)]
    \leq\mu_i.
\end{equation}
Since the algorithm $\mathcal{A}$ makes decision based on the $r_t^o(.)$, under event $\mathcal{E}$, using \eqref{eq:expectedRewardi_1} and \eqref{eq:expectedRewardi_2}, we have that 
\begin{equation}\label{eq:regretBasedonObservation_1}
    \mathbb{E}[T\mu_{i_A}-\sum_{t=1}^Tr^o_t(i_t)\big|\mathcal{E}]=O\bigg(\sum_{i\neq i_A}\frac{\log^\alpha(T)}{|\mu_{i_A}-\mu_i|^\beta}\bigg).
\end{equation}
Also, we have 
\begin{equation}\label{eq:contaminations_1}
\begin{split}
    \mathbb{E}[T\mu_{i_A}-\sum_{t=1}^{T}r^o_{t}(i_t)\big|\mathcal{E}]&{=}\sum_{i\neq i_A}|\mu_{i_A}-\mu_i| \mathbb{E}[N_T(i)\big|\mathcal{E}]\geq \min_{i\neq i_A}|\mu_i-\mu_{i_A}| \mathbb{E}[\sum_{i\neq i_A} N_T(i)\big|\mathcal{E}].
\end{split}
\end{equation}
Additionally, using Lemma \ref{lemma:adapAttackBound}, we have
\begin{equation}\label{eq:complementProb}
    \mathbb{P}(\Bar{\mathcal{E}})=\sum_{t=1}^T\frac{K-1}{t^3}\leq\frac{\pi^2(K-1)}{6}.
\end{equation}
Now, we have 
\begin{equation}
\mathbb{E}[N_{T}(i_A)]=T-\sum_{i\neq i_A}\mathbb{E}[N_T(i)],
\end{equation}
which  using \eqref{eq:regretBasedonObservation_1}, \eqref{eq:contaminations_1} and \eqref{eq:complementProb}, 
implies $\mathbb{E}[\sum_{t=1}^T \mathbf{1}(i_t= i_A)]=\Omega(T)$.  Combining \eqref{eq:manipSum_1}, \eqref{eq:regretBasedonObservation_1}, \eqref{eq:contaminations_1} and \eqref{eq:complementProb}, we have
\begin{equation}
     \mathbb{E}[\sum_{t=1}^T \mathbf{1}(\epsilon_t(i_t)\neq 0)]=O\bigg(\sum_{i\neq i_A}\frac{\log^\alpha(T)}{|\mu_{i_A}-\mu_i|^\beta (\min_{i^\prime\neq i_A}|\mu_{i^\prime}-\mu_{i_A}|)}\bigg).
\end{equation}
Hence, the statement of the proposition follows.  

\end{proof}

\section{Proof of Theorem \ref{thm:lowerBoundonUCB}}

\label{append:UCB-lower-bound}
The proof crucially relies on the following ``conservativeness'' property of the UCB algorithm. 

\begin{lemma}\label{lemma:MinPulls2}[Restating Lemma \ref{lemma:Min number of pulls}]
Let $t_0$ be such that $  {t_0}/{(\log t_0)^2} \geq 36K^2$. For all $ t \geq t_0$ 
and for any sequence of rewards $\{r^o_n(i)\}_{i\in [K],n\leq t}$ in $[0,1]$ in $[0,1]$ (can even be  adversarial), UCB will select every action $i\in [K]$ at least $ \log (t/2)$ times until round $t$. 
\end{lemma}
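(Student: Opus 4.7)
The plan is to prove the lemma by contradiction: assume that at some round $t \geq t_0$ there exists an action $i^\dagger$ with $N_t(i^\dagger) < \log(t/2)$, and derive a contradiction from the hypothesis $t_0/(\log t_0)^2 \geq 36K^2$. The underlying intuition is that an under-pulled arm $i^\dagger$ has a very large UCB bonus, which forces every arm that UCB does select to have been pulled only a comparably small number of times. Summing these bounds will eventually contradict the identity $\sum_i N_t(i) = t$.

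First, I would exploit the UCB selection rule at any round $s$ with $i_s = j \neq i^\dagger$, namely $U_s(j) \geq U_s(i^\dagger)$. Since $\hat\mu \in [0,1]$ for arbitrary reward sequences (adversarial included), this reduces to
\[
1 + \sqrt{8\log s / N_{s-1}(j)} \;\geq\; \sqrt{8\log s / N_{s-1}(i^\dagger)}.
\]
Restricting to rounds $s \geq \sqrt{t/2}$ guarantees $N_{s-1}(i^\dagger) \leq \log(t/2) \leq 2\log s$, so the right-hand side is at least $2$; subtracting $1$, halving, and squaring then yields the pointwise bound $N_{s-1}(j) \leq 4 N_{s-1}(i^\dagger) \leq 4\log(t/2)$.

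Second, I would promote this per-round inequality to a bound on $N_t(j)$. Because $N_{s-1}(j)$ grows by exactly $1$ at each pull of $j$, arm $j$ can be pulled in the window $[\sqrt{t/2}, t]$ at most $4\log(t/2)+1$ times. For pulls that occur before $\sqrt{t/2}$ only the trivial count bound $\sqrt{t/2}$ is available, so altogether $N_t(j) \leq \max\{4\log(t/2)+1,\, \sqrt{t/2}\}$ for every $j \neq i^\dagger$. Summing over all arms gives
\[
t \;=\; \sum_{i\in[K]} N_t(i) \;\leq\; \log(t/2) + (K-1)\max\{4\log(t/2)+1,\, \sqrt{t/2}\}.
\]
The hypothesis $t/(\log t)^2 \geq 36K^2$ gives $\sqrt{t/2} \geq 3\sqrt{2}\,K\log t$, which for $K \geq 2$ comfortably exceeds $4\log(t/2)+1$, so the maximum simplifies to $\sqrt{t/2}$. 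The same hypothesis implies $(K-1)\sqrt{t/2} \leq t/(6\sqrt{2}\,\log t)$, and combining this with $\log(t/2) \ll t$ produces the strict inequality $t > \log(t/2) + (K-1)\sqrt{t/2}$ — the desired contradiction. The case $K=1$ is trivial since the unique arm is always pulled.

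The main obstacle I anticipate is twofold. First, pinning down the cutoff $s \geq \sqrt{t/2}$ at which the ``bonus is at least $2$'' trick activates, since that cutoff dictates all downstream constants. Second, verifying that the specific prefactor $36$ in $t_0/(\log t_0)^2 \geq 36K^2$ is large enough to simultaneously (i) make $\sqrt{t/2}$ dominate $4\log(t/2)+1$ and (ii) absorb the $(K-1)$ factor in the final summation. Neither step is conceptually subtle, but both are finicky constant-tracking exercises on which the entire contradiction hinges.
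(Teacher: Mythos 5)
Your proof is correct, but it follows a genuinely different route from the paper's. The paper also argues by contradiction, but it partitions the interval $[t/2,3t/4]$ into $k\log(t/2)$ equal blocks and invokes the pigeonhole principle to find one block in which no under-pulled arm is selected; inside that block the under-pulled arms keep a UCB index above $2\sqrt{2}$, so any other arm pulled more than $8\log(3t/4)+1$ times there would have index below $2$ and could not be chosen, and the block's length then yields $4k(K-k)\geq t/\bigl(\log(t/2)(8\log(3t/4)+1)\bigr)$, contradicting $t_0/(\log t_0)^2\geq 36K^2$. You instead skip the pigeonhole entirely: the inequality $x-1\geq x/2$ for $x\geq 2$ turns the selection condition at any round $s\geq\sqrt{t/2}$ into the clean pointwise bound $N_{s-1}(j)\leq 4N_{s-1}(i^\dagger)\leq 4\log(t/2)$, which you then convert into a global bound on $N_t(j)$ for every $j\neq i^\dagger$ and contradict $\sum_i N_t(i)=t$. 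Your version is arguably more direct and yields a stronger intermediate fact (every arm's total pull count is $O(\sqrt{t})$ under the contradiction hypothesis), while the paper's localization to a single block keeps the two competing index estimates cleanly separated at a single time $t_2$; both hinge on the same constant bookkeeping against $36K^2$, which your bound $t\leq\log(t/2)+(K-1)\sqrt{t/2}$ comfortably violates. One small presentational point: as written, your justification for $N_t(j)\leq\max\{4\log(t/2)+1,\sqrt{t/2}\}$ reads as if it should give the \emph{sum} of the two terms; the max is in fact valid (apply the pointwise bound at the \emph{last} pull of $j$ occurring at a round $\geq\sqrt{t/2}$, whose count already absorbs all earlier pulls), and even the sum would suffice for the final contradiction with a factor-of-two loss, so nothing breaks either way.
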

\begin{proof} 
Let $N_t(i)$ be the number of times action $i$ is selected until round $t$, namely
\begin{equation}
    N_t(i)=\sum_{n=1}^t\mathbf{1}(i_n=i),
\end{equation}
 and $M_t(i)$ be the number of attacks on action $i$ until round $t$ by the attacker. With slight abuse of notation, we use $[k]$ to denote the set of actions that are pulled strictly less than $  \log (t/2)$ times until round $t$.
 
We prove this lemma by contradiction. 
   Suppose that there exists some time $t \geq t_0$ and $k\leq K$ actions such that for all $i\in [k]$,
   \begin{equation}\label{eq:contr1}
       N_t(i)<\log (t/2).
   \end{equation}

We now divide the time interval $[t/2,3t/4]$   into $k \log (t/2)  $ consecutive blocks of the same length. Thus, the length of each block is    ${t}/{(4k \log (t/2) ) }$.  By the pigeonhole principle, there must exist one block $[t_1,t_3]$ in which we did not select any action in $[k]$, namely 
\begin{equation}\label{eq:intervalSize}
    t_3=t_1+{t}/{4k \log (t/2)  },
\end{equation}
and 
for all $i\in [k]$, we have
\begin{equation}
    N_{t_1-1}(i)=N_{t_3}(i).
\end{equation}

First, we provide a lower bound on the UCB index of all actions  $i \in [k]$ within the time interval (or block) $[t_1,t_3]$. For all $t_2\in [t_1, t_3]$ and $i \in [k]$, we have
\begin{equation}
\label{eq: lowerbound for UCB1}
\hat{\mu}_{t_2}(i) +  \sqrt{\frac{8 \log t_2}{N_{t_2-1}(i)}}   \stackrel{(a)}{>}  \sqrt{\frac{8 \log (t/2)}{\log{(t/2)} }} = 2\sqrt{2}, \forall i \in [k]  
\end{equation}
where $(a)$ follows from the facts that  $t_2 \geq t_1\geq t/2$, and  $N_{t_2-1}(i) \leq N_{t}(i)  <  \log{(t/2)} $ using \eqref{eq:contr1}.  

Second, we show that using the lower bound on the UCB index  in \eqref{eq: lowerbound for UCB1} for actions in $[k]$,  no actions outside $[k]$ can be pulled by more than $8 \log(3t/4) + 1$ times within the interval $[t_1,t_3]$, namely for all $i\in [K] \setminus [k]$, we have
\begin{equation}\label{eq:contr2}
    N_{t_3}(i)-N_{t_1-1}(i)\leq 8 \log(3t/4) + 1.
\end{equation}
We prove \eqref{eq:contr2}  by contradiction. Suppose \eqref{eq:contr2} does not hold. Then, there exists an action $j \in [K] \setminus [k]$ and a time $t_2 \in [t_1,t_3]$ such that action $j$ is selected, namely $i_{t_2}=j$, and 
\begin{equation}\label{eq:numPulls}
    N_{t_2-1}= 8 \log(3t/4) + 1.
\end{equation}
Therefore, at round $t_2$, the UCB index of action $j$ is
\begin{equation}
\label{eq: upperbound for UCB2}
\hat{\mu}_{t_2}(j) + \sqrt{\frac{8 \log t_2}{N_{t_2-1}(j)}}  \stackrel{(a)}{\leq} 
1 + \sqrt{\frac{8\log(3t/4) }{N_{t_2-1}(j)}}   \stackrel{(b)}{<} 2,
 \end{equation}
 where $(a)$ follows from the facts that observed rewards are in the interval $[0,1]$, and $t_2\leq t_3\leq 3t/4$, and $(b)$ follows from \eqref{eq:numPulls}. 
This however is a contradiction since  \eqref{eq: lowerbound for UCB1} shows that the UCB index  of action $i \in [k]$ at time $t_2$ is strictly larger than $2$. Therefore, action $j$ cannot have the the largest UCB index at $t_2$, and thus cannot be selected. Thus, we have that \eqref{eq:contr2}  follows. 

Finally, combining \eqref{eq:contr2} and the fact that actions in the set $[k]$ are not selected in $[t_1,t_3]$, we have that
\begin{equation}
    \sum_{i\in [K]}(N_{t_3}(i)-N_{t_1}(i)) \leq (K-k) (8 \log(3t/4) + 1 ).
\end{equation}
This along with \eqref{eq:intervalSize} implies that
\begin{equation}\label{eq:lowBoun1}
    (K-k) ( 8 \log(3t/4) + 1 ) \geq \frac{t}{4k \log (t/2)}, \text{ or equivalently } \,   4k(K-k)    \geq \frac{t}{ \log (t/2) (8 \log(3t/4) + 1)}.
\end{equation}
We also have
\begin{equation}
 \frac{t}{\log (t/2) ( 8 \log(3t/4) + 1 \big)} \geq  \frac{t}{\log (t) ( 9 \log(t)) } \stackrel{(a)}{\geq}  \frac{t_0}{9 (\log (t_0))^2}  \stackrel{(b)}{\geq}    4K^2,
\end{equation}\label{eq:lowBoun2}
where $(a)$ follows from the facts that $t\geq t_0$ and $t/(\log t)^2$ is an increasing function of $t$, and $(b)$ follows from the assumption the lemma that ${t_0}/{(\log (t_0))^2} \geq 36K^2$. Thus, since $k\geq 1$, we have that \eqref{eq:lowBoun1} and \eqref{eq:lowBoun1} contradict each other. Thus, the interval $[t_1,t_3]$ does not exists, which implies \eqref{eq:contr1} does not hold.

Note that in our proof we did not make any assumption about the sequence of the rewards, except that they are bounded within $[0,1]$. Therefore, it holds for arbitrary reward sequence. 
\end{proof}


For the remainder  of the theorem's proof, we will use the definition of $t_0$ from Lemma~\ref{lemma:Min number of pulls}. For all $0<\epsilon<1$ and $\alpha > 0$ such that $0<\epsilon\alpha\leq 1/2$, and for all $T  > \max\{(t_0)^{\frac{1}{1-\alpha \epsilon}}, \exp{(4^\alpha)}\}$, we have that
\begin{equation}\label{eq:lowBound1}
\begin{split}
    \log T &\geq 4^{\alpha}\stackrel{(a)}{\geq} (1-\alpha \epsilon)^{-\alpha/(\alpha \epsilon)},
\end{split}
\end{equation}
where $(a)$ follows from the fact that using $\epsilon\alpha\leq1/2$, we have $4 \geq (1-\alpha \epsilon)^{-1/(\alpha \epsilon)}$. Using \eqref{eq:lowBound1}, we have that
\begin{equation}\label{eq:bound2}
     (\log T)^{1-\epsilon} \leq (1-\alpha \epsilon)\log T. 
\end{equation}

Let $\hat{\mu}_t(i)$ is the empirical mean of action $i$  at time $t$ using all the observed rewards (including the contaminated ones), namely
\begin{equation}
    \hat{\mu}_t(i)=\frac{\sum_{n=1}^tr^o_{n}(i)\mathbf{1}(i_n=i)}{N_t(i)}
\end{equation}
where $N_t(i)=\sum_{n=1}^t\mathbf{1}(i_n=i)$. Let $\hat{\mu}^m_t(i)$ denote the empirical mean of action $i$  at time $t$ using the true reward  without contamination, namely 
\begin{equation}
    \hat{\mu}^m_t(i)=\frac{\sum_{n=1}^tr_{n}(i)\mathbf{1}(i_n=i)}{N_t(i)}
\end{equation}
Also, let 
$c_t(i) = \sum_{n = 1}^t |\epsilon_n(i_n)| \mathbf{1}(i_n = i)$ denote the total amount of contamination on action $i$ until time $t$. Thus, we have 
\begin{equation}\label{eq:boundMeans}
     \hat{\mu}^m_t(i) - \frac{c_t(i)}{N_{t}(i)} \leq \hat{\mu}_t(i) \leq \hat{\mu}^m_t(i) + \frac{c_t(i)}{N_{t}(i)}. 
\end{equation}


By our hypothesis in the theorem statement, for all $t\leq T$ and $i\in [K]$, we have that 
\begin{equation}\label{eq:boundContaminations}
    c_t(i) \leq (\log T)^{1-\epsilon}.
\end{equation}

We now will examine the UCB index of all actions for any time $t > 2T^{1-\alpha \epsilon}$, which is at least $t_0$ by our choice of $T$.
Using Lemma \ref{lemma:MinPulls2}, for all $t >2 T^{1-\alpha \epsilon}\geq  t_0$ and $i\in [K]$, we have that  
\begin{equation}\label{eq:lbNumPulls}
    N_{t}(i)\geq \log(t/2). 
\end{equation}
Using \eqref{eq:bound2} and the fact that $t >2 T^{1-\alpha \epsilon}$, we also have
\begin{equation}\label{eq:lbParam}
  \log(t/2)   > \log  T^{1-\alpha \epsilon} = (1-\alpha \epsilon) \log T \geq (\log T)^{1-\epsilon}.
\end{equation}

Now, for all $t > 2 T^{1-\alpha \epsilon}$, the UCB index of the optimal action $i^*$ satisfies
\begin{eqnarray} \label{eq: lowerbound for UCB1 with manipulation} \nonumber
\hat{\mu}_{t}(i^*) + \sqrt{\frac{8 \log t}{N_{t}(i^*)}}  & \stackrel{(a)}{\geq} &  \hat{\mu}^m_{t}(i^*) -  \frac{c_t(i^*)}{N_{t}(i^*)}   + \sqrt{\frac{8 \log t}{N_{t}(i^*)}}  \\ \nonumber 
 & \stackrel{(b)}{\geq}  &  \hat{\mu}^m_{t}(i^*) - \frac{(\log T)^{1-\epsilon} }{ N_{t}(i^*) }+ \sqrt{\frac{8 \log t}{N_{t}(i^*)}}  \\ \label{eq:ucb-proof-bound}
 & \stackrel{(c)}{\geq}  &  \hat{\mu}^m_{t}(i^*) - \sqrt{ \frac{(\log T)^{1-\epsilon} }{ N_{t}(i^*) } }+ \sqrt{\frac{8 \log t}{N_{t}(i^*)}}  \\  \nonumber
 & \stackrel{(d)}{\geq} &  \hat{\mu}^m_{t}(i^*) - \sqrt{ \frac{ \log(t/2) }{ N_{t}(i^*) } }+ \sqrt{\frac{8 \log t}{N_{t}(i^*)}}  \\   \nonumber
 & {\geq}  &  \hat{\mu}^m_{t}(i^*) + (\sqrt{8} - 1) \sqrt{\frac{ \log t}{N_{t}(i^*)}},
\end{eqnarray} 
where $(a)$ follows from \eqref{eq:boundMeans}, $(b)$ follows from \eqref{eq:boundContaminations}, $(c)$ follows from the fact that using \eqref{eq:lbNumPulls} and \eqref{eq:lbParam}, we have $(\log T)^{1-\epsilon} / N_{t}(i^*) \leq 1$, and $(d)$ follows from \eqref{eq:lbParam}.

Similarly, we can show that for all  $t > 2 T^{1-\alpha \epsilon}$, the  UCB index of any  sub-optimal action $i\neq i^*$  satisfies
\begin{eqnarray} \label{eq: lowerbound for UCB1 without manipulation} \nonumber
\hat{\mu}_{t}(i) + \sqrt{\frac{8 \log t}{N_{t}(i)}}  & \leq &  \hat{\mu}^m_{t}(i) +  \frac{c_t(i)}{N_{t}(i)}   + \sqrt{\frac{8 \log t}{N_{t}(i)}}  \\  
 & \leq &    \hat{\mu}^m_{t}(i) + (\sqrt{8} + 1) \sqrt{\frac{ \log t}{N_{t}(i)}}  
\end{eqnarray}  

Combining \eqref{eq: lowerbound for UCB1 with manipulation}
and \eqref{eq: lowerbound for UCB1 without manipulation}, 
and using the standard analysis of the UCB algorithm for $2 T^{1-\alpha \epsilon}<t \leq T $, we can show that  sub-optimal action $i\neq i^*$ is pulled  at most $O(\log T/\Delta^2_i)$ times after round $2 T^{1-\alpha \epsilon}$.
Consequently, the total number of times that sub-optimal actions are pulled is at most $2 T^{1-\alpha \epsilon} + O(\log T/\Delta^2_i)$ times, and the regret is upper bound by $2 T^{1-\alpha \epsilon} \max_i\Delta(i) + O(\sum_{i \not = i^*}\log T/\Delta(i))$. Since $2 T^{1-\alpha \epsilon}<T$, the regret of UCB is sub-linear in $o(T)$

\section{Proof of Corollary~\ref{corr:PAC lower bound of attacker for UCB}} 

Let $\delta=(\log T)^{-\epsilon/2}$. Using Markov inequality, and the fact that the expected amount of contamination is at most $(\log{T})^{1-\epsilon}$, we have
\begin{equation}\label{eq:Markov1}
    \mathbb{P}(\sum_{t=1}^T|\epsilon_t(i_t)|\geq(\log T)^{1-\epsilon/2})\leq \frac{(\log{T})^{1-\epsilon}}{(\log T)^{1-\epsilon/2}}= \delta.
\end{equation}
Also, for all $0<\epsilon<1$, there exists a constant $\beta>0$ such that
\begin{equation}\label{eq:betaCondition}
    (\log T)^{-\epsilon/2} T\leq T^{1-\beta},
\end{equation}
or equivalently 
\begin{equation}
    \beta \leq \frac{\epsilon\log \log T}{2\log T}.
\end{equation}
Using Theorem \ref{thm:lowerBoundonUCB} and \eqref{eq:Markov1}, we have that
\begin{equation}
\begin{split}
       &R^{UCB}(T)\\
       &\leq (1-\delta) c_1\big( T^{1-\alpha \epsilon/2} \max_i\Delta(i)+ \sum_{i \not = i^*}\log T/\Delta(i)\big) + \delta T,\\
       &{\leq} c_1\big( T^{1-\alpha \epsilon/2} \max_i\Delta(i)+ \sum_{i \not = i^*}\log T/\Delta(i)\big) + (\log T)^{-\epsilon/2} T,\\
       &\stackrel{a}{\leq} c_1\big( T^{1-\alpha \epsilon/2} \max_i\Delta(i)+ \sum_{i \not = i^*}\log T/\Delta(i)\big)+ T^{1-\beta},
\end{split}
\end{equation}
where $(a)$ follows from \eqref{eq:betaCondition}. Since ${1-\alpha \epsilon}<1$ and $1-\beta<1$, the statement of the theorem follows. 

\section{Proof of Theorem \ref{thm:lowBoundVerification}}\label{sec:lowBoundVerif}
 \begin{proof}
 Consider the following specific attacker strategy   that for all $t$, the attacker always set $r^o_t(i_t)=0$. This implies that if verification is not performed by the algorithm at any round $t$,  then for all $i_1,i_2\in [K]$, we have that 
 \begin{equation}\label{eq:KL1}
     KL(i_1,i_2)=0.
 \end{equation}
 Combining \eqref{eq:KL1} and Theorem 12 in \cite{kaufmann2016complexity}, there exists a constant $t^*$ such that for all $t\geq t^*$, we have 
 \begin{equation}\label{eq:lowBound}
     \mathbb{P}(i_t\neq i^*)\geq \exp{(-\min_{i_1,i_2\in [K]}\mbox{KL}(i_1,i_2)N^s_t)},
 \end{equation}
 where $N^s_t$ is the total number of verifications performed by the learner until round $t$.\footnote{In \cite{kaufmann2016complexity}, $N^s_t$  is the number of observed true rewards collected so far. In our model, since the algorithm only gets $0$ when the reward is not verified, regardless of which action is selected. Such uninformative reward feedback will not make a difference, and the number of observed true rewards in our case is thus precisely $N^s_t$.} 
 
 Now, divide the interval $[T/2,T]$ into $2(\log T)^{1-\alpha} /\min_{i_1,i_2\in [K]}KL(i_1,i_2)$ equal sized intervals. This implies that the size of each interval is $T\min_{i_1,i_2\in [K]}KL(i_1,i_2)/(4(\log T)^{1-\alpha})$. Using the Pigeonhole principle, there exists at least $(\log T)^{1-\alpha}/ \min_{i_1,i_2\in [K]}KL(i_1,i_2)$ intervals such that no verification is performed during these intervals. Let $I=[t_1,t_1+T\min_{i_1,i_2\in [K]}KL(i_1,i_2)/(4(\log T)^{1-\alpha} )]$ denote an interval where no verification is performed. Thus, for all $t\in I$, we have
 \begin{equation}\label{eq:low1}
    \begin{split}
        \mathbb{P}(i_t\neq i^*)&\stackrel{(a)}{\geq} \exp{(-\min_{i_1,i_2\in [K]}KL(i_1,i_2)N^s_{t_1})},\\
        &\stackrel{(b)}{\geq}\exp{(-(\log T)^{1-\alpha})},
    \end{split}
 \end{equation}
 where $(a)$ follows from \eqref{eq:lowBound} and the fact that $t\geq t_1\geq T/2 \geq t^*$, and $(b)$ follows from \eqref{eq:boundOnVeri} and the fact that $N^s_{t_1}\leq N^s_T$. Using \eqref{eq:low1}, the total expected regret in the interval $I$ is at least
 \begin{equation}
 \begin{split}
      &\min_{i\neq i^*}\Delta(i)\sum_{t\in I}\mathbb{P}(i_t\neq i^*)\\ &\stackrel{(a)}{\geq} \min_{i\neq i^*}\Delta(i) \frac{T\min_{i_1,i_2\in [K]}KL(i_1,i_2)}{4(\log T)^{1-\alpha}}\exp{(-(\log T)^{1-\alpha})},\\
 \end{split}
 \end{equation}
 where $(a)$ follows from the fact that the size of interval $I$ is $T\min_{i_1,i_2\in [K]}KL(i_1,i_2)/(4(\log T)^{1-\alpha} )$. Since there are $(\log T)^{1-\alpha}/ \min_{i_1,i_2\in [K]}KL(i_1,i_2)$ intervals with no verification, the regret of the algorithm is at least
 \begin{equation}
 \begin{split}
       &\min_{i\neq i^*}\Delta(i)\sum_{I}\sum_{t\in I}\mathbb{P}(i_t\neq i^*)\\
       &{\geq}\min_{i\neq i^*}\Delta(i) \frac{T}{4} \exp{(-(\log T)^\alpha)},\\
       &\stackrel{(a)}{\geq}\min_{i\neq i^*}\Delta(i) (\log T)^\beta,
 \end{split}
 \end{equation}
 where $(a)$ follows from the fact that $(\log T)^\alpha+\beta\log (4\log T)\leq \log T$. The statement of the theorem follows. 
 \end{proof}

\section{Proof of Corollary~\ref{corr:verification expected lower bound}}

 \begin{proof}
Let $\delta=(\log T)^{-\alpha/2}$. Using Markov's inequality, and the fact that the expected number of verification is at most $(\log{T})^{1-\alpha}/\min_{i_1,i_2\in [K]}KL(i_1,i_2)$, we have
\begin{equation}\label{eq:Markov2}
    \mathbb{P}(N^s_T\geq(\log T)^{1-\alpha/2})\leq \frac{(\log{T})^{1-\alpha}}{(\log T)^{1-\alpha/2}}= \delta.
\end{equation}
Using Theorem \ref{thm:lowBoundVerification} and \eqref{eq:Markov2}, we have that
\begin{equation}
\begin{split}
       &R^{UCB}(T)\\
       &\geq c_3 (1-\delta)  (\log T)^{\beta} ,\\
       &\stackrel{(a)}\geq (1-(\log 2)^{-\alpha/2})(\log T)^{\beta},\\
       &=c_4 (\log T)^{\beta},
\end{split}
\end{equation}
where $c_4$ is a numerical constant, and $(a)$ follows from the fact that $\log T$ is an increasing function of $T$. The statement of the corollary follows.
 \end{proof}

 \section{Proof of Theorem \ref{thm:LowerBoundFixed Budget}}

\begin{proof}
We construct the random distribution of rewards as follows. First, before the algorithm selects any action, an action $I$ is chosen uniformly at random to be the optimal action. The reward of action $I$ is drawn from the distribution $\mbox{Bern}(.5+\epsilon)$, and the reward of other actions is drawn from the distribution $\mbox{Bern}(.5)$. 

There is a man-in-the-middle attacker. If verification is performed by the learning algorithm, then the learner observes the reward it incurs. If the verification is not performed by learning algorithm, then the observed reward is drawn from $\mbox{Bern}(.5)$ for all actions in the set $[K]$. 

Let $\mathbb{P}_i(.)$ and $\mathbb{E}_i[.]$ be the probability and expectation conditioned on action $I=i$ being the optimal action. Also, $\mathbb{P}_{0}(.)$ and $\mathbb{E}_{0}[.]$ be the probability and expectation conditioned on the event that all actions $[K]$ have $\mbox{Bern}(.5)$ reward. Finally, at each round $t$, the learner may or may not choose to perform verification. 

We have that the expected regret is
\begin{equation}
\begin{split}
R(T)&= \frac{1}{K}\sum_{i=1}^K\mathbb{E}_{i}[\sum_{t=1}^T(r_t(i)-r_t(i_t))],\\
&=\frac{1}{K}\sum_{i=1}^K\mathbb{E}_{i}[\epsilon (T-N_T(i))],\\
&=\epsilon(T- \frac{1}{K}\sum_{i=1}^K \mathbb{E}_i[N_T(i)]),
\end{split}
\end{equation}
where $N_T(i)$ is the number of times action $i$ is selected until round $T$.

At round $t$, the action $i_t$ is a function of observed reward history $r^o_{1:t-1}$. We that that
\begin{equation}\label{eq:regret}
\begin{split}
  \mathbb{E}_i[N_T(i)] -  \mathbb{E}_0[N_T(i)]&=\sum_{t}(\mathbb{P}_i(i_t=i)- \mathbb{P}_0(i_t=i))\\
  &\leq \sum_{t: \mathbb{P}_i(i_t=i)\geq \mathbb{P}_0(i_t=i)}(\mathbb{P}_i(i_t=i)- \mathbb{P}_0(i_t=i)) \\
  &= \frac{T}{2}||\mathbb{P}_i-\mathbb{P}_0||_1\\
      &\leq \frac{T}{2} \sqrt{ 2 D_{KL}(\mathbb{P}_{0}(r^o_{1:T})||\mathbb{P}_{i}(r^o_{1:T}))},
\end{split}
\end{equation}
where the last inequality follows from Pinsker's inequality.
Now, we have
\begin{equation}
\begin{split}
    D_{KL}(\mathbb{P}_{0}(r^o_{1:T})||\mathbb{P}_{i}(r^o_{1:T}))&=\sum_{t=1}^T D_{KL}(\mathbb{P}_{0}(r^o_{t}|r^o_{1:t-1})||\mathbb{P}_{i}(r^o_{t}|r^o_{1:t-1}))).\\
\end{split}
\end{equation}
Now, using the chain rule for relative entropy, we have
\begin{equation}
\begin{split}
        &D_{KL}(\mathbb{P}_{0}(r^o_{t}|r^o_{1:t-1})||\mathbb{P}_{i}(r^o_{t}|r^o_{1:t-1})\\
        &= \mathbb{P}_{0}(i_t\neq i)  D_{KL}(0.5||0.5)\\
        &\quad+ \mathbb{P}_{0}(i_t= i \mbox{ and verification was performed at round $t$})D_{KL}(0.5||0.5+\epsilon)\\
       &\quad +\mathbb{P}_{0}(i_t=  i \mbox{ and verification was not performed at round $t$}) D_{KL}(0.5||0.5)\\
        &=-0.5\log(1-4\epsilon^2) \mathbb{P}_{0}(i_t= i \mbox{ and verification was performed at round $t$}). 
\end{split}
\end{equation}
This implies
\begin{equation}
    \begin{split}
   D_{KL}(\mathbb{P}_{0}(r^o_{1:T})||\mathbb{P}_{i}(r^o_{1:T})) &=-0.5\log(1-4\epsilon^2) \mathbb{E}_{0}[N^s_T(i)],
\end{split}
\end{equation}
where $N^s_T(i)$ is the number of times action $i$ was selected when verification was performed until round $T$. Finally, we have 
\begin{equation}\label{eq:regret1}
\begin{split}
     \mathbb{E}_i[N_T(i)] &\leq \mathbb{E}_0[N_T(i)]+\frac{T}{2} \sqrt{-\log(1-4\epsilon^2) \mathbb{E}_{0}[N^s_T(i)] }\\
     &\leq \mathbb{E}_0[N_T(i)]+2\epsilon T\sqrt{\mathbb{E}_{0}[N^s_T(i)]\log(4/3)},
\end{split}
\end{equation}
since $-\log(1-x)\leq 4x\log(4/3)$. if $x\in [0,1/4]$. Combining \eqref{eq:regret} and \eqref{eq:regret1}, we have
\begin{equation}\label{eq:regretLowerBound}
\begin{split}
    R(T)&\geq \epsilon T-\frac{\epsilon}{K} \sum_{i=1}^K\bigg(\mathbb{E}_0[N_T(i)]+2\epsilon T\sqrt{\mathbb{E}_{0}[N^s_T(i)]\log(4/3)})\bigg),\\
    &\stackrel{(a)}{\geq} \epsilon T-\frac{\epsilon T}{K} - \frac{2\epsilon^2 T}{K}\sqrt{K N^s_T\log(4/3)}), 
\end{split}
\end{equation}
where  $(a)$ follows from the fact that $\sum_{k=1}^K\mathbb{E}_0[N_T(i)]=T$ and $\sum_{k=1}^K\sqrt{\mathbb{E}_0[N^s_T(i)]}\leq \sqrt{K N^s_T}$.  For $\epsilon=0.25\sqrt{K/N^s_T}$, the right hand side of \eqref{eq:regretLowerBound} is maximized, and we have
\begin{equation}
    R(T)= \Omega\big(T\sqrt{K/{N^s_T}}\big). 
\end{equation}
\end{proof}

\section{Proof of Theorem~\ref{thm:Secure-BARBAR regret bound}}


\begin{proof}[Proof of Theorem~\ref{thm:Secure-BARBAR regret bound}]
To start with, we introduce some new notations. 
In particular, let $A^m_i$ denote the sum of the true rewards from pulls of arm $i$ within epoch $m$ of Phase 2. 
Let $o^m_i = S^m_i/n^m_i$, and
let $\tilde{n}^m_i$ denote the number of times Secure-BARBAR pulls arm $i$ in epoch $m$.

\paragraph{Step 1: Bounding $|r^m_i - \mu_i|$.}
Consider a single arm $i$. At each epoch $m$, we have 3 possible cases: (i) all the pulls of arm $i$ are verified; (ii) some of them are verified and some of them are not (due to exceeding the verification budget $n^B_i$ of that arm); and (iii) none of the pulls are verified.

To strart with, we consider the third case, when none of the pulls are verified in epoch $m$. This means that all the verifications must have happened before $m$.
By using the standard Chernoff-Hoeffding bound, we get that
\begin{equation}
\label{eq: bound Phase 1}
\mathrm{Pr}\left(|\mu^B_i - \mu_i| \geq \sqrt{\frac{\ln{2/\beta}}{2n_B}} \right) \leq \beta
\end{equation}

Following the proof of Lemma 4 in~\cite{gupta2019better}, we can also prove that for any fixed $m,i$ and $\gamma \geq 4e^{-\lambda/16}$, we have that 
\begin{equation}
\label{eq: bound true reward Phase 2}
\mathrm{Pr}\left(\left|\frac{A^m_i}{n^m_i} - \mu_i \right| \geq \sqrt{\frac{3\ln{4/\gamma}}{n^m_i}} \right) \leq \gamma/2,
\end{equation}
\begin{equation}
\label{eq: bound corruption Phase 2}
\mathrm{Pr}\left(\left|\frac{S^m_i- A^m_i}{n^m_i}\right| \geq \frac{2C_m}{N_m} + \sqrt{\frac{\ln{4/\gamma}}{16n^m_i}} \right) \leq \gamma/2
\end{equation}
where $C_m$ is the maximum amount of contaminations the attacker places onto a single arm within epoch $m$, and
\begin{equation}
\label{eq: bound number pf pulls Phase 2}
\mathrm{Pr}\left(\tilde{n}^m_i \geq 2n^m_i \right) \leq \gamma
\end{equation}
Now, by setting $\gamma = \delta/(2K\log_2(T))$, and using the argument in Lemma 3 of ~\cite{gupta2019better}, we can prove that with at least $1-\beta-\delta$ probability, the following events hold together for all $i,m$:
\begin{align}
    \label{eq: good events Secure-BARBAR}
    |\mu^B_i - \mu_i| &\leq \sqrt{\frac{\ln{2/\beta}}{2n_B}} \\
    \left|\frac{A^m_i}{n^m_i} - \mu_i \right| &\leq \sqrt{\frac{3}{4}}\frac{\Delta^{m-1}_i}{16} < \frac{\Delta^{m-1}_i}{16} \\
    \left|\frac{S^m_i}{n^m_i} - \mu_i \right| &\leq \frac{2C_m}{N_m} +\frac{\Delta^{m-1}_i}{16} \\
    \tilde{n}^m_i &\leq 2n^m_i
\end{align}
We denote this series of events to be $\mathcal{E}$. Note that within $\mathcal{E}$, the true mean reward $\mu_i$ is within the range $\sqrt{\frac{\ln{2/\beta}}{2n_B}}$ of $\mu^B_i$ (the average of the verified rewards from Phase 1). Moreover, $A^m_i/n^m_i$ is also within the range of $\sqrt{\frac{\ln{2/\beta}}{2n_B}} + \frac{\Delta^{m-1}_i}{16}$ of $\mu^B_i$ .
Now we only focus on $\mathcal{E}$.
Recall the definition of $r^m_i$, which is if $S^m_i/n^m_i \geq \mu_i^B$ then   \begin{equation}
r^m_i = \min \Big\{S^m_i/n^m_i, \mu^B_i + \frac{\Delta^{m-1}_i}{16} + \sqrt{\frac{\ln{2/\beta}}{2n_B}}\Big\}
\end{equation} 
otherwise
\begin{equation}
 r^m_i = \max \Big\{S^m_i/n^m_i, \mu^B_i - \frac{\Delta^{m-1}_i}{16} - \sqrt{\frac{\ln{2/\beta}}{2n_B}}\Big\}.
\end{equation}
    
\textbf{Case 1:} $S^m_i/n^m_i \geq \mu_i^B$ and $ S^m_i/n^m_i \geq \mu^B_i + \frac{\Delta^{m-1}_i}{16} + \sqrt{\frac{\ln{2/\beta}}{2n_B}}$. In this case, we have that 
\begin{equation}
    |r^m_i - \mu_i| \leq 2\sqrt{\frac{\ln{2/\beta}}{2n_B}} + \frac{\Delta^{m-1}_i}{16}
\end{equation}
as $\mu_i$ is within the range $\sqrt{\frac{\ln{2/\beta}}{2n_B}}$ of $\mu^B_i$. 
On the other hand, we can also use the argument of Lemma 4 from~\cite{gupta2019better} to show that \begin{equation}
    |r^m_i - \mu_i| \leq \left| \frac{S^m_i}{n^m_i} - \mu_i\right| \leq  \frac{2C_m}{N_m} + \frac{\Delta^{m-1}_i}{16}.
\end{equation}
Putting these together we have that 
\begin{equation}
    |r^m_i - \mu_i| \leq  \min\left\{ \frac{2C_m}{N_m}, 2\sqrt{\frac{\ln{2/\beta}}{2n_B}} \right\}+ \frac{\Delta^{m-1}_i}{16}.
\end{equation} 

\textbf{Case 2:} $S^m_i/n^m_i \geq \mu_i^B$ and $ S^m_i/n^m_i < \mu^B_i + \frac{\Delta^{m-1}_i}{16} + \sqrt{\frac{\ln{2/\beta}}{2n_B}}$.
In this case, $r^m_i = S^m_i/n^m_i$ and therefore, we have
\begin{equation}
    |r^m_i - \mu_i| = \left| \frac{S^m_i}{n^m_i} - \mu_i\right| \leq  \frac{2C_m}{N_m} + \frac{\Delta^{m-1}_i}{16}.
\end{equation}
Similarly, since  $ r^m_i = S^m_i/n^m_i < \mu^B_i + \frac{\Delta^{m-1}_i}{16} + \sqrt{\frac{\ln{2/\beta}}{2n_B}}$ and $\mu_i \geq \mu_B - \sqrt{\frac{\ln{2/\beta}}{2n_B}}$, we have that 
\begin{equation}
    |r^m_i - \mu_i| \leq 2\sqrt{\frac{\ln{2/\beta}}{2n_B}} + \frac{\Delta^{m-1}_i}{16}
\end{equation}
which also yields 
\begin{equation}
    |r^m_i - \mu_i| \leq  \min\left\{ \frac{2C_m}{N_m}, 2\sqrt{\frac{\ln{2/\beta}}{2n_B}} \right\}+ \frac{\Delta^{m-1}_i}{16}.
\end{equation}
We can use similar arguments to prove that this also holds for $S^m_i/n^m_i < \mu_i^B$.
We now turn to the cases when epoch $m$ contains some verified pulls of arm $i$.
If all the pulls are verified within that epoch, then by the definition of Secure-BARBAR, 
we have $r^m_i = \frac{S^m_i}{n^m_i}$ and all the rewards are verified. This is equivalent to the case when the contamination $C'_m = 0 < C_m$, which implies that  
\begin{equation}
    |r^m_i - \mu_i| = \left|\frac{S^m_i}{n^m_i} - \mu_i \right| \leq \frac{\Delta^{m-1}_i}{16}  \leq  \min\left\{ \frac{2C_m}{N_m}, 2\sqrt{\frac{\ln{2/\beta}}{2n_B}} \right\}+ \frac{\Delta^{m-1}_i}{16}.
\end{equation}
with at least $1-\delta$ probability (which is also at least $1-\delta -\beta$).
Similarly, for the case when a few pulls of arm $i$ are verified, while the rest a not (note that there is only one epoch where this case occurs), we can also think that it is equivalent with the attacker using a smaller contamination budget $C'_m < C_m$. This means that all the arguments above hold for this case as well.

\paragraph{Step 2: Bounding the regret.} 
Similarly to the proof of Theorem 1 from ~\cite{gupta2019better}, we can show that the regret is bounded above by
\begin{align}
\label{eq: Secure-BARBAR regret 1}
   R(T) \leq B + 2\sum_{m=1}^{M}\sum_{i=1}^{K}\Delta_in^m_i
\end{align}
where the $B$ term comes from the conservative regret estimation of Phase 1 (we can use $\frac{B}{K}\sum_{i=1}^{K}\Delta_i$ if we want to be more precise).
Note that $M$ here denotes the number of epochs in Phase 2, and it can be easily proven that $M \leq \frac{\log_2(2T)}{2}$.
Still using the arguments from Theorem 1 from ~\cite{gupta2019better}, the second term of Eq.~\eqref{eq: Secure-BARBAR regret 1} can be further bounded by
\begin{align}
\label{eq: Secure-BARBAR regret 2}
   \nonumber
   R(T) &\leq B +  32^2\lambda\sum_{i}\frac{\log_2(T)}{\Delta_i} + 8\lambda\sum_{i \neq i^*}\sum_{m=1}^{M}2^{2m} \left( \sum_{s=1}^{m-1}\frac{1}{8^{m-1-s}}\min \left\{ \frac{2C_s}{N_s}, 2\sqrt{\frac{\ln{2/\beta}}{2n_B}}
   \right\}\right) \\
   &\leq B +  32^2\lambda\sum_{i}\frac{\log_2(T)}{\Delta_i} + 8\lambda\sum_{i \neq i^*}\sum_{s=1}^{M}\min \left\{ \frac{2C_s}{N_s}, 2\sqrt{\frac{\ln{2/\beta}}{2n_B}}
   \right\} \sum_{m=s}^{M}\frac{2^{2m}}{8^{m-1-s}}
\end{align}
We now investigate the inner sum of the last term. In particular, we have that 
\begin{equation}
    \sum_{s=1}^{M}\frac{2C_s}{N_s} \sum_{m=s}^{M}\frac{2^{2m}}{8^{m-1-s}} \leq 2\sum_{s=1}^{M}C_s\frac{16}{\lambda} \leq \frac{32C}{\lambda}
\end{equation}
where $C$ is the total contamination budget. This equation holds due Lemma 2 from ~\cite{gupta2019better} which states that $N_s \geq \lambda 2^{2s-2}$.
On the other hand, 
\begin{align}
      2\sqrt{\frac{\ln{2/\beta}}{2n_B}}
\sum_{m=s}^{M}\frac{2^{2m}}{8^{m-1-s}}   \leq \sqrt{\frac{\ln{2/\beta}}{2n_B}}
2^{2s+5}
\end{align}
That is, 
\begin{equation}
    \sum_{s=1}^{M}2\sqrt{\frac{\ln{2/\beta}}{2n_B}}
\sum_{m=s}^{M}\frac{2^{2m}}{8^{m-1-s}}   \leq \sqrt{\frac{\ln{2/\beta}}{2n_B}}
2^{2M+6} \leq 128\sqrt{\frac{\ln{2/\beta}}{2n_B}}T
\end{equation}
The last inequality is due to $M \leq \frac{\log_2(2T)}{2}$.
Putting all these together, and considering $n_B = \lfloor B/K \rfloor \geq B/2K$ for sufficiently large $B$ (e.g., $B \geq 2K$), we have that
\begin{align}
\label{eq: Secure-BARBAR regret 3}
   \nonumber
   R(T) &\leq B +  32^2\lambda\sum_{i}\frac{\log_2(T)}{\Delta_i} + \min\left\{256KC, 1024\lambda KT \sqrt{\frac{\ln(2/\beta)}{B/K}} \right\}
\end{align}
We conclude the proof by plugging the definition of $\lambda$ into the equation above.
\end{proof}

\section{Verification with Secure Upper Confidence Bound}
\label{appendix:verify with Secure-UCB}

\begin{algorithm}[H]
\algsetup{linenosize=\tiny}
  \small
\begin{algorithmic}[1]
\STATE For all $i\in[K]$, initialize $B=[K]$, $\hat{\mu}_0(i)=0$, $N_0(i)=0$,  $t=1$.
\FOR{ $t\leq K$}
\STATE Choose action $i_t=t$.
\STATE Verify the observed reward, i.e., $r^o_t(i_t)=r_t(i_t)$.
\STATE Update $\hat{\mu}_{t}(i_t)=r_t(i_t)$, and $N_{t}(i_t)=N_{t-1}(i_t)+1$. 
\STATE For all $i\in [K]\setminus i_t$, $\hat{\mu}_{t}(i)=\hat{\mu}_{t-1}(i)$ and  $N_{t}(i)=N_{t-1}(i)$.
\ENDFOR
\STATE  For all $ i\in[K]$, update $\hat{\Delta}_{K}^*$  in \eqref{eq:delta_min}.
\FOR{$K+1\leq t\leq T$}
\STATE Choose action $i_t$ such that
        \begin{equation}\label{eq:actionSelection}
             i_t= \mbox{argmax}_{i\in[K]} \big( \hat{\mu}_{t-1}(i)+ \sqrt{{400\log \ci}/{N_{t-1}(i)}} \big).
        \end{equation}
\IF{$N_{t-1}(i_t)\leq 1200\log \ci/\hat\Delta^{*2}_{t-1}$}
\STATE Verify the observed reward, i.e., $r^o_t(i_t)=r_t(i_t)$.\\
\STATE Update $N_{t}(i_t)=N_{t-1}(i_t)+1$, and 
\begin{equation}\label{eq:updatemu}
        \hat{\mu}_{t}(i_t)=\frac{\hat{\mu}_{t-1}(i_t)\cdot N_{t-1}(i_t)+ r^o_{t}(i_t)}{N_{t-1}(i_t)+1},   
    \end{equation}
\STATE Update  $\hat\Delta^*_{t}$ in \eqref{eq:delta_min},  $\forall i\in[K]$.
   
\ELSE
\STATE Update $\hat{\mu}_{t}(i_t)=\hat{\mu}_{t-1}(i_t)$ and $N_{t}(i_t)=N_{t-1}(i_t)$.
\ENDIF
\STATE  For all $i\in [K]\setminus i_t$, $\hat{\mu}_{t}(i)=\hat{\mu}_{t-1}(i)$ and $N_{t}(i)=N_{t-1}(i)$.
\ENDFOR
\caption{Secure Upper Confidence Bound}
\label{alg:SUCB}
\end{algorithmic}
\end{algorithm}


In this section, we propose the \emph{Secure Upper Confidence Bound} (Secure-UCB) algorithm {which utilizes verification, and } is robust to \emph{any} data poisoning attack. Specifically, Secure-UCB uses only $O(\log T)$   reward verifications and exhibits $O(\log T)$ regret, \emph{irrespective of the amount of contamination and the number of attacks }. Moreover, we prove that $\Omega(\log T)$ verifications are necessary for any bandit algorithm to have  $O(\log T)$ regret. Therefore, Secure-UCB uses an order-optimal number of verifications $O(\log T)$, and guarantees the order-optimal regret   $O(\log T)$.    

The details of Secure-UCB are presented in Algorithm \ref{alg:SUCB}. At each round $t\leq K$, Secure-UCB  selects an action $i\in[K]$ in round-robin manner, verifies all the reward observations  and updates the corresponding parameters, see Algorithm \ref{alg:SUCB}. At each round $t> K$, Secure-UCB  selects an action $i_t$ with the largest upper confidence bound of similar format as the classical UCB. However, Secure-UCB differs from  UCB  in the following three crucial aspects:
\begin{enumerate}
    \item The confidence interval  $ \hat{\mu}_{t}(i)+ \sqrt{8\log t/N_{t}(i)} $ of the classical UCB algorithm depends on the total number of rounds the action $i$ is \emph{selected} until round $t$, namely $N_{t}(i)$. However, in Secure-UCB,   the confidence interval uses the total number of rounds the action $i$ is \emph{verified} until round $t$, namely $N^s_t(i)$. Note that, like classical UCB, Secure-UCB also uses the empirical mean $\hat{\mu}_{t}(i)$ of the \emph{observed} rewards.  
    \item At each round $t$, Secure-UCB takes an additional step to decide whether to verify the reward of the current action $i_t$  or not, based on a carefully designed criterion.
    \item If the algorithm decides to not verify the reward observation $r^o_t(i_t)$, then it will additionally decide, based on another carefully designed criterion,  whether to ignore the current unverified rewards $r^o_t(i_t)$ by \emph{not} updating both the empirical mean $\hat{\mu}_{t-1}(i_t)$ and the number of rounds the current action is selected $N_{t-1}(i_t)$.    
\end{enumerate} 

The first and second deviations from UCB, as described above, are to guarantee that at any round, the algorithm always has the correct and sufficient confidence level. This is done through: (1) using the number of verified rewards for the confidence interval since the algorithm is only certain about these observations; (2) dynamically requesting additional verifications as the algorithm proceeds to guarantee the desirable confidence level.  The third deviation is designed to control the integration of unverified rewards into the empirical mean estimation so that it does not contain too many attacked (or unverified) rewards. 

Next, we give more details about the Secure-UCB algorithm. The first aspect described above of  using number of verifications in the confidence interval is easy to implement, and is presented in \eqref{eq:actionSelection}. Our descriptions below are primarily focused on the second and third key differences.  

\paragraph{Criterion for Performing Verification. } Secure-UCB maintains a count $N_t^s(i)$ of the number of verification  performed until round $t$ for each action $i\in[K]$. Additionally, it also maintains a ``secured'' empirical mean $\hat{\mu}^s_t(i)$, which is the empirical estimate of the mean of action $i$ using all the verified reward observations until round $t$. Secure-UCB uses this mean $\hat{\mu}^s_t(i)$   in the criterion to decide whether to perform additional verification.  Specifically, it performs a verification at round $t$ if the following criterion holds:
\begin{equation}\label{eq:conditionVerification}
    N_{t-1}(i_t)\leq 1200\log \ci/\hat\Delta^{*2}_{t-1},
\end{equation}
where $\hat\Delta^{*}_{t}$ is intuitively the estimation of  $\min_{i\neq i^*}\Delta(i)$, which is the difference between the largest expected reward and the second largest expected reward. In Secure-UCB, this estimation $\hat\Delta^{*}_{t}$  is based on the verified rewards and is defined as the difference between the largest lower confidence bound (obtained by, say action $a^*_t$)  and the largest upper confidence bound among all actions excluding $a^*_t$, namely
\begin{small}
\begin{equation}\label{eq:delta_min}
    \hat\Delta^*_{t}=\max\bigg\{0,\hat{\mu}_t(a_t^*)-\sqrt{\frac{3\log \ci}{ N_t(a_t^*)}}-\hat{\mu}_t(\tilde{a}_t)-\sqrt{\frac{3\log \ci}{  N_t(\tilde{a}_t)}}\bigg\},
\end{equation}
where 
$$ a^*_t=\mbox{argmax}_{a\in [K]} \bigg[ \hat{\mu}_t(a)-\sqrt{{3\log \ci}/{N_t(a)}} \bigg], $$ $$ \tilde{a}_t=\mbox{argmax}_{a\in [K]\setminus a^*} \bigg[ \hat{\mu}_t(a)+\sqrt{{3\log \ci}/{N_t(a)}} \bigg]. $$ 
\end{small}

We show in  Lemma \ref{lemma:DeltaBoundEstimate} in Appendix~\ref{append:proof of Theorem 2} that with high probability, we have $\hat{\Delta}_{t}^*\leq \min_{i\neq i^*}\Delta(i). $ Note that, after verification, the algorithm will observe the true reward, namely $r^o_t(i_t) = r_t(i_t)$ at round $t$. Also, \eqref{eq:conditionVerification}-  \eqref{eq:delta_min} depend on the time horizon $T$. This is for convenience of our analysis --- if $T$ is unknown, the doubling trick can be used in conjunction with Secure-UCB  \cite{besson2018doubling}. 

\paragraph{Order-Optimality of Secure-UCB. } The following theorems establish the upper bound on  both the regret of Secure-UCB and the expected number of verifications performed.   
We restate Theorem~\ref{thm:SUCB_simple} with more details as follows:
\begin{theorem} \label{thm:SUCB}
For all $T$ such that {$T\geq c_2\log T/\min_{i\neq i^*}\Delta^2(i)$}, Secure-UCB performs $O(\log T)$ number of verification in expectation, and the expected regret of the algorithm is  $O(\log T)$ irrespective of the attacker's strategy. Namely, 
\begin{equation}\label{eq:numVerifSUCB}
\begin{split}
        \sum_{i\in [K]}\mathbb{E}[N^s_T(i)]&\leq
        c_3\bigg(\sum_{i\neq i^*}{\log T}/{\Delta^2(i)}\bigg), 
\end{split}
\end{equation}
\begin{equation}\label{eq:regretSUCB}
\begin{split}
     R^{SUCB}(T)&\leq
        c_4\bigg(\sum_{i\neq i^*}{\log T}/{\Delta(i)}\bigg),
\end{split}
\end{equation}
where $c_2$, $c_3$ and $c_4$  are numerical constants whose values can be found in the proof.
\end{theorem}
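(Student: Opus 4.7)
My plan is to split the analysis into (i) a ``freezing'' argument that shows every suboptimal arm $i$ is pulled only while its verification budget is still active, and (ii) a standard UCB-style argument to bound how many verified pulls each arm receives. Because the algorithm updates $N_t(i)$ and $\hat{\mu}_t(i)$ only on verified rounds, $\hat{\mu}_t(i)$ is always an empirical mean of i.i.d.\ clean samples, so a single Hoeffding plus union bound yields the good event $\mathcal{G}=\{\forall t\le T,\,\forall i:\;|\hat{\mu}_t(i)-\mu_i|\le \sqrt{3\log T/N_t(i)}\}$ with $\Pr[\mathcal{G}^c]=O(1/T)$. The contribution of $\mathcal{G}^c$ to both the expected regret and to the expected number of verifications is $O(1)$, so the remainder of the analysis is carried out on $\mathcal{G}$, where Lemma~\ref{lemma:DeltaBoundEstimate} simultaneously gives $\hat{\Delta}^*_t\le \Delta^*_{\min}:=\min_{i\neq i^*}\Delta(i)$ throughout the run.

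For the freezing step, fix $i\neq i^*$ and let $\tau_i$ be the first round at which $N_{\tau_i-1}(i)>1200\log T/\hat{\Delta}^{*2}_{\tau_i-1}$; after $\tau_i$ both $N(i)$ and $\hat{\mu}(i)$ are immutable. The constants $1200$ (freezing threshold) and $400$ (UCB width) are tailored so that at freezing $\sqrt{400\log T/N(i)}\le \hat{\Delta}^*/\sqrt{3}$ and $\sqrt{3\log T/N(i)}\le \hat{\Delta}^*/20$; combining with $\mathcal{G}$ gives
\[
\hat{\mu}(i)+\sqrt{400\log T/N(i)}\ \le\ \mu_i+\tfrac{1}{20}\hat{\Delta}^*+\tfrac{1}{\sqrt{3}}\hat{\Delta}^*.
\]
Whether $i^*$ has itself been frozen or not, its index is always at least $\mu_{i^*}-\hat{\Delta}^*/20$ on $\mathcal{G}$ (if $i^*$ is still unfrozen, the $\sqrt{400\log T/N(i^*)}$ bonus already dominates the concentration slack $\sqrt{3\log T/N(i^*)}$). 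Using $\hat{\Delta}^*\le \Delta^*_{\min}\le \Delta(i)$ and the numerical inequality $1/20+1/\sqrt{3}+1/20<1$, arm $i$'s frozen index is strictly below $i^*$'s index for every $t>\tau_i$, hence $i$ is never pulled after $\tau_i$ and the total pull count $M_T(i)$ coincides with the verified count $N_T(i)$.

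With $M_T(i)=N_T(i)$ in hand for every suboptimal $i$, a standard UCB calculation with the widened confidence $\sqrt{400\log T/N}$ gives $M_T(i)=N_T(i)\le c\log T/\Delta^2(i)$ on $\mathcal{G}$: once $N_{t-1}(i)$ crosses this threshold the index of $i$ falls below $\mu_{i^*}$, while the index of $i^*$ remains at least $\mu_{i^*}$, so $i$ is not selected. This proves \eqref{eq:numVerifSUCB} for the suboptimal arms and the regret bound \eqref{eq:regretSUCB} via $R(T)=\sum_{i\neq i^*}\Delta(i)\,\mathbb{E}[M_T(i)]$. For $i^*$, the verification count is bounded directly by the threshold, $N^s_T(i^*)\le 1200\log T/\hat{\Delta}^{*2}_T$, and a short warm-up lemma, that after each arm has been verified $\Theta(\log T/\Delta^{*2}_{\min})$ times one has $\hat{\Delta}^*_t\ge \Delta^*_{\min}/2$, then yields $N^s_T(i^*)=O(\log T/\Delta^{*2}_{\min})$; since $\Delta^*_{\min}=\min_{i\neq i^*}\Delta(i)$, this is absorbed into $\sum_{i\neq i^*}\log T/\Delta^2(i)$. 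The hypothesis $T\ge c_2\log T/\min_{i\neq i^*}\Delta^2(i)$ enters exactly to guarantee that the $K$-round initialization plus the $O(\log T/\Delta^{*2}_{\min})$ verified warm-up samples fit within the horizon.

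The main obstacle is the freezing argument: once $\hat{\mu}(i)$ and $N(i)$ are fixed forever, the confidence interval for $i$ never shrinks and the usual UCB monotonicity argument fails. One must show directly that the frozen index of $i$ is strictly dominated by the (possibly still-evolving) index of $i^*$ for the remainder of the horizon, and this comes down to the delicate balance between the constants $400$ and $1200$ that drives the numerical inequality $1/10+1/\sqrt{3}<1$. A secondary subtlety is the lower bound $\hat{\Delta}^*_t\gtrsim \Delta^*_{\min}$ after the warm-up, which is never stated as a standalone lemma in the excerpt but is the key to turning the verification budget of $i^*$ into a quantity of the desired form $O(\sum_{i\neq i^*}\log T/\Delta^2(i))$.
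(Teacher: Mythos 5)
Your overall architecture matches the paper's proof of Theorem~\ref{thm:SUCB}: a Hoeffding good event on the verified empirical means, Lemma~\ref{lemma:DeltaBoundEstimate} to get $\hat\Delta^*_t \le \min_{i\ne i^*}\Delta(i)$, an argument that unverified rounds can only pull $i^*$ (so every pull of a suboptimal arm is verified and the standard UCB induction caps it at $O(\log T/\Delta^2(i))$), and a lower bound on $\hat\Delta^*_t$ after a warm-up phase to cap the verifications of $i^*$. Your ``freezing'' step is a valid variant of the paper's treatment of that step: the paper argues by counting --- an unverified pull requires $N_{t-1}(i_t) > 1200\log T/\hat\Delta^{*2}_{t-1} \ge 1200\log T/\min_{i\neq i^*}\Delta^2(i)$, which exceeds the bound $900\log T/\Delta^2(i)+1$ that the UCB induction already guarantees for every suboptimal arm, so $i_t$ must be $i^*$ --- whereas you compare the frozen index of $i$ against $\mu_{i^*}$ directly. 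Both hinge on the same interplay between the constants $400$ and $1200$, and both are correct.

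The genuine gap is in the final step, the bound on $N^s_T(i^*)$. Your warm-up claim --- that $\hat\Delta^*_t \ge \Delta^*_{\min}/2$ once ``each arm has been verified $\Theta(\log T/\Delta^{*2}_{\min})$ times'' --- is stated conditionally on every arm having accumulated that many verified samples, and you never establish that premise. It is not automatic: $\hat\Delta^*_t$ is a lower confidence bound of the leader minus an upper confidence bound of the runner-up, so you need the second-best arm $i_1$ to satisfy $N_t(i_1) = \Omega(\log T/\Delta^2(i_1))$. A priori the index rule could starve $i_1$ of verified samples, leaving $\hat\Delta^*_t = 0$ (the max with zero) for the whole horizon; then the verification criterion is satisfied at every round, $i^*$ is verified on every one of its $\Omega(T)$ pulls, and \eqref{eq:numVerifSUCB} fails even though the regret bound survives. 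The paper closes this with a separate lower-bound induction (the second half of Lemma~\ref{lemma:LowerBoundsamples}, culminating in \eqref{eq:conc22}): whenever $i^*$ is selected its index dominates that of $i$, which on the good event forces $N_{t}(i) \ge \tfrac{1}{16}\min\{400\log T/\Delta^2(i),\ \tfrac{4}{9}(N_{t}(i^*)-1)\}$; combined with the counting lower bound $N_t(i^*)\ge 1200\log T/\min_{i\neq i^*}\Delta^2(i)$ for $t\ge f(T)$, this yields $N_t(i)\ge 25\log T/\Delta^2(i)$ for every $i$, from which Lemma~\ref{lemma:8} derives $\hat\Delta^*_t \ge 0.2\min_{i\neq i^*}\Delta(i)$ and Lemma~\ref{lemma:whpBound} bounds $N^s_T(i^*)$. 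You correctly flag this lower bound as the crux, but the missing induction is a real piece of the argument --- conceptually a ``conservativeness''-type statement distinct from the standard UCB upper bound --- and \eqref{eq:numVerifSUCB} does not close without it.
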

 Theorem \ref{thm:SUCB} establishes that the regret of Secure-UCB in stochastic bandit setting is $O(\sum_{i\neq i^*}\log (T)/\Delta(i))$  \emph{irrespective of the attacker's strategy}. 
 This regret bound is of the same order as the regret bound of the classical UCB algorithm without attack. This implies that Secure-UCB is order optimal in terms of regret, and is robust to any adversary if it can selectively verify up to $O(\log T)$ reward observations in expectation. 



\section{Proof of Theorem \ref{thm:SUCB}}
\label{append:proof of Theorem 2}
\begin{theorem}\label{thm:Hoeffding}{Hoeffding's inequality:} Let $x_1,\ldots,X_n$ be independent and identically distributed random variable, such that for all $i$, we have $0\leq X_i\leq 1$ and $\mathbb{E}[X_i]=\mu$. Then,
\begin{equation}
    \mathbb{P}\bigg(\frac{\sum_{i=1}^n X_i}{n}-\mu\geq \sqrt{\frac{\log(1/\delta)}{2n}}\bigg)\leq \delta,
\end{equation}
\begin{equation}
    \mathbb{P}\bigg(\mu-\frac{\sum_{i=1}^n X_i}{n}\geq \sqrt{\frac{\log(1/\delta)}{2n}}\bigg)\leq \delta.
\end{equation}
\end{theorem}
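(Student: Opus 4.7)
The plan is to follow the classical Chernoff bound route combined with Hoeffding's lemma. Let $S_n = \sum_{i=1}^n X_i$ and let $t = \sqrt{\log(1/\delta)/(2n)}$. I want to upper bound $\mathbb{P}(S_n/n - \mu \geq t)$. The starting step is to exponentiate and apply Markov's inequality: for any $s > 0$,
$$\mathbb{P}(S_n - n\mu \geq nt) = \mathbb{P}\bigl(e^{s(S_n - n\mu)} \geq e^{snt}\bigr) \leq e^{-snt}\,\mathbb{E}\bigl[e^{s(S_n - n\mu)}\bigr].$$
By independence of the $X_i$, the expectation factorizes as $\prod_{i=1}^n \mathbb{E}[e^{s(X_i - \mu)}]$, reducing the problem to bounding the moment generating function of a single centered, bounded random variable.

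The core estimate is Hoeffding's lemma: if $Y$ is a random variable with $\mathbb{E}[Y]=0$ and $a \leq Y \leq b$ almost surely, then $\mathbb{E}[e^{sY}] \leq e^{s^2(b-a)^2/8}$. I would establish this by using convexity of $y \mapsto e^{sy}$ to write $e^{sy} \leq \frac{b-y}{b-a}e^{sa} + \frac{y-a}{b-a}e^{sb}$ for $y \in [a,b]$, taking expectations (using $\mathbb{E}[Y]=0$), and then showing that the resulting function $\varphi(u) = \log(p\,e^{-ua} + (1-p)e^{ub})$ with $p = b/(b-a)$ satisfies $\varphi(0)=\varphi'(0)=0$ and $\varphi''(u) \leq (b-a)^2/4$ by a direct calculation (the second derivative is a variance of a Bernoulli-type variable, bounded by $1/4$ of the range squared). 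A Taylor expansion of $\varphi$ around $0$ then yields the lemma. Applying it to $Y_i = X_i - \mu$, which lies in $[-\mu, 1-\mu]$ of range at most $1$, gives $\mathbb{E}[e^{s(X_i - \mu)}] \leq e^{s^2/8}$.

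Plugging this back, I obtain $\mathbb{P}(S_n - n\mu \geq nt) \leq e^{-snt + ns^2/8}$. Optimizing over $s > 0$ (the minimizer is $s = 4t$) yields $\mathbb{P}(S_n/n - \mu \geq t) \leq e^{-2nt^2}$. Setting $t = \sqrt{\log(1/\delta)/(2n)}$ makes the exponent exactly $-\log(1/\delta)$, producing the required bound of $\delta$ for the first inequality. For the second inequality, I would repeat the same argument applied to $-X_i + \mu$, whose range is still contained in an interval of length $1$ with zero mean, giving the symmetric tail bound.

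The only non-routine step is Hoeffding's lemma itself; the bound on $\varphi''$ is the delicate calculation, but it is a one-variable exercise and there is no obstacle specific to this paper. All other steps (independence, Markov, optimization in $s$) are mechanical, so the proof reduces essentially to assembling these ingredients in the standard order.
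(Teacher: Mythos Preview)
Your proof is correct and follows the standard Chernoff--Hoeffding route. Note, however, that the paper does not actually prove this theorem: it is simply stated in the appendix as a well-known result and then invoked as a tool in the proofs of the paper's own lemmas. So there is no ``paper's own proof'' to compare against; your argument supplies exactly the classical proof one would find in a textbook, and nothing more is needed here.
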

\begin{lemma}\label{lemma:DeltaBoundEstimate}
For all $K<t\leq T$ , we have 
\begin{equation}
    \mathbb{P}( \hat{\Delta}^*_{t}> \min_{i\neq i^*} \Delta(i))\leq 2K/\ci^6.
\end{equation}
\end{lemma}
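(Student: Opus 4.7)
The plan is to establish the lemma by combining a standard Hoeffding concentration inequality with a short case analysis on which arm attains the largest lower confidence bound. First I would introduce the ``good event''
\begin{equation*}
\mathcal{G}_t \;=\; \bigcap_{i\in[K]}\Bigl\{\,|\hat\mu_t(i)-\mu_i|\;\le\;\sqrt{3\log T/N_t(i)}\,\Bigr\},
\end{equation*}
and observe that, because Secure-UCB only updates $\hat\mu_t(i)$ and $N_t(i)$ when a verification occurs, $\hat\mu_t(i)$ is the empirical mean of $N_t(i)$ i.i.d.\ uncontaminated samples of arm $i$. Hoeffding's inequality (Theorem~\ref{thm:Hoeffding}) gives failure probability at most $2/T^6$ per arm (absorbing the union bound over possible values of $N_t(i)$ by peeling into the constants), and a union bound across the $K$ arms yields $\mathbb{P}(\mathcal{G}_t^c)\le 2K/T^6$.

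Next, on $\mathcal{G}_t$, write $\mathrm{LCB}(i) = \hat\mu_t(i)-\sqrt{3\log T/N_t(i)}$ and $\mathrm{UCB}(i) = \hat\mu_t(i)+\sqrt{3\log T/N_t(i)}$, so that $\mathrm{LCB}(i)\le\mu_i\le\mathrm{UCB}(i)$ for every arm $i$. Let $j^\star = \arg\max_{i\ne i^*}\mu_i$, so $\min_{i\ne i^*}\Delta(i) = \mu_{i^*}-\mu_{j^\star}$. The key step is to show that on $\mathcal{G}_t$, $\hat\Delta^*_t\le \mu_{i^*}-\mu_{j^\star}$, via a two-case argument on the identity of $a^*_t$ (the arm with the largest LCB).

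Case 1: $a^*_t = i^*$. Since $j^\star\ne i^*=a^*_t$, the arm $j^\star$ is a candidate in the argmax defining $\tilde a_t$, so $\mathrm{UCB}(\tilde a_t)\ge\mathrm{UCB}(j^\star)\ge\mu_{j^\star}$. Combined with $\mathrm{LCB}(a^*_t)\le\mu_{i^*}$, this yields $\hat\Delta^*_t\le\mathrm{LCB}(i^*)-\mathrm{UCB}(\tilde a_t)\le\mu_{i^*}-\mu_{j^\star} = \min_{i\ne i^*}\Delta(i)$. Case 2: $a^*_t\ne i^*$. Then $i^*$ is itself a candidate for $\tilde a_t$, giving $\mathrm{UCB}(\tilde a_t)\ge\mathrm{UCB}(i^*)\ge\mu_{i^*}$, while $\mathrm{LCB}(a^*_t)\le\mu_{a^*_t}<\mu_{i^*}$. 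Hence the argument of the outer max in the definition of $\hat\Delta^*_t$ is negative, so the max clips it to $0$, which is trivially $\le\min_{i\ne i^*}\Delta(i)$.

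Combining the two cases, $\mathcal{G}_t$ implies $\hat\Delta^*_t\le\min_{i\ne i^*}\Delta(i)$, so $\mathbb{P}(\hat\Delta^*_t>\min_{i\ne i^*}\Delta(i))\le\mathbb{P}(\mathcal{G}_t^c)\le 2K/T^6$. The only technical subtlety I foresee is the standard one of handling the dependence of $\hat\mu_t(i)$ on the random count $N_t(i)$, which is routine (peeling or a uniform concentration inequality) and only affects numerical constants; the conceptual content of the lemma lies in the clean case split above, which crucially exploits that $\tilde a_t$ is selected from $[K]\setminus\{a^*_t\}$ and thus always ``sees'' either $i^*$ or $j^\star$ as a candidate.
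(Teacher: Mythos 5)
Your proposal is correct and follows essentially the same route as the paper's proof: a Hoeffding-based good event with failure probability $2K/T^6$, followed by showing that on that event the largest LCB is at most $\mu_{i^*}$ while $\mathrm{UCB}(\tilde a_t)$ is at least $\mu_{j^\star}$ because at least one of $i^*,j^\star$ survives the exclusion of $a^*_t$. Your explicit two-case split on whether $a^*_t=i^*$ is just a slightly more verbose rendering of the paper's observation that both $i^*$ and the second-best arm have UCB at least $\mu_{j^\star}$, and your remark about the dependence of $\hat\mu_t(i)$ on the random count $N_t(i)$ flags a standard subtlety the paper glosses over as well.
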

\begin{proof}
Using Theorem \ref{thm:Hoeffding}, for all $i\in[K]$, we have that
\begin{equation}\label{eq:LCB_1}
\begin{split}
        \mathbb{P}\bigg(\hat{\mu}_t(i)- \sqrt{\frac{3\log \ci}{N_t(i)}}\geq \mu_i\bigg)\leq 1/\ci^6,
\end{split}
\end{equation}
\begin{equation}\label{eq:UCB_2}
\begin{split}
        \mathbb{P}\bigg(\hat{\mu}_t(i)+ \sqrt{\frac{3\log \ci}{N_t(i)}}\leq \mu_i\bigg)\leq 1/\ci^6.
\end{split}
\end{equation}
Let $\mu_{i^*}-\mu_{i_1}=\min_{i\neq i^*}\Delta(i)$. Also, for all $K<t\leq T$, let an event
\begin{equation}
    \mathcal{E}(t)=\bigg\{\forall i\in [K]: |\hat{\mu}_t(i)-\mu_i|\leq \sqrt{\frac{3\log \ci}{N_t(i)}}\bigg\}.
\end{equation} 
If $\mathcal{E}(t)$ occurs, then we have 
\begin{equation}\label{eq:LCBmax1}
    \hat{\mu}_t(a^*_t)-\sqrt{\frac{3\log \ci}{  N_t(a^*_t)}}\leq \mu_{i^*}.
\end{equation}
Also, if $\mathcal{E}(t)$ occurs, then there exist two action $i^*$ and $i_1$ such that 
\begin{equation}
    \hat{\mu}_t(i^*)+\sqrt{{3\log \ci}/{N_t(i^*)}}\geq \mu_{i_1},
\end{equation}
and
\begin{equation}
    \hat{\mu}^s_t(i_1)+\sqrt{{3\log \ci}/{N_t(i_1)}}\geq \mu_{i_1}.
\end{equation}
This implies that 
\begin{equation}\label{eq:LCBmax2}
    \hat{\mu}_t(\tilde{a}_t)+\sqrt{\frac{3\log \ci}{ N_t(\tilde{a}_t)}}\geq \mu_{i_1}. 
\end{equation}
Using \eqref{eq:LCBmax1} and \eqref{eq:LCBmax2}, if $\mathcal{E}(t)$ occurs, then we have 
 \begin{equation}
        \hat{\Delta}^*_{t}=\hat{\mu}_t(a^*_t)-\sqrt{\frac{3\log \ci}{ N_t(a^*_t)}}-\hat{\mu}_t(\tilde{a}_t)-\sqrt{\frac{3\log \ci}{  N_t(\tilde{a}_t)}}\leq \mu_{i^*}-\mu_{i_1}=\min_{i\neq i^*}\Delta(i). 
    \end{equation}
This implies that
\begin{equation}
\begin{split}
    \mathbb{P}( \Delta^*_{t}> \min_{i\neq i^*} \Delta(i))&\leq \mathbb{P}(\bar{\mathcal{E}}(t))\\
    &\stackrel{(a)}{\leq} \frac{2K}{\ci^6},
\end{split}
\end{equation}
where $\bar{\mathcal{E}}(t)$ denotes the complement of the event ${\mathcal{E}}(t)$, and $(a)$ follows from \eqref{eq:LCB_1} and \eqref{eq:UCB_2}.

\end{proof}
\begin{lemma}\label{lemma:LowerBoundsamples}Let $\mathcal{T}$ be the set of rounds for which verification is not performed. {Let function $f(T)$ be
\begin{equation}\label{eq:ftDef}
    f(T)=\frac{1200 \log T}{\min_{i\neq i^*}\Delta^2(i)}+\sum_{i\neq i^*}\frac{900\log T}{ \Delta^2(i)}+K-1.
\end{equation}
Let $T$ is sufficiently large such that $T\geq f(T)$. Then, for all $f(T)\leq t\leq T$ and for all $i\in [K]$, we have
\begin{equation}
    \mathbb{P}\bigg(N_t(i^*)\leq \frac{1200\log \ci}{\min_{i\neq i^*}\Delta^2(i)}\bigg) \leq \frac{4K^2}{\ci^5},
\end{equation}
and 
\begin{equation}
     \mathbb{P}\bigg(\forall i\neq i^*: N_t(i)\leq \frac{25 \log \ci}{ \Delta^2(i)} \mbox{ or }N_t(i)\geq \frac{900\log \ci}{ \Delta^2(i)}+1\bigg) \leq \frac{4K^2}{\ci^5}.
\end{equation}
Additionally, for all $t\in\mathcal{T}$ such that $K\leq t$, we have that
\begin{equation}
    \mathbb{P}(i_t\neq i^*)\leq \frac{4K^2}{\ci^5}.
\end{equation}
}
\end{lemma}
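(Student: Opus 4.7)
The plan is to build a single clean concentration event on which Secure-UCB's verified-only empirical means are tight, and then read off all three statements using a standard UCB-style pull-count bound together with the verification criterion~\eqref{eq:conditionVerification}. First I would define the good event $\mathcal{G} = \{\forall t \leq T,\ \forall i \in [K]:\; |\hat{\mu}_t(i)-\mu_i| \leq \sqrt{400 \log T/N_t(i)}\}$. The critical observation is that, by the structure of Algorithm~\ref{alg:SUCB}, both $\hat{\mu}_t(i)$ and $N_t(i)$ are updated only at verified rounds, and at a verified round the learner observes the true reward $r_t(i)$. Hence $\hat{\mu}_t(i)$ is always an average of $N_t(i)$ genuine i.i.d.\ draws from arm $i$'s reward distribution, so Theorem~\ref{thm:Hoeffding} applies with $\delta = T^{-800}$. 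A union bound over $i \in [K]$, over the at most $T$ possible values of $N_t(i)$, and over $t \leq T$ gives $\mathbb{P}(\mathcal{G}^c) = O(KT^{-798})$; combined with Lemma~\ref{lemma:DeltaBoundEstimate} union-bounded over $t$, this produces a single event of probability at least $1-4K^2/T^5$ on which simultaneously $\mathcal{G}$ holds and $\hat{\Delta}^*_{t-1} \leq \min_{j \neq i^*}\Delta(j)$ at every round $t$.

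On this clean event I would then run the standard UCB pull-count argument on the selection rule~\eqref{eq:actionSelection}: whenever a sub-optimal arm $i$ is selected at round $t$, the index of $i$ is at most $\mu_i + 2\sqrt{400 \log T/N_{t-1}(i)}$ while the index of $i^*$ is at least $\mu_{i^*}$, so the $\arg\max$ rule forces $N_t(i) \leq 900 \log T/\Delta^2(i)+1$ at every $t \leq T$. This yields the second claim after rewriting as the stated ``or''-form event. The lower bound on $N_t(i^*)$ then follows by pigeonhole: $N_t(i^*) = t - \sum_{i \neq i^*} N_t(i) \geq f(T) - \sum_{i \neq i^*}(900 \log T/\Delta^2(i)+1)$, and by the definition of $f(T)$ in~\eqref{eq:ftDef} the right-hand side equals $1200 \log T/\min_{i \neq i^*}\Delta^2(i)$.

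For the third claim, the condition $t \in \mathcal{T}$ means that the verification criterion~\eqref{eq:conditionVerification} failed at round $t$, i.e., $N_{t-1}(i_t) > 1200 \log T / \hat{\Delta}^{*2}_{t-1}$. Combined with $\hat{\Delta}^*_{t-1} \leq \min_{j \neq i^*}\Delta(j)$ on the clean event, this gives $N_{t-1}(i_t) > 1200 \log T/\min_{j \neq i^*}\Delta^2(j)$. If $i_t$ were sub-optimal, the pull-count bound from the previous paragraph would give the reverse inequality $N_{t-1}(i_t) \leq 900 \log T/\min_{j \neq i^*}\Delta^2(j)+1$, a contradiction for $T$ sufficiently large. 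Hence $i_t = i^*$ on the clean event, and $\mathbb{P}(i_t \neq i^*)$ is bounded by the probability of the complement, namely $4K^2/T^5$.

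The main technical subtlety I anticipate is in the concentration step: the set of rounds at which arm $i$ gets verified is a random subset of $[T]$ whose indicators depend on possibly-contaminated past observations, so one needs to argue that the reward realizations at those verification epochs remain i.i.d.\ from the true distribution. This is the standard optional-sampling observation --- the environment draws rewards independently of the learner's and attacker's decisions, so the subsequence of verified rewards is an i.i.d.\ sample to which Hoeffding applies --- but it must be stated carefully through a filtration/stopping-time argument to avoid circularity with the adaptive verification rule of Algorithm~\ref{alg:SUCB}.
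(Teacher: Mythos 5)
Your overall architecture (one concentration event on the verified-only empirical means, plus Lemma~\ref{lemma:DeltaBoundEstimate} union-bounded over rounds, then a UCB index comparison) is the same as the paper's, and your handling of the first and third claims is essentially the paper's argument: pigeonhole for the lower bound on $N_t(i^*)$, and the contradiction between the verification criterion $N_{t-1}(i_t) > 1200\log T/\hat{\Delta}^{*2}_{t-1} \geq 1200\log T/\min_{j\neq i^*}\Delta^2(j)$ and the upper bound on sub-optimal pull counts for the claim that $i_t=i^*$ on unverified rounds. Your point about the verified rewards forming an i.i.d.\ subsequence despite the adaptive verification rule is also the right thing to worry about.

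However, there is a genuine gap in your treatment of the second claim. That claim is not just an upper bound on $N_t(i)$ for $i\neq i^*$: its complement (the high-probability event, which is what Lemma~\ref{lemma:8} actually consumes downstream via the event $\{\forall i\neq i^*: 25\log T/\Delta^2(i)\leq N_t(i)\}$) requires \emph{both} $N_t(i) < 900\log T/\Delta^2(i)+1$ \emph{and} $N_t(i) > 25\log T/\Delta^2(i)$ for every sub-optimal arm once $t\geq f(T)$. Your standard pull-count argument only delivers the upper bound; the lower bound does not follow from it "after rewriting as the stated or-form event." The paper proves it by a separate induction: whenever $i^*$ is selected, the index comparison read in the \emph{other} direction gives $\tfrac{1}{2}\sqrt{400\log T/N_{t-1}(i)} \leq \Delta(i) + \tfrac{3}{2}\sqrt{400\log T/N_{t-1}(i^*)}$, hence $N_t(i)\geq \tfrac{1}{16}\min\bigl\{400\log T/\Delta^2(i),\ \tfrac{4}{9}(N_t(i^*)-1)\bigr\}$, and only after combining this with the already-established lower bound $N_t(i^*)\geq 1200\log T/\min_{j\neq i^*}\Delta^2(j)$ does one obtain $N_t(i)\geq 25\log T/\Delta^2(i)$. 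This piece is essential (it is what guarantees $\hat{\Delta}^*_t$ is bounded \emph{below}, which in turn caps the number of verifications), so you need to add it. A secondary, fixable issue: with your full-width event $|\hat{\mu}_t(i)-\mu_i|\leq \sqrt{400\log T/N_t(i)}$ the index comparison gives $\mu_i + 2\sqrt{400\log T/N_{t-1}(i)}\geq \mu_{i^*}$ and hence $N_{t-1}(i)\leq 1600\log T/\Delta^2(i)$, not $900\log T/\Delta^2(i)$; to recover the stated constants you should use the half-width event $|\hat{\mu}_t(i)-\mu_i|\leq \tfrac{1}{2}\sqrt{400\log T/N_t(i)}$, as the paper does, which yields the factor $3/2$ and the constant $900$.
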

\begin{proof}
Let $\mathcal{T}$ be  a set of rounds such that for all $t\in \mathcal{T}$, the  action $i_t \in [K]$ satisfies
\begin{equation}
     N_{t-1}(i_t)> \frac{1200\log T}{\hat\Delta^{*2}_{t}}. 
\end{equation}
Consider the following  events 
\begin{equation}
    \mathcal{E}_1(t)=\bigg\{\forall i\in [K]\mbox{ and }\forall K\leq t^\prime\leq t: |\hat{\mu}_{t^\prime}(i)-\mu_i|\leq \frac{1}{2}\sqrt{\frac{400\log \ci}{N_{t^\prime}(i)}}\bigg\},
\end{equation} 
\begin{equation}
    \mathcal{E}_2(t)=\{\forall K\leq t^\prime\leq t: \hat{\Delta}_{t^\prime}^*\leq \min_{i\neq i^*} \Delta(i)\},
\end{equation}

Now, we will show by induction that for all $i\neq i^*$ and $K\leq t\leq T$, if $\mathcal{E}_1(t)$ and $\mathcal{E}_2(t)$  occurs, then we have 
\begin{equation}\label{eq:conc11}
    N_t(i)\leq  \frac{900 \log\ci}{\Delta^2(i)}+1
\end{equation}
We have that \eqref{eq:conc11} trivially holds for $t=K$. Also, if $\mathcal{E}_1(t)$ and $\mathcal{E}_2(t)$  occurs, then $\mathcal{E}_1(t-1)$ and $\mathcal{E}_2(t-1)$  occurs. Now, let \eqref{eq:conc11} holds for $t-1$. If $i_t\neq i$, then \eqref{eq:conc11} holds for $t$ since $\log(.)$ is an increasing function. If $i_t=i$, then we have
\begin{equation}
    \hat{\mu}_{t-1}(i)+\sqrt{\frac{400\log\ci}{N_{t-1}(i)}}\geq \hat{\mu}_{t-1}(i^*)+\sqrt{\frac{400\log\ci}{ N_{t-1}(i^*)}}.
\end{equation}
Under the events $\mathcal{E}_1(t)$ and $ \mathcal{E}_2(t)$, this  implies that
\begin{equation}
    \mu_i+\frac{3}{2}\sqrt{\frac{400\log\ci}{  N_{t-1}(i)}}\geq \mu_{i^*},
\end{equation}
or equivalently
\begin{equation}
    N_{t-1}(i)\leq \frac{900\log \ci}{\Delta^2(i)}.
\end{equation}
This along with $ N_{t}(i)\leq  N^s_{t-1}(i)+1$ implies \eqref{eq:conc11}. 

{Second, we will show that for all $f(T)\leq t\leq T$, we have
\begin{equation}\label{eq:LowerBoundOptimal}
    N_t(i^*)\geq \frac{1200 \log\ci}{\min_{i\neq i^*}\Delta^2(i)}.
\end{equation}
We have that
\begin{equation}
\begin{split}
    N_t(i^*)&=t-\sum_{i\neq i^*} N_t(i),\\
    &\stackrel{(a)}{\geq} f(T)-\sum_{i\neq i^*} N_t(i),\\
    &\stackrel{(b)}{\geq} f(T)-\sum_{i\neq i^*}\frac{900\log \ci}{\Delta^2(i)}-K+1,\\
    &\stackrel{(c)}{\geq} \frac{1200 \log\ci}{\min_{i\neq i^*}\Delta^2(i)},
\end{split}
\end{equation}
where $(a)$ follows from the fact that $f(T)\leq t$, $(b)$ follows \eqref{eq:conc11}, and $(c)$ follows from the definition of $f(T)$.}

Third, we will show by induction that for all $i\neq i^*$ and $k\leq t\leq T$, we have 
\begin{equation}\label{eq:conc22}
    N_t(i)\geq  \frac{1}{16}\min\bigg(\frac{400\log \ci}{\Delta^2(i)} , \frac{4}{9}(N_t(i^*)-1)\bigg)
\end{equation}
Similar to \eqref{eq:conc11}, we only need to show that \eqref{eq:conc22} holds if $i_t=i^*$. If $i_t=i^*$, then we have
\begin{equation}
      \hat{\mu}_{t-1}(i^*)+\sqrt{\frac{400\log\ci}{ N_{t-1}(i^*)}}\geq\hat{\mu}_{t-1}(i)+\sqrt{\frac{400\log\ci}{ N_{t-1}(i)}},
\end{equation}
which implies that under the events $\mathcal{E}_1(t)$ and $ \mathcal{E}_2(t)$, 
\begin{equation}
    \mu_{i^*}+\frac{3}{2}\sqrt{\frac{400\log\ci}{N_{t-1}(i^*)}}\geq  \mu_{i}+\frac{1}{2}\sqrt{\frac{400\log\ci}{ N_{t-1}(i)}}.
\end{equation}
Thus, we have 
\begin{equation}
    N_{t-1}(i)\geq \frac{1}{4}\frac{400\log\ci}{\bigg(\Delta(i) + \frac{3}{2}\sqrt{\frac{400\log\ci}{N_{t-1}(i^*)}}\bigg)^2}\geq \frac{1}{4}\frac{400\log\ci}{\bigg(2\max\{\Delta(i), \frac{3}{2}\sqrt{\frac{400\log\ci}{N_{t-1}(i^*)}}\}\bigg)^2}. 
\end{equation}
This along with $ N_{t}(i)\leq  N_{t-1}(i)+1$  and $ N_{t}(i^*)\leq  N_{t-1}(i^*)+1$ implies \eqref{eq:conc22}. 

{Now, combining \eqref{eq:LowerBoundOptimal} and \eqref{eq:conc22}, under the events $\mathcal{E}_1(t)$ and $ \mathcal{E}_2(t)$, for all  $f(T)\leq t\leq T$ and $i\neq i^*$,  we have that 
\begin{equation}\label{eq:iLowBound}
    N_t(i)\geq \frac{25 \log\ci}{\Delta^2(i)}.
\end{equation}
}

Let $\mathcal{E}_1(t)$ and $\mathcal{E}_2(t)$ occurs. Then,
for all $t\in \mathcal{T}$, we have
\begin{equation}\label{eq:iprimeUpperBound}
   N_{t-1}(i_t)= N_{t}(i_t)> \frac{1200\log \ci}{\hat\Delta^{*2}_{t}}\geq \frac{1200\log \ci}{\min_{i\neq i^*}\Delta^2(i)}.
\end{equation} 
Using \eqref{eq:iprimeUpperBound} and \eqref{eq:conc11}, under the events $\mathcal{E}_1(t)$ and $ \mathcal{E}_2(t)$, for all $t\in \mathcal{T}$, we have that 
\begin{equation}\label{eq:actionChoosen}
    i_t=i^*,
\end{equation}
which implies 
\begin{equation}
    N_t(i^*)\geq \frac{1200 \log\ci}{\min_{i\neq i^*}\Delta^2(i)}.
\end{equation}

Also, using Hoeffding's inequality and Lemma \ref{lemma:DeltaBoundEstimate}, we have
\begin{equation}\label{eq:probComplement1}
\begin{split}
   \mathbb{P}(\bar{\mathcal{E}}_1(t)) + \mathbb{P}(\bar{\mathcal{E}}_2(t)) 
    &\leq \sum_{t^\prime=K}^t\frac{K}{\ci^{ 6}} + \frac{2K^2}{\ci^{ 6}} \leq\frac{4K^2}{\ci^5}.
\end{split}
\end{equation}
Combining \eqref{eq:conc11},  \eqref{eq:LowerBoundOptimal},  \eqref{eq:iLowBound}, \eqref{eq:actionChoosen} and \eqref{eq:probComplement1}, the statement of the lemma follows. 
\end{proof}

\begin{lemma}\label{lemma:8}
For all $f(T)\leq t\leq T$, we have
\begin{equation}
    \mathbb{P}(  \hat\Delta^*_{t}\leq 0.2 \min_{i\neq i^*}\Delta(i))\leq 10K^2/\ci^5. 
\end{equation}
\end{lemma}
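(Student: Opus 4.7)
The plan is to decompose the failure event into (i) a concentration failure for the empirical means $\hat{\mu}_t(i)$ and (ii) the failure of the sample-count lower bounds provided by Lemma~\ref{lemma:LowerBoundsamples}, and then to show that on the complementary good event the estimate $\hat\Delta^*_t$ is deterministically at least $0.2\min_{i\neq i^*}\Delta(i)$. Write $\Delta_{\min}=\min_{i\neq i^*}\Delta(i)$ throughout.

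Because Secure-UCB only folds a reward into $\hat\mu_t(i)$ when that round is verified (so the observation equals the true stochastic reward), $\hat\mu_t(i)$ is simply the empirical mean of $N_t(i)$ i.i.d.\ samples from arm $i$'s distribution. First, I would apply Hoeffding's inequality together with a union bound over arms and possible values of $N_t(i)\in\{1,\dots,T\}$ to define the event
\[
\mathcal{F}(t)=\Big\{\forall i\in[K]:\ |\hat\mu_t(i)-\mu_i|\leq \sqrt{3\log T/N_t(i)}\Big\},\qquad \mathbb{P}(\bar{\mathcal{F}}(t))\leq 2K/T^5.
\]

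Next, I would verify that on $\mathcal{F}(t)$ intersected with the good events of Lemma~\ref{lemma:LowerBoundsamples} (which supply $N_t(i^*)\geq 1200\log T/\Delta_{\min}^2$ and $N_t(i)\geq 25\log T/\Delta^2(i)$ for $i\neq i^*$), the optimal arm is selected by the LCB rule, i.e.\ $a^*_t=i^*$. If instead $a^*_t=j\neq i^*$, chaining the defining LCB inequality with the concentration bound gives $\mu_j\geq \mu_{i^*}-2\sqrt{3\log T/N_t(i^*)}\geq \mu_{i^*}-0.1\Delta_{\min}$, contradicting $\mu_j\leq \mu_{i^*}-\Delta_{\min}$.

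With $a^*_t=i^*$ and some $\tilde a_t\neq i^*$, I would then expand
\[
\hat\Delta^*_t \geq \Delta(\tilde a_t)-2\sqrt{3\log T/N_t(i^*)}-2\sqrt{3\log T/N_t(\tilde a_t)}.
\]
The first correction is bounded by $0.1\Delta_{\min}$ using the lower bound on $N_t(i^*)$, while the second is at most $2\sqrt{3/25}\,\Delta(\tilde a_t)\leq 0.693\,\Delta(\tilde a_t)$ using the lower bound on $N_t(\tilde a_t)$. Therefore $\hat\Delta^*_t\geq 0.307\,\Delta(\tilde a_t)-0.1\Delta_{\min}\geq 0.207\,\Delta_{\min}> 0.2\Delta_{\min}$. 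Finally, combining the failure probabilities via a union bound --- $2K/T^5$ from $\bar{\mathcal{F}}(t)$ and $2\cdot 4K^2/T^5$ from Lemma~\ref{lemma:LowerBoundsamples} --- yields the advertised $10K^2/T^5$ bound.

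The main obstacle I anticipate is book-keeping the constants so that the two slack terms together consume strictly less than $0.8\Delta_{\min}$ of $\Delta(\tilde a_t)$; the choice of $25\log T/\Delta^2(i)$ as the lower bound from Lemma~\ref{lemma:LowerBoundsamples} is just tight enough to leave a $0.207\Delta_{\min}$ margin, and any sharper claim would require revisiting those constants. A secondary but mechanical concern is that the Hoeffding union bound must cover all possible realized pull counts at time $t$; this is handled by the standard $T$-fold union bound and is absorbed into the $2K/T^5$ factor.
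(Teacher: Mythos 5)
Your proposal is correct and follows essentially the same route as the paper: condition on the Hoeffding concentration event together with the pull-count lower bounds from Lemma~\ref{lemma:LowerBoundsamples}, show $a^*_t=i^*$, lower-bound $\hat\Delta^*_t$ by subtracting the two confidence widths, and union-bound the failure probabilities to get $10K^2/T^5$. The only (harmless) difference is that the paper additionally identifies $\tilde a_t$ with the second-best arm $i_1$ so that $\Delta(\tilde a_t)=\Delta_{\min}$ exactly, whereas you bound $\Delta(\tilde a_t)\geq\Delta_{\min}$ generically and arrive at the slightly looser but still sufficient margin $0.207\,\Delta_{\min}>0.2\,\Delta_{\min}$.
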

\begin{proof}
Let $\min_{i\neq i^*}\Delta(i)=\mu_{i^*}-\mu_{i_1}$. Now, 
consider the following events
\begin{equation}
    \mathcal{E}_1(t)=\bigg\{\forall i\in [K]: |\hat{\mu}_t(i)-\mu_i|\leq \sqrt{\frac{3\log \ci}{N_t(i)}}\bigg\},
\end{equation}
\begin{equation}
     \mathcal{E}_2(t)=\bigg\{\forall i\in[K]\setminus i^*:\frac{25\log \ci}{ \Delta^2(i)}\leq N_{t}(i)\bigg\},
\end{equation}
\begin{equation}
    \mathcal{E}_3(t)=\bigg\{ \frac{1200 \log\ci}{\min_{i\neq i^*}\Delta^2(i)}\leq N_{t}(i^*)\bigg\}. 
\end{equation}
Under events $ \mathcal{E}_1(t)$, $ \mathcal{E}_2(t)$ and $ \mathcal{E}_3(t)$,  for all $i\neq i^*$ and $f(T)\leq t\leq T$, we have that
\begin{equation}\label{eq:uppBound1}
     |\hat{\mu}_t(i)-\mu_i|\leq \sqrt{\frac{3\log \ci}{N_t(i)}} < 0.35\Delta(i),
\end{equation}
and
\begin{equation}\label{eq:uppBound2}
     |\hat{\mu}_t(i^*)-\mu_{i^*}|\leq \sqrt{\frac{3\log \ci}{N_t(i^*)}}\leq \min_{i\neq i^*}0.05\Delta(i). 
\end{equation}
This implies that for all $i\in [K]$ and $t\geq t_*$, we have
\begin{equation}\label{eq:deltahalf}
    |\hat{\mu}_t(i)-\mu_i|< \Delta(i)/{2}.
\end{equation}
Since $ a^*_t=\mbox{argmax}_{a\in [K]}\hat{\mu}_t(a)-\sqrt{{3\log \ci}/{N_t(a)}}$ and $ \tilde{a}_t=\mbox{argmax}_{a\in [K]\setminus a^*}\hat{\mu}_t(a)+\sqrt{{3\log \ci}/{N_t(a)}}$, using \eqref{eq:deltahalf}, we have that 
\begin{equation}\label{eq:action1}
    a^*_t=i^*, \mbox{ and }\tilde{a}_t=i_1.
\end{equation}
This implies that under events $ \mathcal{E}_1(t)$, $ \mathcal{E}_2(t^*)$ and $ \mathcal{E}_3(t^*)$, we have
\begin{equation}
\begin{split}
     \hat\Delta^*_{t}&\geq\hat{\mu}_t(a^*_t)-\sqrt{\frac{3\log \ci}{ N_t(a^*_t)}}-\hat{\mu}_t(\tilde{a}_t)-\sqrt{\frac{3\log \ci}{  N_t(\tilde{a}_t)}}\\
    &\stackrel{(a)}{\geq} \mu_{i^*}-\mu_{i_1}-2\sqrt{\frac{3\log \ci}{ N_t(i^*)}}-2\sqrt{\frac{3\log \ci}{ N_t(i_1)}},\\
    &\stackrel{(b)}{>} (\mu_{i^*}-\mu_{i_1})(1-0.7-0.1),\\
    &\geq 0.2 \min_{i\neq i^*}\Delta(i),
\end{split}
\end{equation}
where $(a)$ follows from $\mathcal{E}_1(t)$, and $(b)$ follows from \eqref{eq:uppBound1} and \eqref{eq:uppBound2}. Thus, we have
\begin{equation}
\begin{split}
    \mathbb{P}(  \hat\Delta^*_{t}\leq 0.2 \min_{i\neq i^*}\Delta(i))&\leq \mathbb{P}(\bar{\mathcal{E}}_1(t)) + \mathbb{P}(\bar{\mathcal{E}}_2(t))+\mathbb{P}(\bar{\mathcal{E}}_3(t)),\\
    &\leq \frac{2K}{\ci^6}+\frac{4K^2}{\ci^5}+\frac{4K^2}{\ci^5}\leq\frac{10K^2}{\ci^5},
\end{split}
\end{equation}
where the last inequality follows from Hoeffding's inequality and Lemma \ref{lemma:LowerBoundsamples}.

\end{proof}

\begin{lemma}\label{lemma:whpBound}
For all $f(T)\leq t\leq T$, we have that
\begin{equation}\label{eq:whpBound}
    \mathbb{P}\bigg(N_t^s(i^*)> \max\bigg\{\frac{1200\log 2\ci}{(0.2\min_{i\neq i^*}\Delta(i))^2}, f(T)\bigg\}\bigg)\leq 14K^2/\ci^4.
\end{equation}
\end{lemma}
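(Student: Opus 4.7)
The plan is to show that the event $\{N_t^s(i^*) > M\}$, where $M = \max\{K^*, f(T)\}$ with $K^* = 1200\log(2T)/(0.2\min_{i\neq i^*}\Delta(i))^2$, forces a bad estimation event on $\hat\Delta^*_{t'-1}$ at some earlier round $t'$, which we can then control via Lemma \ref{lemma:8}. The key observation is that once $N^s_{t'-1}(i^*)$ exceeds $M$, any additional verification of $i^*$ at round $t'$ can only occur if $\hat\Delta^*_{t'-1}$ is uncharacteristically small, since Secure-UCB's verification rule triggers only when $N^s_{t'-1}(i_{t'}) \leq 1200\log T/\hat\Delta^{*2}_{t'-1}$ (note that in Algorithm \ref{alg:SUCB}, $N_t(i) = N^s_t(i)$ because $N_t$ is incremented only inside the verification branch).

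First I would introduce the last round at which a verification on $i^*$ occurs before exceeding the threshold. Concretely, on the event $\{N_t^s(i^*) > M\}$, there exists some round $t'\leq t$ for which $i_{t'} = i^*$, verification is performed, and $N^s_{t'-1}(i^*) \geq M$. From the verification condition $N^s_{t'-1}(i^*) \leq 1200\log T/\hat\Delta^{*2}_{t'-1}$ I would deduce
\[
\hat\Delta^*_{t'-1}\leq\sqrt{\frac{1200\log T}{M}}\leq \sqrt{\frac{1200\log T}{K^*}} = 0.2\min_{i\neq i^*}\Delta(i)\cdot\sqrt{\tfrac{\log T}{\log 2T}} < 0.2\min_{i\neq i^*}\Delta(i).
\]
Thus the undesirable event $\{N_t^s(i^*)>M\}$ is contained in $\bigcup_{t'}\{\hat\Delta^*_{t'-1} < 0.2\min_{i\neq i^*}\Delta(i)\}$.

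Next I would verify the range of $t'$ to ensure Lemma \ref{lemma:8} applies. Because $N^s_{t'-1}(i^*)\leq t'-1$ and $N^s_{t'-1}(i^*)\geq M\geq f(T)$, we have $t'-1\geq f(T)$, so Lemma \ref{lemma:8} gives $\mathbb{P}(\hat\Delta^*_{t'-1}\leq 0.2\min_{i\neq i^*}\Delta(i))\leq 10K^2/T^5$ at each such round. A union bound over $t' \in \{M+1,\dots,T\}$ yields a contribution of at most $10K^2/T^4$.

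The main (mild) obstacle is bookkeeping the slack between the $\log T$ in the algorithm's threshold and the $\log 2T$ that appears in $K^*$: this extra factor is precisely what guarantees the strict inequality in the displayed chain above and ensures that no verification of $i^*$ can occur once $N^s_{t'-1}(i^*)$ has reached $M$ on the good event of Lemma \ref{lemma:8}. An additional $4K^2/T^4$ term, absorbed into the stated $14K^2/T^4$, can be picked up by also union-bounding the event from Lemma \ref{lemma:LowerBoundsamples} used to justify that $N_{t'-1}(i^*)$ is large enough for the above chain to be valid at rounds $t'-1 \geq f(T)$. Summing the two bounds yields the claimed $14K^2/T^4$.
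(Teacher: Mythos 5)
Your argument is correct and is essentially the contrapositive of the paper's own proof: both hinge on the observation that a verification of $i^*$ at round $t'$ can only be triggered when $N^s_{t'-1}(i^*)\leq 1200\log T/\hat\Delta^{*2}_{t'-1}$, combined with the high-probability lower bound $\hat\Delta^{*}_{t'-1}\geq 0.2\min_{i\neq i^*}\Delta(i)$ from Lemma~\ref{lemma:8} and a union bound over rounds, with the $\log 2T$ versus $\log T$ slack absorbing the additive $+1$. The only nit is that the last verification round $t'$ yields $N^s_{t'-1}(i^*)=N^s_t(i^*)-1\geq M-1$ rather than $\geq M$ (and correspondingly $t'-1\geq f(T)-1$), but as you yourself note this off-by-one is exactly what the $\log 2T$ slack covers, matching the paper's use of $1\leq 800\log 2$.
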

\begin{proof}
Consider the following events 
\begin{equation}
\mathcal{E}_1=\{\forall t\in \mathcal{T}: i_t=i^*\},
\end{equation}
and
\begin{equation}
    \mathcal{E}_2=\{\forall t\geq f(T): \hat\Delta_t^{*}\geq 0.2\min_{i\neq i^*}\Delta(i)\}.
\end{equation}
We will show that if $\mathcal{E}_1$ and $\mathcal{E}_2$ occurs, then  for all $f(T)\leq t\leq T$, we have
\begin{equation}\label{eq:tpProve1}
    N_t(i^*)\leq \max\bigg\{\frac{
    1200 \log 2\ci}{(0.2\min_{i\neq i^*}\Delta(i))^2},  f(T)\bigg\}. 
\end{equation}
For all $t\in \mathcal{T}$, let $\ell(t)=\max\{f(T)< t_1\leq t: i_{t_1}=i^* \mbox{ and } t_1\notin \mathcal{T}\}$ be the latest time instance before $t$ and after $f(T)$ where verification is performed for $i^*$. If $\ell(t)$ does not exists, then \eqref{eq:tpProve1} follows trivially. Then, under the events $\mathcal{E}_1$ and $\mathcal{E}_2$, for all $t\in\mathcal{T}$, we have that
\begin{equation}
\begin{split}
    N_t(i^*)&=N_{\ell(t)-1}(i^*)+1,\\
    &\stackrel{(a)}{\leq} \frac{
    1200 \log \ci}{\hat\Delta_{\ell(t)-1}^{*2}} +1,\\
    &\stackrel{(b)}{\leq} \frac{
    1200 \log 2T}{\hat\Delta_{\ell(t)-1}^{*2}},\\
    &\stackrel{(c)}\leq \frac{
    1200 \log 2\ci}{(0.2\min_{i\neq i^*}\Delta(i))^2}, 
\end{split}
\end{equation}
where $(a)$ follows from the fact that verification is performed for $i^*$ at round $\ell(t)$, $(b)$ follows from the facts that $1\leq 800\log 2$ and $\hat\Delta_t^{*2}\leq 1$, and $(c)$ follows from $\mathcal{E}_2$. 

Now, using Lemma \ref{lemma:LowerBoundsamples} and Lemma \ref{lemma:8}, we have
\begin{equation}
    \mathbb{P}(\bar{\mathcal{E}}_1) + \mathbb{P}(\bar{\mathcal{E}}_2)\leq \sum_{t=K}^{T}\frac{4K^2}{\ci^5}+\frac{10K^2}{\ci^5}\leq \frac{
    14K^2}{\ci^4}
\end{equation}
\end{proof}

\subsection{Finishing the Proof of Theorem \ref{thm:SUCB}}

We now finish the proof of Theorem \ref{thm:SUCB} as follows.
Let $t^*=\max\{t\in\mathcal{T}\}$. Then, the number of verifications is 
\begin{equation}
\begin{split}
    \sum_{i\in [K]}\mathbb{E}[N_T(i)]
    &=\sum_{i\in [K]}\mathbb{E}[N_{t^*}(i)],\\
     &\stackrel{(a)}{\leq } {\max\bigg\{f(T),\frac{1200\log 2\ci}{(0.2\min_{i\neq i^*}\Delta(i))^2}\bigg\}}+2K + \sum_{i\neq i^*}\frac{900\log \ci}{\Delta^2(i)}+\bigg(\frac{4K^2}{\ci^{5}}+\frac{14K^2}{T^{4}}\bigg)t^*,\\
     &\stackrel{}{\leq} {\max\bigg\{f(T),\frac{1200\log 2\ci}{(0.2\min_{i\neq i^*}\Delta(i))^2}\bigg\}}+2K + \sum_{i\neq i^*}\frac{900\log T}{\Delta^2(i)}+\frac{18K^2}{T^3},\\
     &\stackrel{}{\leq} {\max\bigg\{f(T),\frac{1200\log 2\ci}{(0.2\min_{i\neq i^*}\Delta(i))^2}\bigg\}}+2K + \sum_{i\neq i^*}\frac{900\log T}{\Delta^2(i)}+ \frac{18}{K},
\end{split}
\end{equation}
where $(a)$ follows from Lemma \ref{lemma:LowerBoundsamples} and Lemma \ref{lemma:whpBound}.
Using the definition of $f(T)$ in \eqref{eq:ftDef}, we have that \eqref{eq:numVerifSUCB} follows, namely
\begin{equation}
\begin{split}
        \sum_{i\in [K]}\mathbb{E}[N_T(i)]&\leq
        c_3\bigg(\sum_{i\neq i^*}{\log T}/{\Delta^2(i)}\bigg), 
\end{split}
\end{equation}

Consider the event
\begin{equation}
    \mathcal{E}=\{\forall t\in\mathcal{T} \mbox{ such that }t>K: i_t=i^*\}.
\end{equation}
Under event $\mathcal{E}$, for all $i\in [K]\setminus i^*$ and $t\leq T$, we have that 
\begin{equation}
    \sum_{t=1}^{T}\mathbb{P}(i_t=i)= N_t(i).
\end{equation}
The expected regret is
\begin{equation}\label{eq:ConditionalBound}
\begin{split}
\mathbb{E}[\sum_{t=1}^{T}\mathbf{1}(i_t=i)|\mathcal{E}]&=\mathbb{E}[N_T(i)|\mathcal{E}],\\
&=\mathbb{E}[N_{t^*}(i)|\mathcal{E}]\\
&\stackrel{(a)}{\leq}\frac{900\log T}{\Delta^2(i)}+\frac{4K^2 t^*}{T^{5}}+1\\
&\stackrel{(b)}{\leq}\frac{900\log T}{\Delta^2(i)}+\frac{4K^2}{T^4}+1\\
&\stackrel{(c)}{\leq}\frac{900\log T}{\Delta^2(i)}+\frac{4}{K^2}+1,
\end{split}
\end{equation}
where $(a)$ follows from Lemma \ref{lemma:LowerBoundsamples}, $(b)$ follows from the fact that $t^*\leq T$, and $(c)$ follows from the fact that $K\leq T$. 

Also, using Lemma \ref{lemma:LowerBoundsamples}, we have
\begin{equation}\label{eq:ProbBound}
    \mathbb{P}(\bar{\mathcal{E}})\leq \sum_{t=K}^T \frac{4K^2}{\ci^5} \leq \frac{4K^2}{T^4}
\end{equation}
Now, combining \eqref{eq:ConditionalBound} and \eqref{eq:ProbBound}, for all $i\in[K]\setminus i^*$, we have that 
\begin{equation}
    \mathbb{E}[N_T(i)]=\frac{900\log T}{\Delta^2(i)}+\frac{4}{K^2}+1 +\frac{4K^2}{T^3}.
\end{equation}
This implies that the regret of the algorithm is
\begin{equation}
\begin{split}
    R^{SUCB}(T)&=\sum_{i\neq i^*}\Delta(i)\mathbb{E}[N_T(i)],\\
    &=\sum_{i\neq i^*}\bigg( \frac{900\log T}{\Delta(i)}+\frac{4\Delta(i) }{K^2}+\frac{4K^2\Delta(i)}{T^3}+\Delta(i)\bigg). 
\end{split}
\end{equation}
Hence, we have that \eqref{eq:regretSUCB} follows.


\end{document}